\documentclass{article}

\PassOptionsToPackage{numbers, compress}{natbib}
\usepackage[preprint]{neurips_2026}

\usepackage[utf8]{inputenc} 
\usepackage[T1]{fontenc}    
\usepackage{url}            
\usepackage{booktabs}       
\usepackage{amsfonts}       
\usepackage{nicefrac}       
\usepackage{microtype}      
\usepackage{xcolor}         
\usepackage[most]{tcolorbox}
\usepackage[pagebackref=true,colorlinks,linkcolor=blue,citecolor=brown]{hyperref}
\usepackage{etoc}           
\etocsettagdepth{mainpaper}{none}
\etocsettagdepth{appendix}{subsection}

\usepackage{tabularx}
\usepackage{array}
\newcolumntype{Y}{>{\raggedright\arraybackslash}X}

\newtcbtheorem{mydef}{Definition}{
  colback=blue!5!white,
  colframe=blue!75!black,
  fonttitle=\bfseries,
  arc=2mm,
  boxrule=0.8pt
}{def}

\newtcbtheorem{myremark}{Remark}{
  colback=green!5!white,
  colframe=green!50!black,
  fonttitle=\bfseries,
  arc=2mm,
  boxrule=0.8pt
}{rem}

\usepackage{amsmath,amssymb,amsthm,mathtools}
\usepackage{enumitem}
\usepackage{hyperref}
\usepackage{cleveref}

\newtheorem{proposition}{Proposition}
\newtheorem{theorem}{Theorem}
\newtheorem{lemma}{Lemma}

\usepackage{xcolor}

\usepackage{graphicx} 
\usepackage{wrapfig}
\usepackage{multirow}

\usepackage{algorithm}
\usepackage{algpseudocode}
\usepackage{natbib}

\usepackage{listings}
\usepackage{booktabs}
\usepackage{xcolor}

\lstdefinestyle{prompt}{
basicstyle=\ttfamily\footnotesize,
breaklines=true,
breakatwhitespace=true,
columns=fullflexible,
keepspaces=true,
frame=single,
framerule=0.4pt,
xleftmargin=6pt,
xrightmargin=6pt,
aboveskip=6pt,
belowskip=6pt,
}

\newcommand{\ours}{\textsc{EoM}}

\title{Economy of Minds: Emerging Multi-Agent Intelligence with Economic Interactions}

\newcommand{\harvard}{\raisebox{-0.2\height}{\includegraphics[height=0.8em]{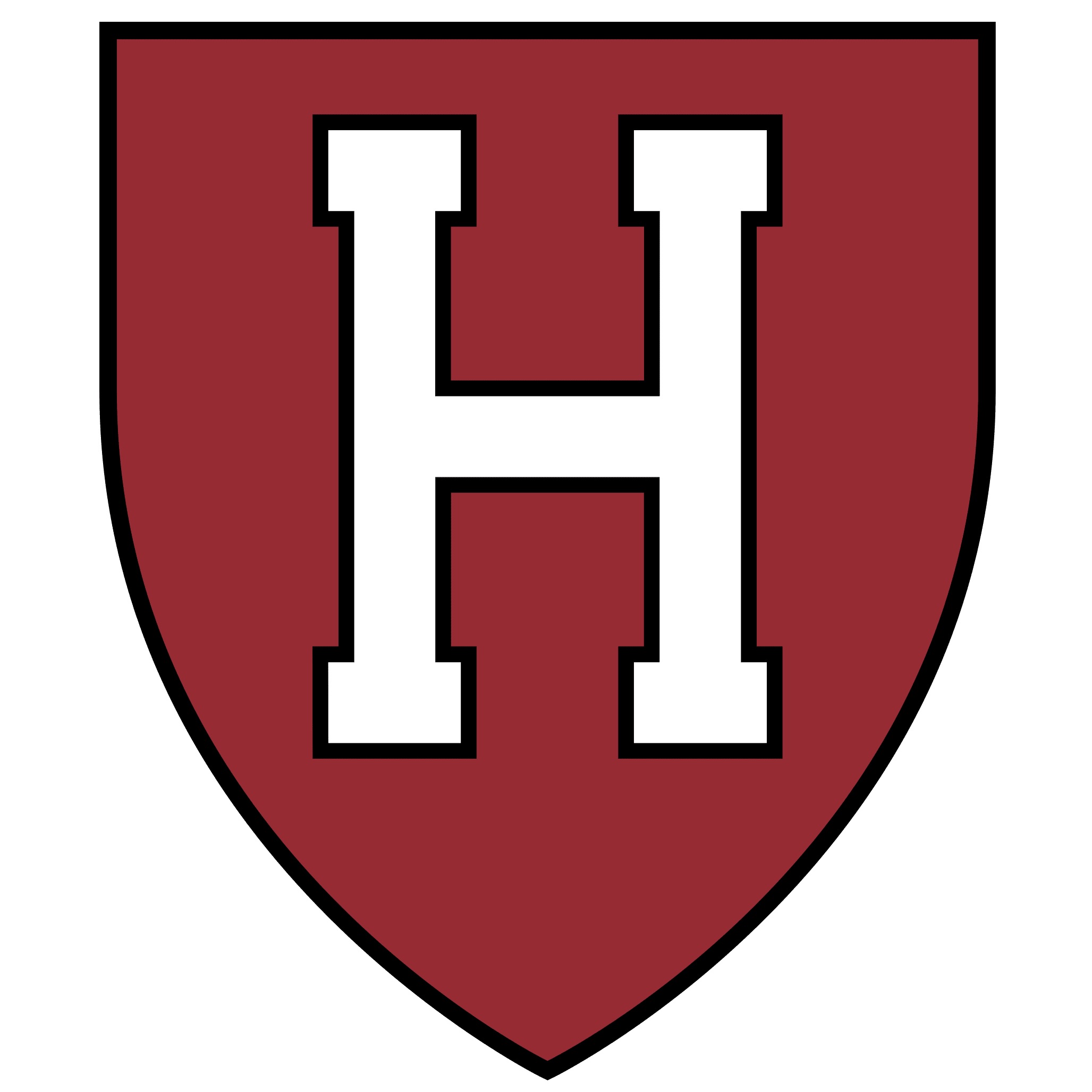}}}

\newcommand{\massit}{\raisebox{-0.2\height}{~\includegraphics[height=0.6em]{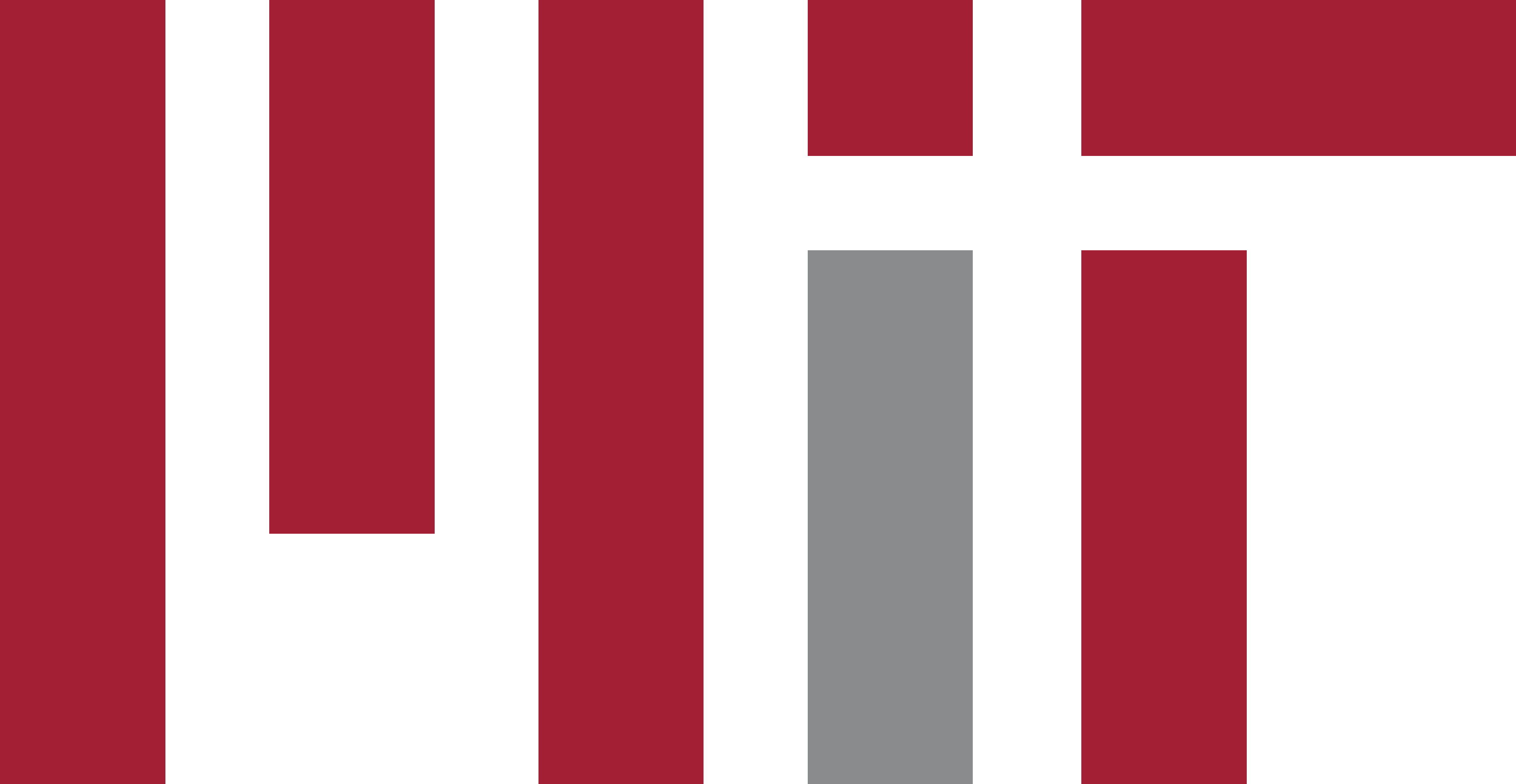}}}

\newcommand{\ai}{\raisebox{-0.2\height}{~\includegraphics[height=0.5em]{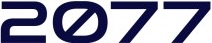}}}

\newcommand{\kempner}{\raisebox{-0.2\height}{\includegraphics[height=0.8em]{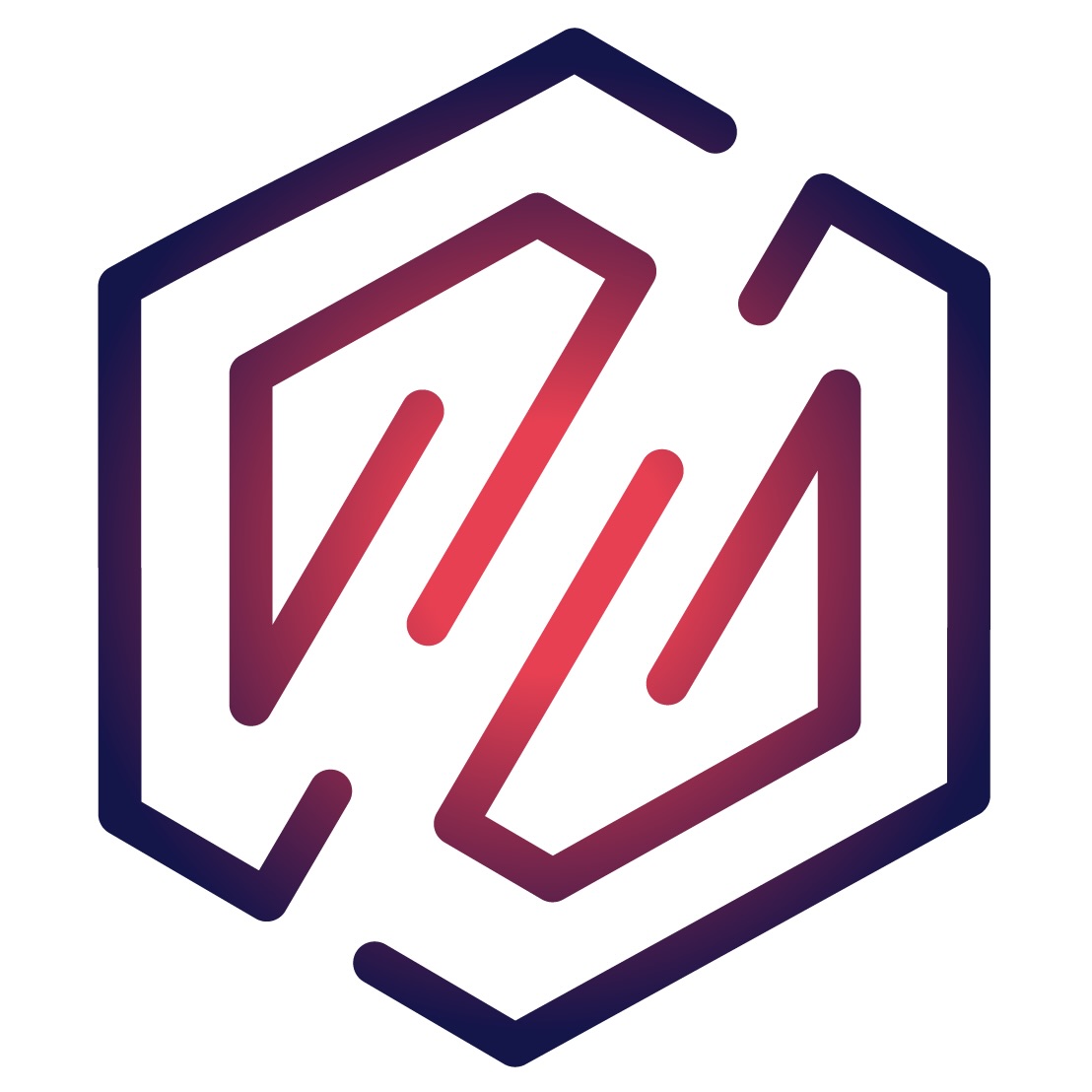}}}

\author{
Zhenting Qi$^{\harvard,}$\thanks{Corresponding authors} \quad
Huangyuan Su$^{\harvard,\kempner}$ \quad
Ao Qu$^{\massit}$ \quad
Chenyu Wang$^{\harvard}$ \\
\textbf{Yu Yao}$^{\massit}$ \quad
\textbf{Han Zheng}$^{\massit}$ \quad
\textbf{Kushal Chattopadhyay}$^{\harvard}$ \quad
\textbf{Guowei Xu}$^{\harvard}$ \\ 
\textbf{Zihan Wang}$^{\ai}$ \quad
\textbf{Weirui Ye}$^{\massit}$ \quad
\textbf{Vijay Janapa Reddi}$^{\harvard}$ \quad
\textbf{Ju Li}$^{\massit}$ \quad
\textbf{Paul Pu Liang}$^{\massit}$ \\
\textbf{Himabindu Lakkaraju}$^{\harvard}$ \quad
\textbf{Sham Kakade}$^{\harvard,\kempner}$ \quad
\textbf{Yilun Du}$^{\harvard,\kempner,}$\footnotemark[1]
\vspace{0.7em}
\\
$^{\harvard}$ Harvard \qquad
$^{\massit}$ MIT \qquad
$^{\ai}$ 2077AI \qquad
$^{\kempner}$ Kempner Institute
\vspace{0.7em}
\\
\href{https://github.com/zhentingqi/EoM}{%
    \raisebox{-0.12em}{\includegraphics[height=1.05em]{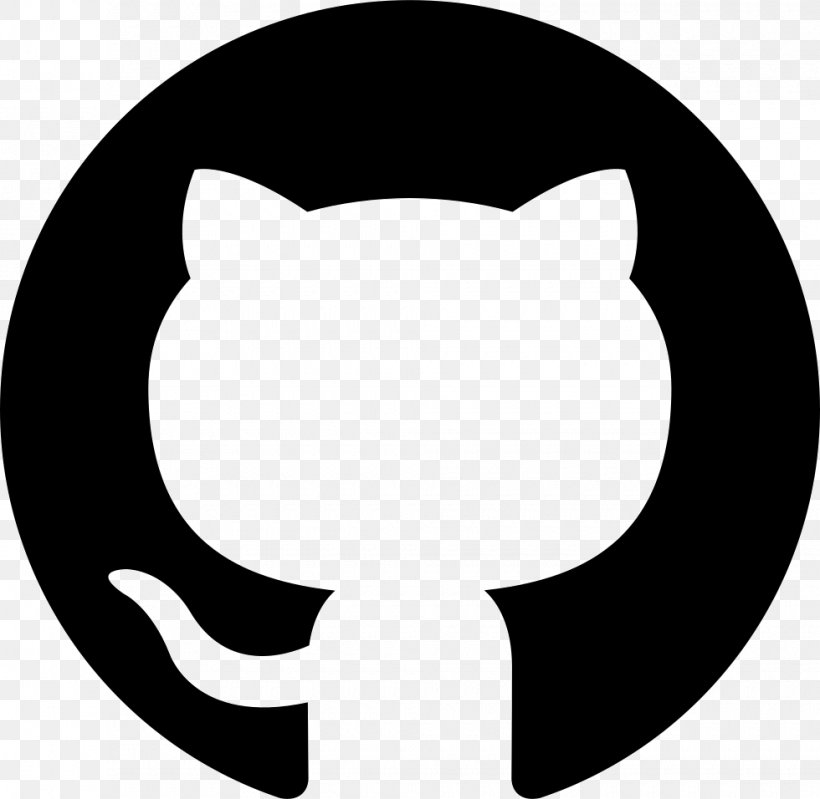}}~\textbf{GitHub}%
}
\qquad
\href{https://zhentingqi.github.io/internal/projects/EoM}{%
    \raisebox{-0.12em}{\includegraphics[height=1.05em]{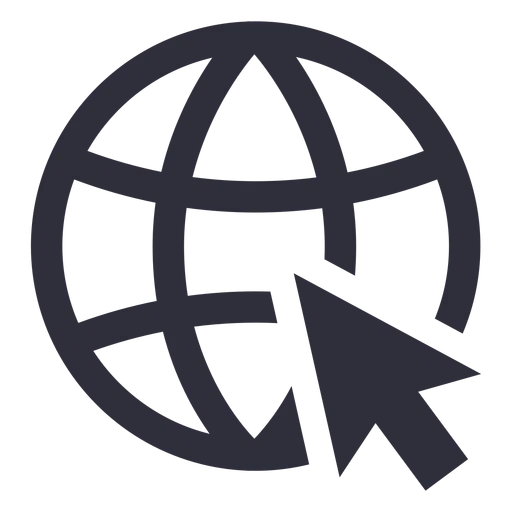}}~\textbf{Project Page}%
}
}

\begin{document}

\maketitle

\etocdepthtag.toc{mainpaper}

\begin{abstract}
How can a population of agents self-orchestrate and self-adapt into stronger collective intelligence without centralized control? Inspired by Friedrich Hayek's economic theory of decentralized coordination in markets, we study this question through an \textit{agent economy} in which agents compete via auctions for the right to act, exchange payments, and accumulate wealth from environmental rewards. These simple economic signals induce decentralized credit assignment, driving planning without global orchestration or explicit communication protocols. The population evolves through economic selection: effective agents accumulate wealth and are mutated via exploitation, while ineffective ones go bankrupt and are replaced via exploration.
We show that, initialized with weak agents, the economy produces emergent multi-step reasoning strategies and outperforms stronger monolithic baselines across five agentic tasks, including mathematical reasoning, financial research, scientific research, accelerator design, and distributed-system optimization. 
We further provide theoretical insights into how economic dynamics shape agent behaviors, linking local incentives to long-term global performance.
Our results suggest a new path to multi-agent intelligence: rather than engineering coordination, we can design decentralized incentive structures under which it automatically emerges.
\end{abstract}

\begin{figure}[h]
    \centering
    \includegraphics[width=\textwidth]{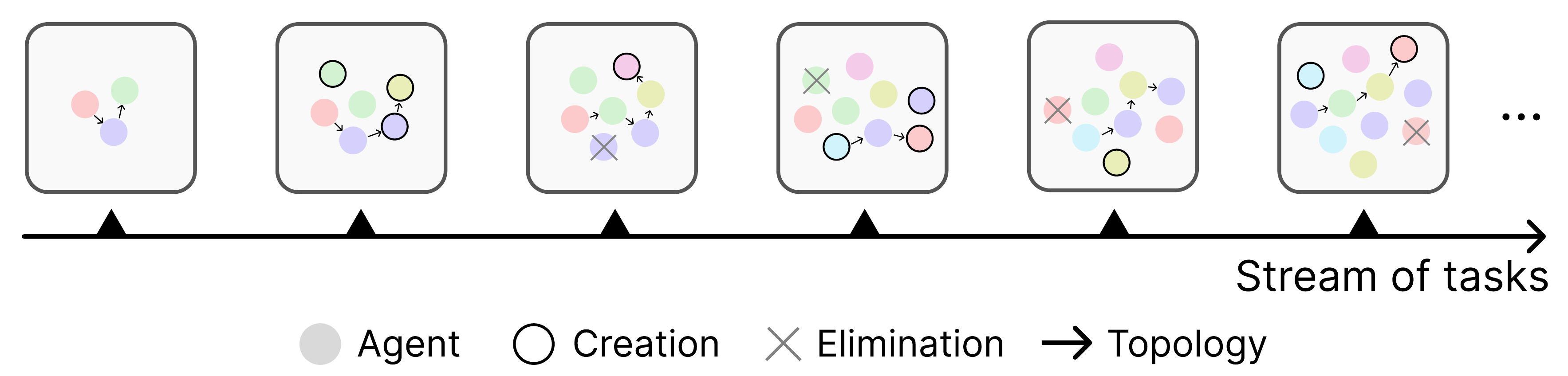}
    \caption{\textbf{Evolution of an agent society over a stream of tasks.} Each panel shows the population at a given stage, where agents are continuously created, selected, connected, and eliminated. As the society encounters more tasks, ineffective agents are removed and corrected, while useful ones persist and diversify, leading to an alive and increasingly structured population.}
    \label{fig.teaser}
\end{figure}
\section{Introduction}

Imagine a world populated by a number of intelligent agents.
Each agent may perform well on certain tasks, but it remains fundamentally limited: each operates with its own priors, partial observations of environments, and bounded computational resources.
When faced with complex tasks that exceed individual capabilities, no single agent can reliably solve problems from start to finish.
How, then, can such a population collectively solve these tasks?

A natural approach is to introduce a central orchestrator that creates agents, assigns specializations, and coordinates actions across agents \citep{hong2023metagpt, qian2024chatdev, wu2024autogen}.
However, such centralized systems suffer from two fundamental limitations.
First, planning is bottlenecked at a single coordination gate: all information and decision-making must flow through the orchestrator, creating both a performance bottleneck and a single point of failure \citep{cemri2025multi, agentnet}.
Second, learning and adaptation become increasingly inefficient as the system scales.
As the number of agents grows, the orchestrator must reason about and manage an ever-expanding set of agents, leading to coordination complexity that grows linearly with system size \citep{agentnet, kim2025towards}.
These limitations motivate a shift in perspective: rather than asking how to build better centralized controllers, we ask whether a group of agents can form decentralized intelligence, which organizes itself into strong collective capability and naturally evolves through time.

These questions are especially compelling today.
Advances in large foundation models such as large language models (LLMs) and world action models make it possible to instantiate intelligent agents that can reason, use tools, and operate in open-ended digital or embodied environments \citep{yao2022react, patil2024gorilla, kim2024openvla, ye2026world}.
However, such agents are typically carefully engineered and tailored to specific domains, introducing inductive biases that might hinder general utility. 
Also, constraints in the underlying models, such as finite context length, partial knowledge, and restricted inference budget, combined with environmental limitations like partial observability and constrained tool access, make it difficult for single agents to solve complex, compositional tasks alone \citep{omidshafiei2017deep, iqbal2022alma, sun2025scaling}. 
Therefore, beyond improving these individual agents, another axis to scale up agentic intelligence is to organize these inherently constrained agents into an effective learnable system as a whole \citep{chen2025multi, xue2025comas, qu2026coral}. 

The economy in human society provides an intuition for such decentralized intelligence. 
In his seminal essay \textit{The Use of Knowledge in Society} \citep{hayek1945knowledge}, Friedrich Hayek argued that the core problem of an economy is not optimization under known information, but the utilization of knowledge that is dispersed across individuals, which cannot be aggregated by any central authority. 
Hayek’s key insight is that the free market solves this problem through the price system: prices serve as signals that aggregate and communicate dispersed information, enabling individuals to coordinate their actions without global awareness. As a result, large-scale social orders emerge from decentralized interactions through competition, specialization, and exchange.
Building on this view, Baum further argued that economic organization provides a concrete model of intelligence, even though individual routines are simple \citep{Baum1996Toward, baum1999toward}. His Hayek machine shows that economic pressure serves as a natural mechanism for credit assignment, determining which routines act, how they are connected, and how they are renewed without centralized evaluation. 


In this work, we show that simple economic incentives are sufficient for modern intelligent agents to self-orchestrate and self-evolve as a society.
We present \textbf{Economy of Minds (\ours{})}, a system where a population of agents compete via auctions for the right to act, exchange payments through peer-to-peer transactions, and accumulate or lose wealth based on an outcome reward.
Agents that consistently contribute to successful trajectories naturally accumulate wealth, thus they survive and are periodically mutated through exploitation, while ineffective agents are eliminated through economic selection and replaced through exploration.
Importantly, each agent operates purely locally: it only requires (1) a wake-up condition, and (2) executes its action accordingly.
There is no need for global awareness, explicit coordination, engineered topology, or prescribed communication protocols.

We implement this framework using LLM-based agents and evaluate it on five digital agentic tasks, where \ours{} improves mathematical reasoning from 15.9\% to 57.0\%, raises financial research performance from 45.0\% to 60.0\%, increases scientific research accuracy over the baseline from 5.0\% to 20.0\%, reduces accelerator-design average energy-delay product to 39.3 versus 80.2 for a strong domain-specific method, and attains a best distributed-system optimization cost of 657 versus 930 for the baseline.
Across these domains, we show with detailed cases that the agent society gradually self-organizes into effective workflows: competent agents persist, mutate, and specialize, while weaker agents are pruned and refined, yielding emergent structure and continual adaptation.

Our findings suggest that economic organization provides a simple, general, and scalable foundation for decentralized multi-agent intelligence.
Rather than explicitly designing coordination mechanisms, we can define an incentive structure under which coordination, specialization, and cooperation naturally emerge between agents.
This points toward a broader paradigm for multi-agent systems in the era of large foundation models: not as centrally engineered pipelines, but as evolving agent societies whose collective intelligence is shaped by the economies they inhabit.

\section{An Economy of Language Agents}
\label{sec:method}

We model a society of language agents interacting through an economic mechanism.
Each agent operates locally, making decisions based only on its own triggering condition and policy, while global coordination emerges from economic interactions.
The system consists of two coupled processes:
(1) \emph{planning}, which governs how agents act and assigns credits to them within an episode, and
(2) \emph{adaptation}, which governs how the population evolves across episodes.

\subsection{Problem Setup}

We consider a task environment modeled as a partially observed Markov decision process
\(
\mathcal{E} = (\mathcal{S}, \mathcal{A}, P, r, \gamma, \mu_0),
\)
where $\mathcal{S}$ is the state space, $\mathcal{A}$ is the action space, $P(s' \mid s,a)$ is the transition kernel, $r:\mathcal{S}\times\mathcal{A}\to\mathbb{R}$ is the reward function, $\gamma \in (0,1]$ is the discount factor, and $\mu_0$ is the initial-state distribution.
At step $t$, the system observes $o_t \in \mathcal{O}$; in fully observed settings, $o_t=s_t$.

Each agent is implemented by a language model parametrized by $\theta$. Here we adopt a simplified setting, where we use a shared frozen model backbone for all agents, with diversity arising entirely through system prompts.
Formally, an agent is a tuple
\(
a = (\phi_a,\pi_a,b_a,W_a),
\)
where $\phi_a:\mathcal{O}\to\{0,1\}$ is a \emph{triggering predicate} that determines whether the agent is eligible to act, $\pi_a:\mathcal{O}\to\Delta(\mathcal{A})$ is its action policy, $b_a \in \mathbb{R}_{\ge 0}$ is a fixed bid associated with the agent, and $W_a \in \mathbb{R}$ is its current wealth.
Both $\phi_a$ and $\pi_a$ are instantiated by the same frozen LLM with agent-specific prompts
$p_a=(p_a^{\mathrm{trig}}, p_a^{\mathrm{act}})$:
\begin{align*}
\phi_a(o) = \mathrm{LLM}_\theta(p_a^{\mathrm{trig}}; o) \in \{0,1\}, \quad \pi_a(\cdot \mid o) = \mathrm{LLM}_\theta(p_a^{\mathrm{act}}; o) \in \Delta(\mathcal{A}).
\end{align*}
Thus, an agent is fully specified by its prompt pair together with its fixed bid and current wealth.\footnote{This formulation generalizes Baum's Hayek machine \citep{baum1999toward, baum2000evolution} from hand-specified condition-action rules to general agents: the condition is a model-based predicate, and the action is drawn from a model-based policy rather than being a fixed move.}
At episode $e$, the active population is denoted by $\mathcal{P}_e$.








\subsection{Planning with Auctions and Transactions}
\label{sec.method.planning}

\begin{figure}[h]
    \centering
    \includegraphics[width=\textwidth]{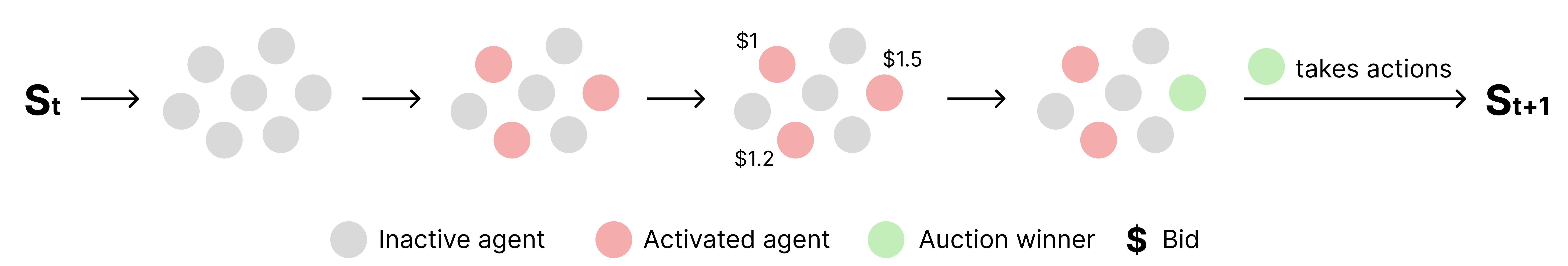}
    \caption{\textbf{Auctions.} Agents whose wake-up conditions are satisfied become eligible to bid; the highest bidder wins the auction, executes the action, and advances the environment from $s_t$ to $s_{t+1}$.}
    \label{fig.auction}
\end{figure}

\begin{figure}[h]
    \centering
    \includegraphics[width=0.85\textwidth]{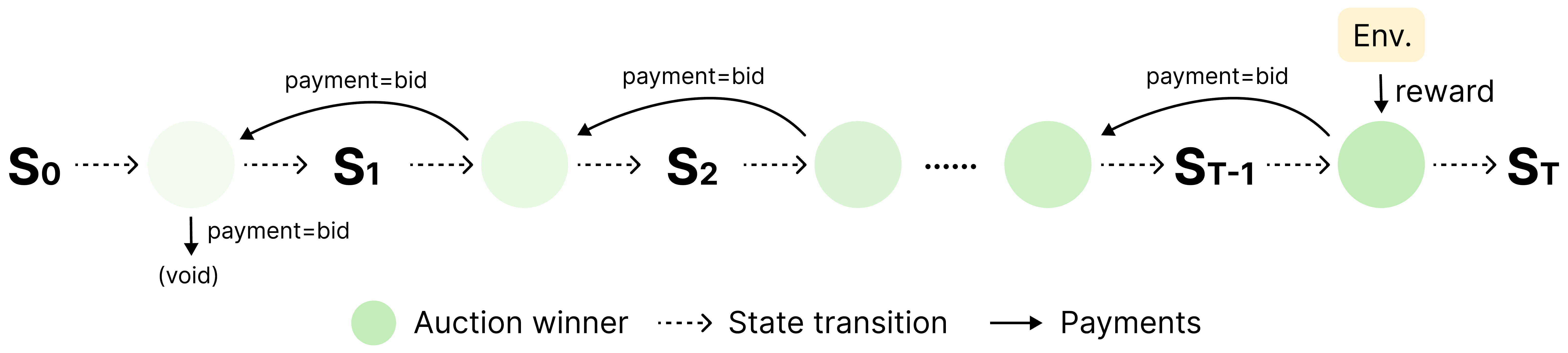}
    \caption{\textbf{Transactions.} Credit assignment naturally emerges as profits flow backward through the action sequence, rewarding agents whose actions enable successful downstream outcomes.}
    \label{fig.transaction}
\end{figure}





At each environment step, agents compete for control through an auction (\Cref{fig.auction}).
Given the current observation $o_t$, each agent evaluates its triggering predicate to determine eligibility.
The eligible set is
\(
E_t = \{a \in \mathcal{P}_e : \phi_a(o_t)=1\}.
\)
If $E_t=\emptyset$, no agent acts and the episode terminates or defaults to a null transition, depending on the environment.
Otherwise, the auction winner is the highest-bidding eligible agent,
\(
a_t^\star \in \arg\max_{a\in E_t} b_a,
\)
with ties broken randomly.

The auction serves as a decentralized action-selection mechanism.
Rather than relying on a centralized policy, control is allocated to the agent that bids the highest value to acting in the current context, as reflected by its bid.
Once selected, the winning agent $a_t^\star$ samples an action from its policy
\(
a_t^{\mathrm{env}} \sim \pi_{a_t^\star}(\cdot \mid o_t),
\)
which advances the environment and produces the next observation $o_{t+1}$ and reward $r_t$.
Let $a_{t-1}^\star$ denote the previous winning agent in the same episode.
We then apply a bucket-brigade transfer rule (\Cref{fig.transaction}):
\begin{align}
W_{a_t^\star} &\leftarrow W_{a_t^\star} - b_{a_t^\star} + r_t, \quad
W_{a_{t-1}^\star} \leftarrow W_{a_{t-1}^\star} + b_{a_t^\star}.
\label{eq:payments}
\end{align}
That is, the winner pays its bid to the previously active agent while collecting any environmental reward $r_t$.
For the first winner in an episode, the payment is made to the house.

This payment rule yields a decentralized form of credit assignment.
An agent profits not only by directly receiving reward, but also by taking actions that place the system in states from which downstream agents are willing to pay highly for control.
As a result, value flows backward along successful trajectories: agents whose actions enable productive continuations accumulate wealth, while agents that lead the system into unproductive states lose it.

\subsection{Adaptation with Exploration and Exploitation}
\label{sec.method.adaptation}





Beyond within-episode planning, the population evolves across episodes through economic selection.
Let $\mathcal{G}$ denote a prompt-generation operator that is attached to the agents and proposes new agent prompts from the current one, for example, by mutating a successful agent's or amending a failed one's system prompt.
Each newly created agent is initialized with wealth $W_0 \geq 0$, while existing agents may also incur a periodic rent $\rho \ge 0$. Examples of adaptations are provided in Appendix~\ref{sec:case-study-scientific-research} and \ref{sec:case-study-cloudcast}, and theoretical motivations are stated in Appendix~\ref{sec:theory}.

\paragraph{Exploitation.}
Agents that consistently contribute to successful trajectories automatically accumulate wealth and persist in the population.
Wealthy agents are periodically selected as parents and are mutated to produce new agents, allowing their successful patterns to be reused and refined.
This mechanism biases the population toward high-performing behaviors, reinforces effective strategies, and promotes the emergence of specialized roles.
Concretely, wealthy agents use their prompt generators $\mathcal{G}$ to propose prompt mutations that preserve their useful wake-up conditions or action policies while introducing small behavioral variations.

\paragraph{Exploration.}
Agents lose wealth through unhelpful actions or prolonged inactivity.
When their wealth becomes negative, they are removed from the population, and new agents are introduced to replace them, typically by random or complementary variation of these bankrupt agents.
This continual turnover enables learning from failures, discoveries of new behaviors, and prevents premature convergence.
In this case, the prompt generator $\mathcal{G}$ is used by bankrupt agents to propose amended prompts that correct their failure modes or explore complementary regions of the behavior space.

Formally, between episodes the population update consists of three stages:
\begin{enumerate}[leftmargin=*]
    \item \textbf{Rent:} each agent pays $\rho$, so $W_a \leftarrow W_a - \rho$;
    \item \textbf{Removal:} agents with $W_a < 0$ are deleted;
    \item \textbf{Injection:} new agents are added according to the exploitation and exploration mechanisms until the population satisfies the prescribed maximum size constraints.
\end{enumerate}
This induces a stochastic population process over finite sets of prompted agents, driven jointly by environment randomness, LLM stochasticity, and prompt generation.
 
A key design choice is that bids are not learned online; instead, each agent receives a bid when it is introduced, which is then frozen.
For a newly injected agent $a'$ (novice), let $t$ denote its first step at which it becomes eligible, and let
\(
C_t = \{a \in \mathcal{P}_e \setminus \{a'\} : \phi_a(o_t)=1\}
\)
be the set of competing eligible agents at that step.
We assign its bid according to the novice rule
\begin{equation}
b_{a'} = \Bigl(\max_{a\in C_t} b_a\Bigr) + \varepsilon_{a'},
\qquad \varepsilon_{a'} \sim \mathcal{D}_\varepsilon,
\label{eq:novice_rule}
\end{equation}
with $\max \emptyset := 0$.
In our experiments, $\mathcal{D}_\varepsilon$ is a small positive perturbation distribution.
This rule guarantees that the new agent wins the first auction for which it is eligible, forcing the system to test it at least once before market selection determines whether it should survive.

Together, exploitation and exploration drive a self-evolving system.
Exploitation preserves and sharpens useful behaviors, while exploration injects novelty and enables adaptation.
Crucially, evolution is governed entirely by economic signals, i.e., wealth accumulation and loss, without centralized supervision or explicit global performance labeling.






\subsection{Training and Evaluation}

During optimization (\Cref{alg:training}), each task episode uses the auction-based planning mechanism from \Cref{sec.method.planning}: agents bid for control, winning agents act in the environment, bid transfers propagate wealth along the trajectory, and environment rewards update the final actor. These interactions provide an implicit evaluation signal, after which the adaptation step in \Cref{sec.method.adaptation} removes bankrupt agents, retains profitable ones, mutates successful templates, and replenishes the population to preserve diversity. For evaluation (\Cref{alg:evaluation}), we reuse the same planning structure but disable the evolutionary and economic updates. The trained population is frozen, bids are fixed, no payments, rewards, rent, births, or mutations are applied, and each test task runs on a thread-local snapshot to avoid shared-state effects. Thus, optimization uses auctions for both action selection and credit redistribution, while evaluation uses them only as a decentralized decision rule, measuring the behavior of the learned population without further learning or wealth dynamics.

\section{Experiments}
\label{sec.exp}

\subsection{Setup}

\textbf{Partial vs. complete agents.}
We use \emph{partial agent} to denote any agent whose capability is intentionally incomplete to solve the full task. This partiality can take different forms across domains: an agent may have a restricted action space, access to only one tool, a short generation budget, specialized roles, or only partial observation of the environment. 
A \emph{complete agent}, by contrast, has access to the full task interface, thus can use the full action space and attempt to solve the task end-to-end. This distinction lets us test whether economic organization can compensate for, or even outperform, capability concentrated in a single complete agent.

\textbf{Tasks.}
We instantiate \ours{} on five domains. On \textbf{mathematical reasoning}, we use {MATH}~\citep{hendrycks2021measuring} to train on an easy-to-hard task stream from Level 1 to 5 and evaluate greedy pass@1 accuracy by difficulty level. The population is initialized with planner, executor, and verifier agents, each with short output budgets of 128 tokens on average. On \textbf{financial research}, we use {Finance-Agent-Bench}~\citep{bigeard2025finance}, where agents solve financial questions over company filings using four tools; each partial agent has access to only one tool. For \textbf{scientific research}, we use {FrontierScience-Research}~\citep{wang2026frontierscience}, where agents are tasked to solve open-ended scientific questions with literature, planner, executor, and verifier roles. For \textbf{accelerator design}, we use the \textsc{Gemmini} benchmark suite~\citep{genc2021gemmini} and optimize mappings for 24 ResNet-50 convolution kernels, with minimizing energy-delay product (EDP) as the objective. The population is initialized with three role-specialized partial agents: \emph{Historians} that summarize prior trial outcomes, \emph{Planners} that propose mapping-level search directions, and \emph{Executors} that carry out local mapping evaluations. For \textbf{distributed-system optimization}, we use {Cloudcast}, a task from ADRS~\citep{cheng2025barbarians}, in which agents iteratively improve a program to minimize the total data-transfer cost. Full task protocols, splits, agent definitions, tools, and model backbones are provided in Appendix~\ref{sec:exp-details}.

\textbf{Baselines.}
Complete-agent baselines include \textsc{ReAct}~\citep{yao2022react}, which solves each task end-to-end with the same backbone and full task interface; \textsc{GEA}~\citep{weng2026group}, which performs self-improvement through experience sharing and agent evolution; and \textsc{OpenEvolve}~\citep{openevolve,novikov2025alphaevolve}, an evolutionary monolithic coding-agent baseline. As a partial-agent baseline, we use Multi-Agent Debate~\citep{du2024improving}, where several partial agents interact by communication but do not use market-driven population evolution. For domain-specific comparisons, we also include \textsc{DOSA}~\citep{hong2023dosa} for accelerator design. Additional baseline details are in Appendix~\ref{sec:exp-baselines}.

\subsection{Can Economics Turn Weak Individuals into Stronger Systems?}

Across all five domains, we show that economic coordination turns individually partial agents into collective systems that match or outperform complete-agent baselines. On MATH, \ours{} improves Llama-3.1-8B agents from 15.9\% to 57.0\% and Gemma-2-9B agents from 4.2\% to 45.1\%, exceeding the corresponding complete-agent baselines of 51.9\% and 44.3\% in \Cref{tab:math-and-accelerator} (left). This is notable because each individual agent in the population is role-specialized and restricted to short outputs, while the complete-agent baseline can attempt the full problem end-to-end without restrictions.
On accelerator design, \ours{} reduces average EDP to 39.3, compared with 43.1 for the same-backbone complete \textsc{ReAct} agent and 80.2 for the domain-specific \textsc{DOSA} baseline in \Cref{tab:math-and-accelerator} (right). 

\Cref{fig:three-panels} shows that, on Finance-Agent-Bench, \ours{} rises from 45.0\% at initialization to 60.0\% after 30 training tasks. This outperforms Multi-Agent Debate at 50.0\%, \textsc{ReAct} at 45.0\%, and \textsc{GEA} at 50.0\%, even though each partial agent in \ours{} can access only one tool. 
On FrontierScience-Research, \ours{} reaches 8.5\% mean accuracy and 20.0\% best-run accuracy, compared with 1.8\% mean and 5.0\% best-run accuracy for \textsc{GEA} under the same Gemini-3-Flash backbone. 
Finally, on Cloudcast, \ours{} reaches an average total cost of 673 across three attempts, with the best attempt achieving 657, compared with 930 for \textsc{OpenEvolve}, corresponding to a 28\% reduction in best cost while using fewer optimization episodes.

Together, these results show that the relevant advantage is not merely ``many agents'' over ``one agent.'' Rather, \ours{} shows that a population of partial agents, when organized by economic interactions, can match or surpass complete agents with greater individual access to the task interface.

\begin{figure}[ht]
\centering

\begin{minipage}{0.25\linewidth}
    \centering
    \includegraphics[width=\linewidth]{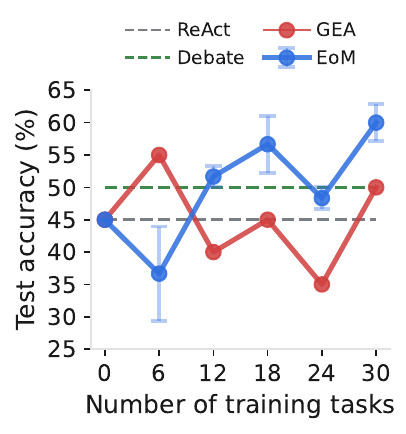}
    
    \footnotesize{(a) Finance-Agent-Bench}
\end{minipage}
\hfill
\begin{minipage}{0.29\linewidth}
    \centering
    \includegraphics[width=\linewidth]{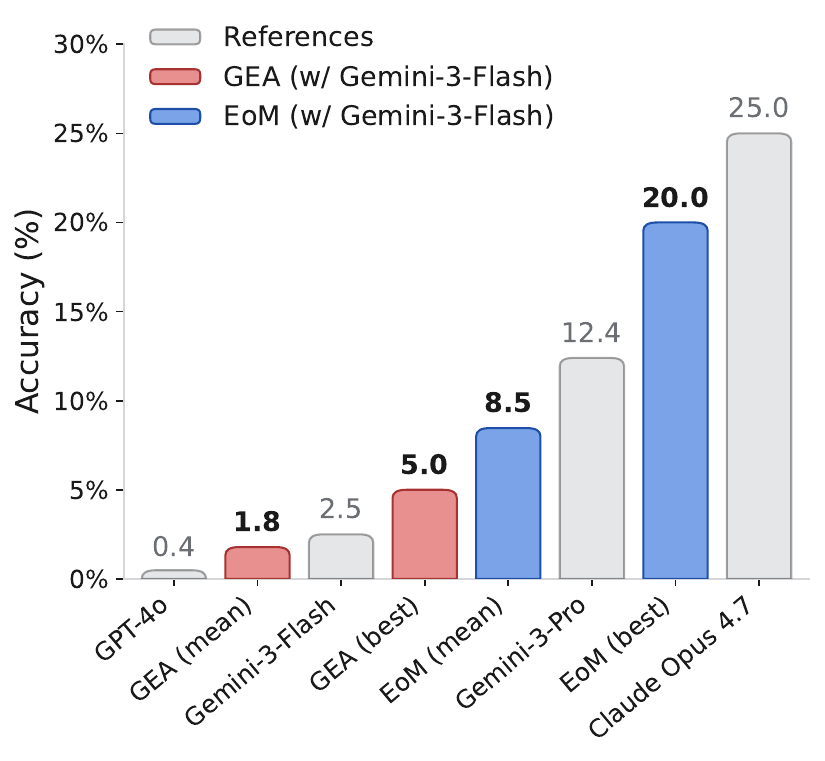}
    
    \footnotesize{(b) FrontierScience}
\end{minipage}
\hfill
\begin{minipage}{0.425\linewidth}
    \centering
    \includegraphics[width=\linewidth]{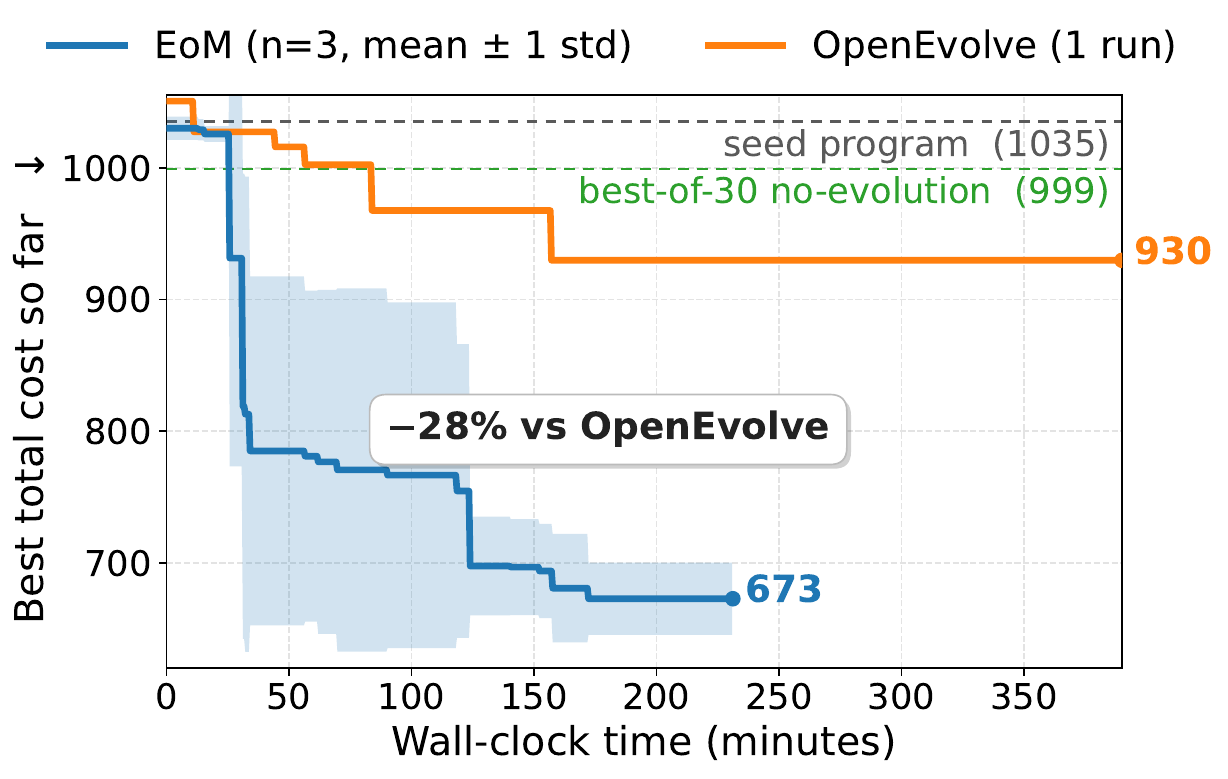}
    
    \footnotesize{(c) Cloudcast}
\end{minipage}

\caption{\small \textbf{Performance across domains.}
\ours{} consistently outperforms baselines, demonstrating the benefits of economic coordination among agents.}
\label{fig:three-panels}

\end{figure}

\begin{table}[t]
\centering
\scriptsize
\caption{\small \textbf{Performance across domains.} 
Left: MATH accuracy comparing constrained populations to complete agents (* denotes officially reported numbers). 
Right: Accelerator design results measured by average EDP (lower is better). 
In both settings, \ours{} achieves stronger performance than corresponding baselines.}
\label{tab:math-and-accelerator}

\begin{minipage}{0.47\linewidth}
\centering
\setlength{\tabcolsep}{6pt}
\begin{tabular}{lccc}
\toprule[1.5pt]
\multirow{2}{*}{\textbf{Backend}} & \multicolumn{2}{c}{\textbf{Partial agents}} & \multirow{2}{*}{\textbf{Complete agent}} \\
\cmidrule(lr){2-3}
 & Initial & After training & \\
\midrule
Llama-3.1-8B & 15.9 (1.37) & \textbf{57.0} (3.36) & 51.9* \\
Gemma-2-9B   & 4.2 (0.52)  & \textbf{45.1} (4.12) & 44.3* \\
\bottomrule[1.5pt]
\end{tabular}

\end{minipage}
\hfill
\begin{minipage}{0.47\linewidth}
\centering
\setlength{\tabcolsep}{6pt}
\begin{tabular}{lc}
\toprule[1.5pt]
\textbf{Method} & \textbf{Avg.\ EDP ($\mu$J$\cdot$Mcyc) $\downarrow$} \\
\midrule
DOSA~\cite{hong2023dosa}      & 80.2 \\
Complete agent (Gemma-4-31B-it) & 43.1 \\
\ours{} (Gemma-4-31B-it)         & \textbf{39.3} \\
\bottomrule[1.5pt]
\end{tabular}

\end{minipage}

\end{table}

\subsection{Beyond Multiple Agents: The Role of Economic Ingredients}

The gains are not explained by merely having multiple agents: they depend on the economic dynamics that allocate control, transfer value, remove unproductive agents, and propagate successful ones. The ablations in \Cref{tab:ablations} show that weakening these dynamics consistently reduces performance.

On MATH, the original system achieves the strongest partial-agent performance, with 43.9 mean accuracy and 57.0 best-run accuracy. Perturbing the economic parameters lowers performance: increasing rent, decreasing rewards, or increasing rewards reduces the mean to 39.0--41.8 and the best run to 44.0--47.0. This indicates that performance depends on the balance between reward inflow, rent pressure, and agent survival.

On Finance-Agent-Bench, the full system again achieves the strongest overall result, with 52.5 mean accuracy and 65.0 best-run accuracy. Removing exploration causes a large drop to 26.0 mean and 40.0 best accuracy, while removing exploitation lowers the mean to 33.5. Removing auctions also underperforms the full system, reaching 48.0 mean and 58.5 best accuracy. These results suggest that the population needs both sides of the economic process: exploration introduces new candidate agents with awareness of past failures, while exploitation propagates successful ones.

Cloudcast provides a complementary comparison. In \Cref{fig:three-panels} (c), \ours{} reaches a best cost of 673, while the best-of-$N$ multi-agent baseline reaches only 999. Since this baseline uses multiple agents but does not evolve them through market selection, the gap shows that repeated multi-agent sampling alone is insufficient. Thus, economic dynamics are not incidental implementation details; they are the mechanism that turns a collection of partial agents into an adaptive society.

\begin{table}[h]
\centering
\scriptsize
\caption{\textbf{Ablations on MATH (left) and Finance-Agent-Bench (right).} 
Sensitivity of performance to economic parameters (left) and component removal (right). 
In both cases, the full/original system achieves the strongest overall results.}
\label{tab:ablations}

\begin{minipage}{0.49\linewidth}
\centering
\setlength{\tabcolsep}{5pt}
\begin{tabular}{llcc}
\toprule[1.5pt]
\multicolumn{2}{c}{\textbf{Agent Configuration}} & \textbf{Mean (\%)} & \textbf{Best (\%)} \\ \midrule
Complete & / & 51.9 & 51.9 \\ \midrule
\multirow{4}{*}{Constrained}
& large rent ($\times10$) & 41.8 & 47.0 \\
& small reward ($\times0.2$) & 39.0 & 44.0 \\
& large reward ($\times4$) & 40.9 & 47.0 \\
& original & \textbf{43.9} & \textbf{57.0} \\ \bottomrule[1.5pt]
\end{tabular}

\end{minipage}
\hfill
\begin{minipage}{0.49\linewidth}
\centering
\setlength{\tabcolsep}{5pt}
\begin{tabular}{llcc}
\toprule[1.5pt]
\multicolumn{2}{c}{\textbf{Agent Configuration}} & \textbf{Mean (\%)} & \textbf{Best (\%)} \\ \midrule
Complete & / & 45.0 & 45.0 \\ \midrule
\multirow{4}{*}{Constrained}
& w/o auction & 48.0 & 58.5 \\
& w/o exploration & 26.0 & 40.0 \\
& w/o exploitation & 33.5 & 60.0 \\
& full & \textbf{52.5} & \textbf{65.0} \\ \bottomrule[1.5pt]
\end{tabular}

\end{minipage}
\end{table}

\subsection{How Does the Economy Improve Performance?}

We show that the economy improves performance by selecting useful action chains, reproducing successful lineages, eliminating unproductive agents, and gradually sharpening both agent prompts and interaction topology.

\textbf{What changes inside the society as performance improves?}
The training dynamics reveal that \ours{} improves by reshaping both the population and the agents themselves. On Finance-Agent-Bench, \Cref{fig:three-panels} (a) shows a non-monotonic but improving trajectory: \ours{} starts at 45.0, dips during early exploration, then recovers and finishes at 60.0. This pattern is consistent with a market that first reallocates control and tests alternative specialists before converging to stronger coordination.

Accelerator design gives a more direct view of the economic mechanism. In \Cref{fig:dse_dynamics}, per-agent wealth trajectories show how useful lineages survive while ineffective ones disappear. In panel~(a), Historian children born during the run quickly lose wealth and are removed, indicating that the inherited mutation could not earn back its rent. In panel~(b), a Planner lineage spawns two good-birth offspring and continues to dominate the auction, while a Historian-line bad-birth child eventually bankrupts. In panel~(c), a successful Historian lineage and a failing Executor lineage evolve in parallel. Across these cases, wealth concentrates on agents that repeatedly contribute to record-breaking submissions, while low-value agents are pushed toward bankruptcy. In short, the economy improves performance by turning trajectory-level success into population-level selection.

\begin{figure}[t]
  \centering
  {\IfFileExists{figs/dse_dynamics.pdf}{\includegraphics[width=\linewidth]{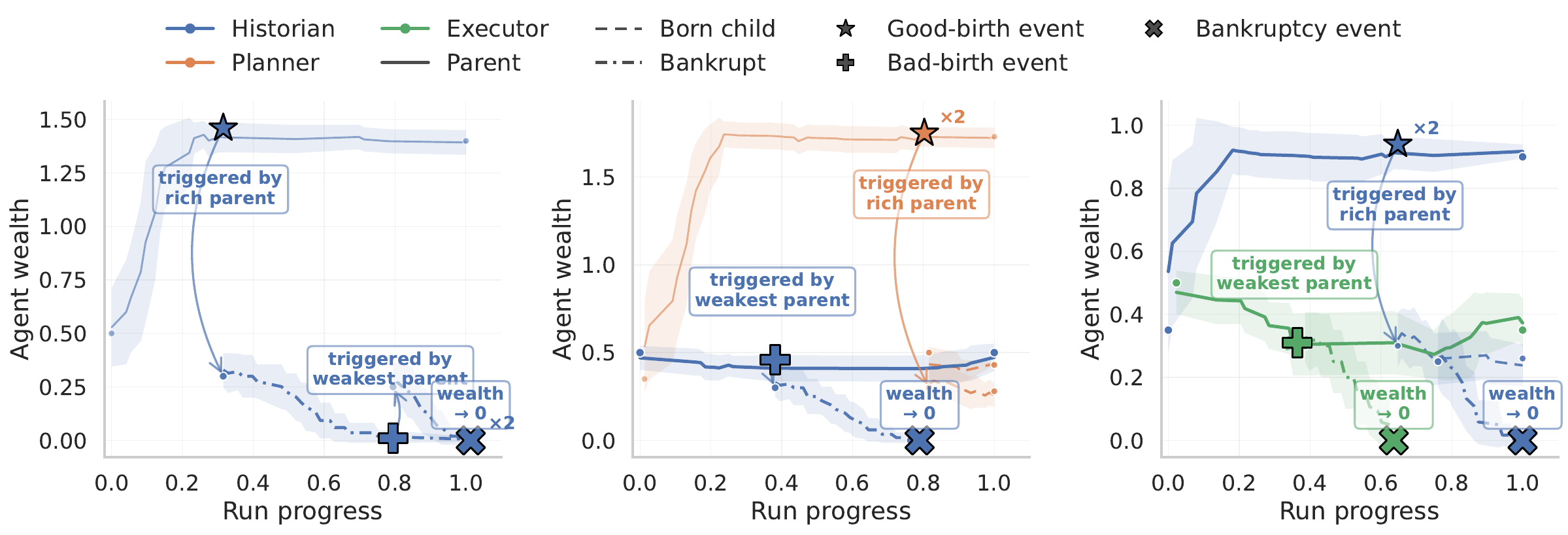}}{\fbox{\parbox[c][0.22\textheight][c]{0.95\linewidth}{\centering Placeholder for accelerator-design training dynamics figure.}}}}
  \vspace{-4mm}
  \caption{\small\textbf{Training dynamics in accelerator design.} Per-agent wealth on three representative ResNet-50 kernels. Wealth flow to agents that produce new EDP records; rent uniformly deducts wealth. Periodic births spawn \emph{good-birth} children ($\star$, \emph{exploitation}: mutated from the richest agent) and \emph{bad-birth} children ($+$, \emph{exploration}: amended from the weakest); wealth $<0$ triggers bankruptcy ($\times$). Shaded bands are rolling $\pm 1\sigma$. \textbf{(a)} Both Historian descendants bankrupt---inherited bias fails market pressure. \textbf{(b)} A Planner lineage reproduces twice while a Historian bad-birth child eventually fails. \textbf{(c)} A strong Historian and a struggling Executor lineage co-exist.}
  \label{fig:dse_dynamics}
\end{figure}

\textbf{Does the society learn reusable structure?}
We show that the society learns more than task-specific answer traces. \Cref{fig:mechanism-robustness} (a) shows that \ours{} achieves a $2.2\times$ geometric-mean EDP gain over \textsc{DOSA} across all 24 ResNet-50 convolution kernels, with much larger gains on the hardest kernels: $37.5\times$, $26.3\times$, $17.3\times$, and $12.0\times$ on \texttt{Conv} 14, 16, 17, and 4. These gains are structured rather than uniform. The hardest kernels are the $1{\times}1$ convolutions inside ResNet-50's bottleneck blocks, which have large input- and output-channel counts but small spatial dimensions. For such shapes, an \emph{output-stationary} dataflow, which holds each output partial sum in fast on-chip storage and accumulates contributions along the input-channel dimension, is a known effective design pattern.
Importantly, \ours{} is not given this motif as a template. The auction rewards only EDP record-breaks, with no per-kernel labels, dataflow-specific reward shaping, or hand-coded output-stationary preference. Nevertheless, across the strongest solutions, the population repeatedly converges on the same tiling pattern, recovering a transferable hardware--software co-design heuristic that DOSA misses. Thus, market selection can discover reusable domain structure when successful agents are allowed to accumulate wealth and propagate through mutation.

Beyond this domain-specific behavior, we observe two broader forms of adaptation. The population evolves sharper prompt-level strategies, as illustrated in Appendix~\ref{sec:case-study-prompt-evolution}, enables cross-domain transfer, as shown in Appendix~\ref{sec.case-study-cross-domain} and reorganizes its interaction topology, as shown in Appendix~\ref{sec:topology-evolution-research}. Appendix~\ref{sec:case-study-cloudcast} traces both adaptations in a Cloudcast run. Together, these analyses suggest that economic dynamics shape both internal agent policies and macroscopic social structure.

\subsection{Robustness and Generalization}

We show that the evolving society is robust in three complementary senses: skills learned on easier tasks transfer to harder ones, performance depends on curriculum but does not collapse under reversed ordering, and adding a complete generalist does not destroy specialization.

\textbf{Do learned behaviors transfer from easier tasks to harder ones?}
On MATH, training on an easy-to-hard stream improves not only the easier levels encountered earlier in training, but also harder levels that are initially beyond the agents' capabilities. As shown in \Cref{fig:math-levels}, both Llama-3.1-8B and Gemma-2-9B improve on every difficulty band. The largest gains appear on Levels~1--3, where Llama-3.1-8B rises to roughly 55--70\% and Gemma-2-9B to roughly 45--65\%. Importantly, improvement also transfers to Level~5: performance rises from around 10\% at the beginning to about 20\% by the end for both backbones. Thus, local reasoning routines learned on simpler problems can be recomposed on harder ones.

\begin{figure}[ht]
    \centering
    \includegraphics[width=\textwidth]{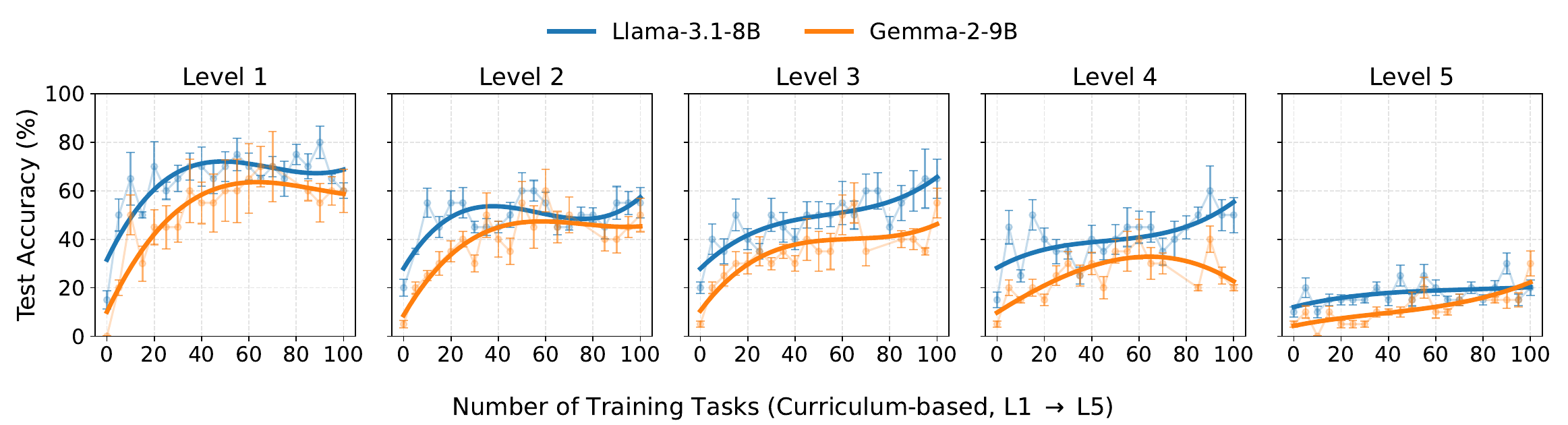}
    \vspace{-4mm}
    \caption{\small \textbf{Easy-to-hard generalization on MATH.} Test accuracy across MATH difficulty levels during training. The partial agent population improves not only on the easier levels encountered earlier, but also on harder levels that are initially beyond its capability, indicating that behaviors learned on simple problems can be reused on more difficult ones.}
    \label{fig:math-levels}
\end{figure}

\textbf{How sensitive is the society to curriculum order?}
We compare the default easy-to-hard curriculum with a reversed hard-to-easy schedule. 
\Cref{fig:mechanism-robustness} (b) shows that both schedules improve quickly at the beginning, but the easy-to-hard curriculum stays ahead for most of training and finishes clearly higher, at roughly 57\% versus about 47\% for the reversed schedule. The reversed curriculum plateaus in the low 40s for much of training, while the easy-to-hard curriculum continues improving late in training. This indicates that partial specialists benefit from first mastering reusable local routines before confronting the hardest problems, although the system still improves under the reversed order.

\begin{figure}[ht]
    \centering

    \begin{minipage}{0.48\linewidth}
        \centering
        \IfFileExists{figs/dse_perlayer_ratio.pdf}
        {\includegraphics[width=\linewidth]{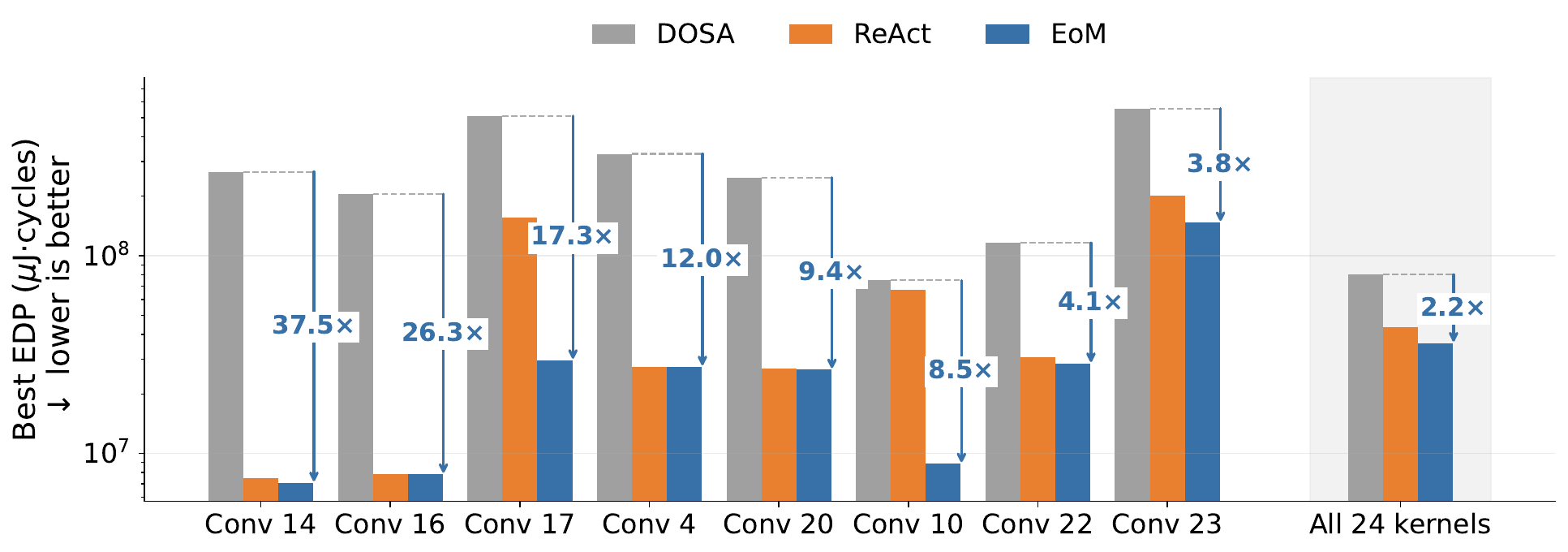}}
        {\fbox{\parbox[c][0.18\textheight][c]{0.9\linewidth}{\centering Placeholder for per-kernel accelerator comparison figure.}}}
        
        \footnotesize{(a) Per-kernel accelerator EDP}
    \end{minipage}
    \hfill
    \begin{minipage}{0.2\linewidth}
        \centering
        \includegraphics[width=0.9\linewidth]{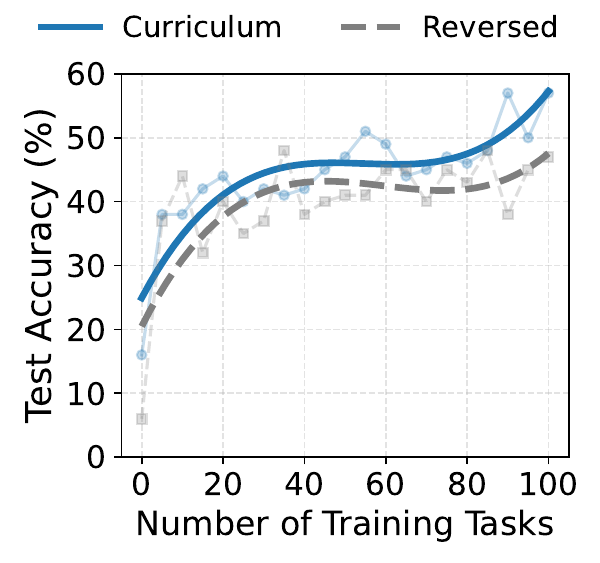}
        
        \footnotesize{(b) Curriculum learning on MATH}
    \end{minipage}
    \hfill
    \begin{minipage}{0.28\linewidth}
        \centering
        \includegraphics[width=0.9\linewidth]{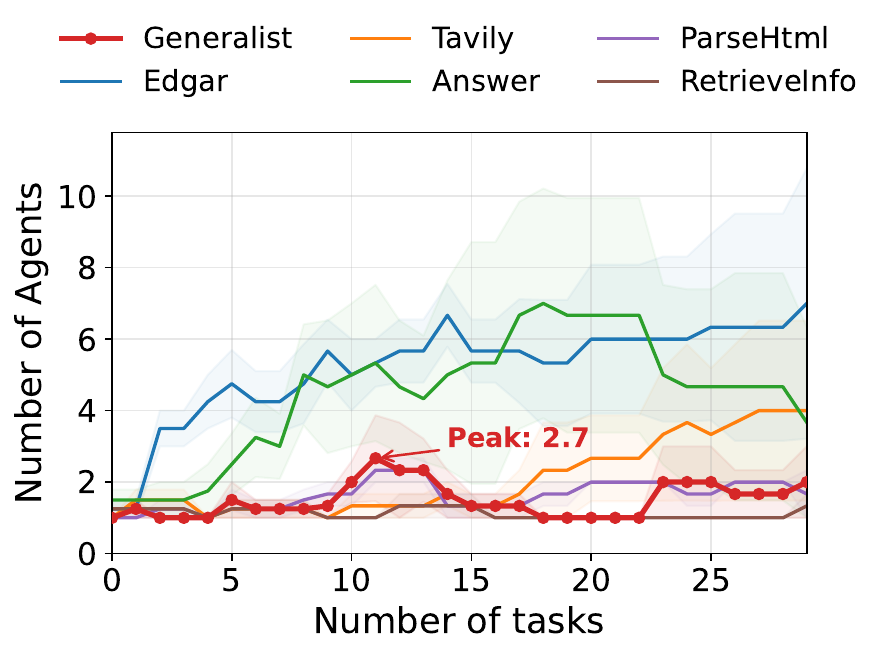}
        
        \footnotesize{(c) Finance research with a generalist}
    \end{minipage}

    \caption{\small \textbf{Mechanism, robustness, and generalization analyses.}
    \textbf{(a)} Per-kernel EDP on ResNet-50. Best EDP found by \textsc{DOSA}, \textsc{ReAct}, and \ours{} on log scale; lower is better.
    \textbf{(b)} Comparison between the default easy-to-hard curriculum and a reversed hard-to-easy curriculum.
    \textbf{(c)} Adding a strong generalist agent with access to all tools does not automatically dominate specialized agents.}
    \label{fig:mechanism-robustness}
\end{figure}

\textbf{Can a complete generalist monopolize the economy?}
A natural concern is that if a single agent is given complete access to the task interface, it may dominate the market and collapse the society into a monolithic system. We test this in Finance-Agent-Bench by adding a generalist agent with access to all tools alongside the partial specialist agents. As shown in \Cref{fig:mechanism-robustness} (c), the generalist briefly expands around tasks 11--12, but then contracts and returns to a single agent for the remainder of training. Meanwhile, specialized populations such as Edgar and Tavily continue to grow, reaching roughly 5--8 agents late in training.
This suggests that broader access does not automatically imply market dominance. The economy rewards local value: a specialist whose wake-up condition, tool use, and evidence standard are tuned to a narrow subproblem can outcompete a generalist whose prompt budget is spread across many heterogeneous responsibilities. We provide a detailed analysis in Appendix~\ref{sec:case-study-generalist-no-monopoly}, where we show that specialists evolve sharper local decision rules, while the generalist tends to accumulate broad but diluted procedural instructions. Thus, the society remains decentralized not because complete agents are forbidden, but because the market continues to favor locally more precise specialists.

\section{Related Work}

\textbf{Self-Evolving Agents.} Recent work studies how agents can improve through interaction, adaptation, and accumulated experience rather than relying on fixed centralized control. In the LLM setting, this includes co-evolution, iterative refinement, and experience sharing across agent populations \cite{chen2025multi,weng2026group,Dai2026GeoEvolver}. Related work on reward-driven self-organization shows that structured behaviors can emerge from local feedback alone \cite{zhou2025reso}. More broadly, rich ecological environments and evolutionary pressures have been shown to induce increasingly complex adaptive behaviors, suggesting that agent capabilities can emerge through continual interaction with other agents and the environment \cite{bejjani2025emergence}. For more related work, please refer to Appendix~\ref{sec.extended-rw}.

\section{Limitations, Conclusions, and Future Work}
\label{sec.conclusion}

This work shows that decentralized economic interactions can turn a population of agents into adaptive collective intelligence. Across diverse domains, auctions, transactions, and wealth-based selection induce specialization and coordination without centralized control, suggesting that designing incentives is a powerful alternative to designing individual agents.

A key limitation is that adaptation occurs only in prompt space with a frozen backbone, which may restrict capability growth in tasks requiring new skills or representations. Future work can extend this framework to parameter-space training, hybrid adaptation, and broader agent backbones, including multimodal or embodied systems.

\begin{ack}
    This work was supported in part by a gift from the Chan Zuckerberg Initiative Foundation to establish the Kempner Institute at Harvard University. 
\end{ack}

\bibliography{ref}

\begin{thebibliography}{55}
\providecommand{\natexlab}[1]{#1}
\providecommand{\url}[1]{\texttt{#1}}
\expandafter\ifx\csname urlstyle\endcsname\relax
  \providecommand{\doi}[1]{doi: #1}\else
  \providecommand{\doi}{doi: \begingroup \urlstyle{rm}\Url}\fi

\bibitem[Baum(1996)]{Baum1996Toward}
Eric~B. Baum.
\newblock Toward a model of mind as a laissez-faire economy of idiots.
\newblock In \emph{Proceedings of the 13th International Conference on Machine Learning}, pages 20--27, 1996.

\bibitem[Baum(1999)]{baum1999toward}
Eric~B Baum.
\newblock Toward a model of intelligence as an economy of agents.
\newblock \emph{Machine Learning}, 35\penalty0 (2):\penalty0 155--185, 1999.

\bibitem[Baum and Durdanovic(2000)]{baum2000evolution}
Eric~B Baum and Igor Durdanovic.
\newblock Evolution of cooperative problem solving in an artificial economy.
\newblock \emph{Neural Computation}, 12\penalty0 (12):\penalty0 2743--2775, 2000.

\bibitem[Bejjani et~al.(2025)Bejjani, Van~Amburg, Wang, Su, Pratt, Mazloumi, Khoshnevis, Kakade, Brantley, and Walsman]{bejjani2025emergence}
Joseph Bejjani, Chase Van~Amburg, Chengrui Wang, Chloe~Huangyuan Su, Sarah~M Pratt, Yasin Mazloumi, Naeem Khoshnevis, Sham~M Kakade, Kiant{\'e} Brantley, and Aaron Walsman.
\newblock The emergence of complex behavior in large-scale ecological environments.
\newblock \emph{arXiv preprint arXiv:2510.18221}, 2025.

\bibitem[Bigeard et~al.(2025)Bigeard, Nashold, Krishnan, and Wu]{bigeard2025finance}
Antoine Bigeard, Langston Nashold, Rayan Krishnan, and Shirley Wu.
\newblock Finance agent benchmark: Benchmarking llms on real-world financial research tasks.
\newblock \emph{arXiv preprint arXiv:2508.00828}, 2025.

\bibitem[Cemri et~al.(2025)Cemri, Pan, Yang, Agrawal, Chopra, Tiwari, Keutzer, Parameswaran, Klein, Ramchandran, et~al.]{cemri2025multi}
Mert Cemri, Melissa~Z Pan, Shuyi Yang, Lakshya~A Agrawal, Bhavya Chopra, Rishabh Tiwari, Kurt Keutzer, Aditya Parameswaran, Dan Klein, Kannan Ramchandran, et~al.
\newblock Why do multi-agent llm systems fail?
\newblock \emph{arXiv preprint arXiv:2503.13657}, 2025.

\bibitem[Chen et~al.(2025)Chen, Wang, Zhu, Yu, Feng, Zhang, Patwary, and You]{chen2025multi}
Yixing Chen, Yiding Wang, Siqi Zhu, Haofei Yu, Tao Feng, Muhan Zhang, Mostofa Patwary, and Jiaxuan You.
\newblock Multi-agent evolve: Llm self-improve through co-evolution.
\newblock \emph{arXiv preprint arXiv:2510.23595}, 2025.

\bibitem[Cheng et~al.(2025)Cheng, Liu, Pan, Li, Wang, Krentsel, Xia, Cemri, Park, Yang, et~al.]{cheng2025barbarians}
Audrey Cheng, Shu Liu, Melissa Pan, Zhifei Li, Bowen Wang, Alex Krentsel, Tian Xia, Mert Cemri, Jongseok Park, Shuo Yang, et~al.
\newblock Barbarians at the gate: How ai is upending systems research.
\newblock \emph{arXiv preprint arXiv:2510.06189}, 2025.

\bibitem[Dai et~al.(2026)Dai, Zhu, et~al.]{Dai2026GeoEvolver}
X.~Dai, Y.~Zhu, et~al.
\newblock Geoevolver: A self-evolving multi-agent system for earth observation.
\newblock \emph{arXiv preprint arXiv:2602.02559}, 2026.

\bibitem[Dias et~al.(2006)Dias, Zlot, Kalra, and Stentz]{dias2006market}
M~Bernardine Dias, Robert Zlot, Nidhi Kalra, and Anthony Stentz.
\newblock Market-based multirobot coordination: A survey and analysis.
\newblock \emph{Proceedings of the IEEE}, 94\penalty0 (7):\penalty0 1257--1270, 2006.

\bibitem[Du et~al.(2024)Du, Li, Torralba, Tenenbaum, and Mordatch]{du2024improving}
Yilun Du, Shuang Li, Antonio Torralba, Joshua~B Tenenbaum, and Igor Mordatch.
\newblock Improving factuality and reasoning in language models through multiagent debate.
\newblock In \emph{Forty-first international conference on machine learning}, 2024.

\bibitem[Duetting et~al.(2024)Duetting, Mirrokni, Paes~Leme, Xu, and Zuo]{duetting2024mechanism}
Paul Duetting, Vahab Mirrokni, Renato Paes~Leme, Haifeng Xu, and Song Zuo.
\newblock Mechanism design for large language models.
\newblock In \emph{Proceedings of the ACM Web Conference 2024}, pages 144--155, 2024.

\bibitem[Genc et~al.(2021)Genc, Kim, Amid, Haj-Ali, Iyer, Prakash, Zhao, Grubb, Liew, Mao, et~al.]{genc2021gemmini}
Hasan Genc, Seah Kim, Alon Amid, Ameer Haj-Ali, Vighnesh Iyer, Pranav Prakash, Jerry Zhao, Daniel Grubb, Harrison Liew, Howard Mao, et~al.
\newblock Gemmini: Enabling systematic deep-learning architecture evaluation via full-stack integration.
\newblock In \emph{2021 58th ACM/IEEE Design Automation Conference (DAC)}, pages 769--774. IEEE, 2021.

\bibitem[Hayek(1945)]{hayek1945knowledge}
Friedrich~A. Hayek.
\newblock The use of knowledge in society.
\newblock \emph{The American Economic Review}, 35\penalty0 (4):\penalty0 519--530, 1945.

\bibitem[Hayek(1973)]{Hayek1973Law}
Friedrich~A. Hayek.
\newblock \emph{Law, Legislation and Liberty}.
\newblock University of Chicago Press, 1973.

\bibitem[Hendrycks et~al.(2021)Hendrycks, Burns, Kadavath, Arora, Basart, Tang, Song, and Steinhardt]{hendrycks2021measuring}
Dan Hendrycks, Collin Burns, Saurav Kadavath, Akul Arora, Steven Basart, Eric Tang, Dawn Song, and Jacob Steinhardt.
\newblock Measuring mathematical problem solving with the math dataset.
\newblock \emph{arXiv preprint arXiv:2103.03874}, 2021.

\bibitem[Holland(1985)]{Holland1985Properties}
John~H. Holland.
\newblock Properties of the bucket brigade algorithm.
\newblock In \emph{Proceedings of the 1st International Conference on Genetic Algorithms}, 1985.

\bibitem[Hong et~al.(2023{\natexlab{a}})Hong, Huang, Dinh, Subedar, and Shao]{hong2023dosa}
Charles Hong, Qijing Huang, Grace Dinh, Mahesh Subedar, and Yakun~Sophia Shao.
\newblock Dosa: Differentiable model-based one-loop search for dnn accelerators.
\newblock In \emph{Proceedings of the 56th Annual IEEE/ACM International Symposium on Microarchitecture}, pages 209--224, 2023{\natexlab{a}}.

\bibitem[Hong et~al.(2023{\natexlab{b}})Hong, Zhuge, Chen, Zheng, Cheng, Wang, Zhang, Wang, Yau, Lin, et~al.]{hong2023metagpt}
Sirui Hong, Mingchen Zhuge, Jonathan Chen, Xiawu Zheng, Yuheng Cheng, Jinlin Wang, Ceyao Zhang, Zili Wang, Steven Ka~Shing Yau, Zijuan Lin, et~al.
\newblock Metagpt: Meta programming for a multi-agent collaborative framework.
\newblock In \emph{The twelfth international conference on learning representations}, 2023{\natexlab{b}}.

\bibitem[Iqbal et~al.(2022)Iqbal, Costales, and Sha]{iqbal2022alma}
Shariq Iqbal, Robby Costales, and Fei Sha.
\newblock Alma: Hierarchical learning for composite multi-agent tasks.
\newblock \emph{Advances in neural information processing systems}, 35:\penalty0 7155--7166, 2022.

\bibitem[Kim et~al.(2024)Kim, Pertsch, Karamcheti, Xiao, Balakrishna, Nair, Rafailov, Foster, Lam, Sanketi, et~al.]{kim2024openvla}
Moo~Jin Kim, Karl Pertsch, Siddharth Karamcheti, Ted Xiao, Ashwin Balakrishna, Suraj Nair, Rafael Rafailov, Ethan Foster, Grace Lam, Pannag Sanketi, et~al.
\newblock Openvla: An open-source vision-language-action model.
\newblock \emph{arXiv preprint arXiv:2406.09246}, 2024.

\bibitem[Kim et~al.(2025)Kim, Gu, Park, Park, Schmidgall, Heydari, Yan, Zhang, Zhuang, Liu, et~al.]{kim2025towards}
Yubin Kim, Ken Gu, Chanwoo Park, Chunjong Park, Samuel Schmidgall, A~Ali Heydari, Yao Yan, Zhihan Zhang, Yuchen Zhuang, Yun Liu, et~al.
\newblock Towards a science of scaling agent systems.
\newblock \emph{arXiv preprint arXiv:2512.08296}, 2025.

\bibitem[Lifshitz et~al.(2025)Lifshitz, McIlraith, and Du]{lifshitz2025multi}
Shalev Lifshitz, Sheila~A McIlraith, and Yilun Du.
\newblock Multi-agent verification: Scaling test-time compute with multiple verifiers.
\newblock \emph{arXiv preprint arXiv:2502.20379}, 2025.

\bibitem[Liu et~al.()Liu, Chen, and Amato]{liuimproved}
Shuo Liu, Tianle Chen, and Christopher Amato.
\newblock Improved multi-agent collaboration with multi-turn reinforcement learning.
\newblock In \emph{First Workshop on Multi-Turn Interactions in Large Language Models}.

\bibitem[Long et~al.(2026)Long, Simchi-Levi, Zhu, Su, Calmon, and Calmon]{long2026reliability}
Carol~Xuan Long, David Simchi-Levi, Feng Zhu, Huangyuan Su, Andre~P. Calmon, and Flavio~P. Calmon.
\newblock Reliability and effectiveness of autonomous ai agents in supply chain management.
\newblock \emph{arXiv preprint arXiv:2605.17036}, 2026.

\bibitem[Minsky(1988)]{Minsky1988Society}
Marvin Minsky.
\newblock \emph{The Society of Mind}.
\newblock Simon and Schuster, 1988.

\bibitem[Novikov et~al.(2025)Novikov, V{\~u}, Eisenberger, Dupont, Huang, Wagner, Shirobokov, Kozlovskii, Ruiz, Mehrabian, et~al.]{novikov2025alphaevolve}
Alexander Novikov, Ng{\^a}n V{\~u}, Marvin Eisenberger, Emilien Dupont, Po-Sen Huang, Adam~Zsolt Wagner, Sergey Shirobokov, Borislav Kozlovskii, Francisco~JR Ruiz, Abbas Mehrabian, et~al.
\newblock Alphaevolve: A coding agent for scientific and algorithmic discovery.
\newblock \emph{arXiv preprint arXiv:2506.13131}, 2025.

\bibitem[Omidshafiei et~al.(2017)Omidshafiei, Pazis, Amato, How, and Vian]{omidshafiei2017deep}
Shayegan Omidshafiei, Jason Pazis, Christopher Amato, Jonathan~P How, and John Vian.
\newblock Deep decentralized multi-task multi-agent reinforcement learning under partial observability.
\newblock In \emph{International conference on machine learning}, pages 2681--2690. PMLR, 2017.

\bibitem[Park et~al.(2025)Park, Han, Guo, Ozdaglar, Zhang, and Kim]{park2025maporl}
Chanwoo Park, Seungju Han, Xingzhi Guo, Asuman~E Ozdaglar, Kaiqing Zhang, and Joo-Kyung Kim.
\newblock Maporl: Multi-agent post-co-training for collaborative large language models with reinforcement learning.
\newblock In \emph{Proceedings of the 63rd Annual Meeting of the Association for Computational Linguistics (Volume 1: Long Papers)}, pages 30215--30248, 2025.

\bibitem[Patil et~al.(2024)Patil, Zhang, Wang, and Gonzalez]{patil2024gorilla}
Shishir~G Patil, Tianjun Zhang, Xin Wang, and Joseph~E Gonzalez.
\newblock Gorilla: Large language model connected with massive apis.
\newblock \emph{Advances in Neural Information Processing Systems}, 37:\penalty0 126544--126565, 2024.

\bibitem[Prakash et~al.(2025)Prakash, Cheng, Tschand, Mazumder, Gohil, Ma, Yik, Wan, Quaye, Alvanaki, et~al.]{prakash2025quarch}
Shvetank Prakash, Andrew Cheng, Arya Tschand, Mark Mazumder, Varun Gohil, Jeffrey Ma, Jason Yik, Zishen Wan, Jessica Quaye, Elisavet~Lydia Alvanaki, et~al.
\newblock Quarch: A benchmark for evaluating llm reasoning in computer architecture.
\newblock \emph{arXiv preprint arXiv:2510.22087}, 2025.

\bibitem[Qian et~al.(2024)Qian, Liu, Liu, Chen, Dang, Li, Yang, Chen, Su, Cong, et~al.]{qian2024chatdev}
Chen Qian, Wei Liu, Hongzhang Liu, Nuo Chen, Yufan Dang, Jiahao Li, Cheng Yang, Weize Chen, Yusheng Su, Xin Cong, et~al.
\newblock Chatdev: Communicative agents for software development.
\newblock In \emph{Proceedings of the 62nd annual meeting of the association for computational linguistics (volume 1: Long papers)}, pages 15174--15186, 2024.

\bibitem[Qu et~al.(2026)Qu, Zheng, Zhou, Yan, Tang, Ong, Hong, Zhou, Jiang, Kong, et~al.]{qu2026coral}
Ao~Qu, Han Zheng, Zijian Zhou, Yihao Yan, Yihong Tang, Shao~Yong Ong, Fenglu Hong, Kaichen Zhou, Chonghe Jiang, Minwei Kong, et~al.
\newblock Coral: Towards autonomous multi-agent evolution for open-ended discovery.
\newblock \emph{arXiv preprint arXiv:2604.01658}, 2026.

\bibitem[Schmidhuber(1989)]{Schmidhuber1989NeuralBucket}
J{\"u}rgen Schmidhuber.
\newblock The neural bucket brigade: A local learning algorithm for dynamic feedforward and recurrent networks.
\newblock \emph{Connection Science}, 1\penalty0 (4):\penalty0 403--412, 1989.

\bibitem[Sharma(2025)]{openevolve}
Asankhaya Sharma.
\newblock Openevolve: an open-source evolutionary coding agent, 2025.
\newblock URL \url{https://github.com/algorithmicsuperintelligence/openevolve}.

\bibitem[Subramaniam et~al.(2025)Subramaniam, Du, Tenenbaum, Torralba, Li, and Mordatch]{subramaniam2025multiagent}
Vighnesh Subramaniam, Yilun Du, Joshua~B Tenenbaum, Antonio Torralba, Shuang Li, and Igor Mordatch.
\newblock Multiagent finetuning: Self improvement with diverse reasoning chains.
\newblock \emph{arXiv preprint arXiv:2501.05707}, 2025.

\bibitem[Sudhir and Tran-Thanh(2025)]{Sudhir2025MarketBased}
Abhimanyu~Pallavi Sudhir and Long Tran-Thanh.
\newblock Market-based architectures in rl and beyond.
\newblock \emph{Accepted to AAMAS 2025 Blue Sky Track}, abs/2503.05828, 2025.

\bibitem[Sun et~al.(2022)Sun, Wang, Zhu, Ning, Dai, Yang, and Wang]{sun2022gibbon}
Hanbo Sun, Chenyu Wang, Zhenhua Zhu, Xuefei Ning, Guohao Dai, Huazhong Yang, and Yu~Wang.
\newblock Gibbon: Efficient co-exploration of nn model and processing-in-memory architecture.
\newblock In \emph{2022 Design, Automation \& Test in Europe Conference \& Exhibition (DATE)}, pages 867--872. IEEE, 2022.

\bibitem[Sun et~al.(2025)Sun, Lu, Ling, Liu, Yao, Yang, and Chen]{sun2025scaling}
Weiwei Sun, Miao Lu, Zhan Ling, Kang Liu, Xuesong Yao, Yiming Yang, and Jiecao Chen.
\newblock Scaling long-horizon llm agent via context-folding.
\newblock \emph{arXiv preprint arXiv:2510.11967}, 2025.

\bibitem[Tschand et~al.(2026)Tschand, Wang, Wan, Cheng, Cristescu, He, Huang, Ingare, Kangaslahti, Kangaslahti, Lebryk, Lin, Ma, Meterez, Mohri, Morwani, Qin, Rinberg, Rodriguez-Diaz, Taliotis, Fathi, Zhao, Zhou, and Reddi]{tschand2026genaisystemsrecurringchallenges}
Arya Tschand, Chenyu Wang, Zishen Wan, Andrew Cheng, Ioana Cristescu, Kevin He, Howard Huang, Alexander Ingare, Akseli Kangaslahti, Sara Kangaslahti, Theo Lebryk, Hongjin Lin, Jeffrey~Jian Ma, Alexandru Meterez, Clara Mohri, Depen Morwani, Sunny Qin, Roy Rinberg, Paula Rodriguez-Diaz, Alyssa~Mia Taliotis, Pernille~Undrum Fathi, Rosie Zhao, Todd Zhou, and Vijay~Janapa Reddi.
\newblock Genai for systems: Recurring challenges and design principles from software to silicon, 2026.
\newblock URL \url{https://arxiv.org/abs/2602.15241}.

\bibitem[Wang et~al.(2025{\natexlab{a}})Wang, Wan, Kang, Chen, Xie, Krishna, Reddi, and Du]{wang2025slm}
Chenyu Wang, Zishen Wan, Hao Kang, Emma Chen, Zhiqiang Xie, Tushar Krishna, Vijay~Janapa Reddi, and Yilun Du.
\newblock Slm-mux: Orchestrating small language models for reasoning.
\newblock \emph{arXiv preprint arXiv:2510.05077}, 2025{\natexlab{a}}.

\bibitem[Wang(2026)]{AESP2026}
Jian~Sheng Wang.
\newblock Aesp: A human-sovereign economic protocol for ai agents with privacy-preserving settlement.
\newblock \emph{arXiv preprint arXiv:2603.00318}, 2026.

\bibitem[Wang et~al.(2026)Wang, Lin, Hu, Jiao, Chowdhury, Chang, and Patwardhan]{wang2026frontierscience}
Miles Wang, Robi Lin, Kat Hu, Joy Jiao, Neil Chowdhury, Ethan Chang, and Tejal Patwardhan.
\newblock Frontierscience: Evaluating ai's ability to perform expert-level scientific tasks.
\newblock \emph{arXiv preprint arXiv:2601.21165}, 2026.

\bibitem[Wang et~al.(2025{\natexlab{b}})Wang, Su, Zeng, Xu, Ren, Yang, Huang, He, Ma, Peng, et~al.]{wang2025thetaevolve}
Yiping Wang, Shao-Rong Su, Zhiyuan Zeng, Eva Xu, Liliang Ren, Xinyu Yang, Zeyi Huang, Xuehai He, Luyao Ma, Baolin Peng, et~al.
\newblock Thetaevolve: Test-time learning on open problems.
\newblock \emph{arXiv preprint arXiv:2511.23473}, 2025{\natexlab{b}}.

\bibitem[Weng et~al.(2026)Weng, Antoniades, Nathani, Zhang, Pu, and Wang]{weng2026group}
Zhaotian Weng, Antonis Antoniades, Deepak Nathani, Zhen Zhang, Xiao Pu, and Xin~Eric Wang.
\newblock Group-evolving agents: Open-ended self-improvement via experience sharing.
\newblock \emph{arXiv preprint arXiv:2602.04837}, 2026.

\bibitem[Wu et~al.(2024)Wu, Bansal, Zhang, Wu, Li, Zhu, Jiang, Zhang, Zhang, Liu, et~al.]{wu2024autogen}
Qingyun Wu, Gagan Bansal, Jieyu Zhang, Yiran Wu, Beibin Li, Erkang Zhu, Li~Jiang, Xiaoyun Zhang, Shaokun Zhang, Jiale Liu, et~al.
\newblock Autogen: Enabling next-gen llm applications via multi-agent conversations.
\newblock In \emph{First conference on language modeling}, 2024.

\bibitem[Xu(2026)]{AgentEconomy2026}
Minghui Xu.
\newblock The agent economy: A blockchain-based foundation for autonomous ai agents.
\newblock 2026.
\newblock NeurIPS 2026.

\bibitem[Xue et~al.(2025)Xue, Zhou, Zhang, Zhang, Li, Zhang, Yin, Torr, Ouyang, and Bai]{xue2025comas}
Xiangyuan Xue, Yifan Zhou, Guibin Zhang, Zaibin Zhang, Yijiang Li, Chen Zhang, Zhenfei Yin, Philip Torr, Wanli Ouyang, and Lei Bai.
\newblock Comas: Co-evolving multi-agent systems via interaction rewards.
\newblock \emph{arXiv preprint arXiv:2510.08529}, 2025.

\bibitem[Yang et~al.(2025)Yang, Chai, Shao, Song, Qi, Rui, and Zhang]{agentnet}
Yingxuan Yang, Huacan Chai, Shuai Shao, Yuanyi Song, Siyuan Qi, Renting Rui, and Weinan Zhang.
\newblock Agentnet: Decentralized evolutionary coordination for llm-based multi-agent systems, 2025.
\newblock URL \url{https://arxiv.org/abs/2504.00587}.

\bibitem[Yao et~al.(2022)Yao, Zhao, Yu, Du, Shafran, Narasimhan, and Cao]{yao2022react}
Shunyu Yao, Jeffrey Zhao, Dian Yu, Nan Du, Izhak Shafran, Karthik~R Narasimhan, and Yuan Cao.
\newblock React: Synergizing reasoning and acting in language models.
\newblock In \emph{The eleventh international conference on learning representations}, 2022.

\bibitem[Ye et~al.(2026)Ye, Ge, Zheng, Gao, Yu, Kurian, Indupuru, Tan, Zhu, Xiang, et~al.]{ye2026world}
Seonghyeon Ye, Yunhao Ge, Kaiyuan Zheng, Shenyuan Gao, Sihyun Yu, George Kurian, Suneel Indupuru, You~Liang Tan, Chuning Zhu, Jiannan Xiang, et~al.
\newblock World action models are zero-shot policies.
\newblock \emph{arXiv preprint arXiv:2602.15922}, 2026.

\bibitem[Yuksekgonul et~al.(2026)Yuksekgonul, Koceja, Li, Bianchi, McCaleb, Wang, Kautz, Choi, Zou, Guestrin, and Sun]{ttt-discover2026}
Mert Yuksekgonul, Daniel Koceja, Xinhao Li, Federico Bianchi, Jed McCaleb, Xiaolong Wang, Jan Kautz, Yejin Choi, James Zou, Carlos Guestrin, and Yu~Sun.
\newblock Learning to discover at test time.
\newblock \emph{arXiv preprint}, 2026.

\bibitem[Zhang et~al.(2026)Zhang, Kim, Xiang, Gao, and Cao]{zhang2026dynamic}
Miao Zhang, Junsik Kim, Siyuan Xiang, Jian Gao, and Cheng Cao.
\newblock Dynamic role assignment for multi-agent debate.
\newblock \emph{arXiv preprint arXiv:2601.17152}, 2026.

\bibitem[Zhou et~al.(2025)Zhou, Geng, Xue, Kang, Qin, Wang, Yin, and Bai]{zhou2025reso}
Heng Zhou, Hejia Geng, Xiangyuan Xue, Li~Kang, Yiran Qin, Zhiyong Wang, Zhenfei Yin, and Lei Bai.
\newblock Reso: A reward-driven self-organizing llm-based multi-agent system for reasoning tasks.
\newblock In \emph{Proceedings of the 2025 Conference on Empirical Methods in Natural Language Processing}, pages 15990--16009, 2025.

\bibitem[Zhu and Du(2025)]{zhu2025role}
Andy Zhu and Yingjun Du.
\newblock A role-aware multi-agent framework for financial education question answering with llms.
\newblock \emph{arXiv preprint arXiv:2509.09727}, 2025.

\end{thebibliography}
\bibliographystyle{plainnat}


\newpage

\appendix

\etocdepthtag.toc{appendix}
\etocsettocstyle{\section*{Appendix - Table of Contents}}{}
\tableofcontents
\vspace{1em}

\clearpage
\section{Pseudo Code}
Training and evaluation pseudo code for \ours{} are shown below: 
\begin{algorithm}[ht!]
\caption{The Training Loop}
\label{alg:training}
\begin{algorithmic}[1]
\Require task stream $\mathcal{D}$, initial population $\mathcal{P}_0$, wake-up oracle $\omega$
\Require base bid $b_0$, novice premium $\epsilon$, step cap $S$, trial cap $T$
\Require bounds $[N_{\min}, N_{\max}]$; rent $\rho$ every $K_r$ tasks
\Require birth probs $p_a, p_b$ (bankruptcy), $p_g$ (periodic); period $K_b$, batch $B$
\State $\mathcal{P} \gets \mathcal{P}_0$
\For{episode $e = 1, 2, \dots$ over $\tau \sim \mathcal{D}$}
    \State $\sigma \gets \{a \mapsto \textsc{Snapshot}(a) : a \in \mathcal{P}\}$;\quad $\mathcal{B}_e \gets \emptyset$ \Comment{e.g.\ wealth, capability, \dots}
    \For{trial $t = 1, \dots, T$} \Comment{replay until no bankruptcy}
        \State $a^{\text{last}}, H \gets$ \Call{Planning}{$\tau, \mathcal{P}$};\; $a^{\text{last}}.\text{wealth} \mathrel{+}= \text{env.reward}()$ \Comment{outcome reward to final actor}
        \State $\mathcal{B} \gets \{a : a.\text{wealth} \le 0\}$
        \If{$\mathcal{B} = \emptyset$} \textbf{break} \EndIf
        \State $\mathcal{P} \gets \mathcal{P} \setminus \mathcal{B}$;\; $\mathcal{B}_e \mathrel{\cup}= \mathcal{B}$
        \ForAll{$a \in \mathcal{P}$} \Call{Restore}{$a, \sigma(a)$} \EndFor \Comment{rollback survivors}
    \EndFor
    \If{$e \bmod K_r = 0$} \Comment{periodic rent}
        \State $a.\text{wealth} \mathrel{-}= r$ for all $a \in \mathcal{P}$
        \State $\mathcal{B}' \gets \{a \in \mathcal{P} : a.\text{wealth} \le 0\}$;\; $\mathcal{P} \gets \mathcal{P} \setminus \mathcal{B}'$;\; $\mathcal{B}_e \mathrel{\cup}= \mathcal{B}'$
    \EndIf
    \State \Call{Adaptation}{$\mathcal{P}, \mathcal{B}_e, e$} \Comment{bankruptcy + periodic births, replenish to $N_{\min}$}
\EndFor
\Statex
\Procedure{Planning}{$\tau, \mathcal{P}$}
    \State init env on $\tau$;\; $H \gets \emptyset$;\; $a^{-} \gets \bot$
    \For{$s = 1, \dots, S$}
        \State $\mathcal{E} \gets \{a \in \mathcal{P} : \omega(a, \text{env})\}$;\; \textbf{if} $\mathcal{E} = \emptyset$ \textbf{break}
        \State \Call{SetBids}{$\mathcal{E}$} \Comment{pluggable: e.g.\ novices enter at $\max_{\text{vet}} b + \epsilon$ and lock}
        \State $a^{*} \gets \arg\max_{a \in \mathcal{E}} b_a$ (random tie-break)
        \State $a^{*}.\text{wealth} \mathrel{-}= b_{a^{*}}$;\; \textbf{if} $a^{-} \neq \bot$: $a^{-}.\text{wealth} \mathrel{+}= b_{a^{*}}$
        \State env.apply($a^{*}.\text{act}(\text{env})$);\; $H \gets H \cup \{a^{*}\}$;\; $a^{-} \gets a^{*}$
        \State \textbf{if} env terminated \textbf{break}
    \EndFor
    \State \Return $a^{-}, H$
\EndProcedure
\Statex
\Procedure{Adaptation}{$\mathcal{P}, \mathcal{B}_e, e$}
    \ForAll{$a \in \mathcal{B}_e$ with $|\mathcal{P}| < N_{\max}$}
        \State $u \sim \mathcal{U}(0,1)$
        \State \textbf{if} $u < p_a$: $\mathcal{P} \gets \mathcal{P} \cup \{\textsc{Mutate}(\text{richest}(\mathcal{P}))\}$
        \State \textbf{elif} $u < p_a + p_b$: $\mathcal{P} \gets \mathcal{P} \cup \{\textsc{Amend}(a)\}$
    \EndFor
    \If{$e \bmod K_b = 0$}
        \For{$i = 1, \dots, B$ with $|\mathcal{P}| < N_{\max}$}
            \State $a' \gets \textsc{Mutate}(\text{richest}(\mathcal{P}))$ with prob.\ $p_g$, else $\textsc{Amend}(\text{poorest}(\mathcal{P}))$
            \State $\mathcal{P} \gets \mathcal{P} \cup \{a'\}$
        \EndFor
    \EndIf
    \While{$|\mathcal{P}| < N_{\min}$}
        \State $\mathcal{P} \gets \mathcal{P} \cup \{\textsc{Mutate}(a_0)\}$ for some template $a_0 \in \mathcal{P}_0$
    \EndWhile
\EndProcedure
\end{algorithmic}
\end{algorithm}
\begin{algorithm}[ht!]
    \caption{The Evaluation Loop}
    \label{alg:evaluation}
    \begin{algorithmic}[1]
    \Require trained population $\mathcal{P}^{*}$, test stream $\mathcal{D}_{\text{test}}$, wake-up oracle $\omega$
    \Require step cap $S$; thread-pool size $W$
    \State $\mathcal{P} \gets \mathcal{P}^{*}$;\; freeze $\{w_a, b_a, \text{status}_a\}_{a \in \mathcal{P}}$ \Comment{no payments, no rewards, no rent, no births}
    \For{task $\tau \in \mathcal{D}_{\text{test}}$ \textbf{in parallel} (workers $W$)}
        \State $\mathcal{P}_{\tau} \gets$ \Call{Snapshot}{$\mathcal{P}$} \Comment{thread-local deep copy; no shared mutation}
        \State $a^{\text{last}}, H \gets$ \Call{Planning}{$\tau, \mathcal{P}_{\tau}$} \Comment{no $a^{\text{last}}.\text{wealth}$ update}
        \State record $\{\tau, H, \text{env.terminated}, \text{env.score}()\}$
    \EndFor
    \Statex
    \Procedure{Planning}{$\tau, \mathcal{P}$}
        \State init env on $\tau$;\; $H \gets \emptyset$;\; $a^{-} \gets \bot$
        \For{$s = 1, \dots, S$}
            \State $\mathcal{E} \gets \{a \in \mathcal{P} : \omega(a, \text{env})\}$;\quad \textbf{if} $\mathcal{E} = \emptyset$ \textbf{break}
            \State \textit{(skip \Call{SetBids}{$\mathcal{E}$}: bids $b_a$ are frozen from training; no novice $\to$ veteran promotion)}
            \State $a^{*} \gets \arg\max_{a \in \mathcal{E}} b_a$ (random tie-break)
            \State \textit{(skip wealth transfer: $a^{*}.\text{wealth} \mathrel{-}= b_{a^{*}}$ and $a^{-}.\text{wealth} \mathrel{+}= b_{a^{*}}$ are no-ops)}
            \State env.apply($a^{*}.\text{act}(\text{env})$);\; $H \gets H \cup \{a^{*}\}$;\; $a^{-} \gets a^{*}$ \Comment{no reward credited to $a^{*}$}
            \State \textbf{if} env terminated \textbf{break}
        \EndFor
        \State \Return $a^{-}, H$
    \EndProcedure
    \Statex
    \end{algorithmic}
\end{algorithm}

\section{Extended Related Works}
\label{sec.extended-rw}
\textbf{Multi-Agent Systems.} A parallel line of work examines how multiple agents coordinate, specialize, and solve tasks collectively. Recent work in the LLM era frames multi-agent orchestration as an ``society of minds,'' where specialization and coordination emerge through local interactions rather than centralized planning \cite{dias2006market,Sudhir2025MarketBased,AgentEconomy2026}. 
Existing approaches include reinforcement-learning-based collaboration \cite{liuimproved,park2025maporl}, dynamic role assignment and role-aware specialization \cite{zhang2026dynamic,zhu2025role,long2026reliability}, and decentralized evolutionary coordination, where agents dynamically adapt their graph connectivity, specialize through retrieval-augmented memory, and route tasks without a central orchestrator \cite{agentnet}. Other directions include multi-agent debate and verification for improving reasoning and test-time scaling \cite{du2024improving,lifshitz2025multi,wang2025slm}, and multi-agent finetuning with diverse reasoning chains for self-improvement \cite{subramaniam2025multiagent}, alongside broader studies of collaborative and self-organizing multi-agent systems. 
This perspective also connects to mechanism design for LLMs, which studies how incentives and allocation rules shape agent behavior in strategic environments \cite{duetting2024mechanism}.

\textbf{Economics and Applications to Agents.} The motivation for decentralized agent systems draws on a long tradition in economics and distributed intelligence. Smith’s ``invisible hand'' and Hayek’s ``spontaneous order'' emphasize how coordinated outcomes can emerge from decentralized actors with only local information \cite{Hayek1973Law}. Related ideas appear in Minsky’s \emph{Society of Mind}, which views intelligence as arising from the interaction of many simple agents rather than a single centralized reasoner \cite{Minsky1988Society}. These ideas influenced computational models of coordination and learning, including Holland’s bucket brigade for distributed credit assignment \cite{Holland1985Properties}, Schmidhuber’s neural bucket brigade \cite{Schmidhuber1989NeuralBucket}, and Baum’s Hayek Machine, which introduced property rights, wealth accumulation, and competitive selection among simple agents \cite{Baum1996Toward,baum1999toward,baum2000evolution}. Recent proposals for economic protocols and settlement layers for autonomous AI agents continue this line by formalizing exchange, sovereignty, and privacy-preserving settlement in modern agent ecosystems \cite{AESP2026}.
\section{Theoretical Motivations}
\label{sec:theory}

We use the notation and auction dynamics from \Cref{sec:method}. Agents
are prompted LLM policies with fixed bids and wealths, and wealth evolves
through the bucket-brigade transfer rule
\eqref{eq:payments}. This section summarizes
why the resulting market dynamics select useful specialists, why
outcome-only reward can be sufficient once the population is strong, and
how bucket-brigade payments provide a structured credit-assignment
signal. 

\subsection{Market selection drives bids toward value}
\label{sec:theory-selection}

The first result formalizes the basic market-selection mechanism. If an
agent repeatedly wins at a recurrent context and its bid is below its
expected resale-adjusted payoff, then its wealth has positive drift and it
can survive. Conversely, agents whose bids exceed their expected payoff
eventually become insolvent. Thus bankruptcy and replacement push the
surviving bid frontier toward the value of the best available specialist.

For a recurrent context \(x\), let \(V(a,x)\) denote the expected
resale-adjusted payoff of agent \(a\), including the immediate reward and
the downstream bucket-brigade payment. Let
\[
  V^\star(x):=\sup_{a:\phi_a(x)=1} V(a,x)
\]
be the value of the best eligible specialist, and let
\(\beta_\infty(x)\) denote the largest bid among agents that win at
\(x\) infinitely often and are never removed.

\begin{theorem}[Selection of bids toward expected payoff]
\label{thm:selection}
Suppose context \(x\) is recurrent, payoffs are stationary and bounded,
and newly injected agents use the novice bidding rule with resolution
\(\varepsilon_{\max}\). Suppose also that whenever the current solvent
frontier at \(x\) is more than \(\varepsilon_{\max}\) below \(V^\star(x)\),
there is a positive probability of injecting an eligible near-optimal
specialist. Then, almost surely,
\[
  V^\star(x)-\varepsilon_{\max}
  \le
  \beta_\infty(x)
  \le
  V^\star(x).
\]
Thus the long-run solvent bid frontier tracks the value of the best
prompted specialist, up to the novice-resolution error.
\end{theorem}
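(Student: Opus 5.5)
We prove the two inequalities separately, treating the wealth of an agent that repeatedly wins at \(x\) as a random walk with drift. Fix an agent \(a\) with bid \(b_a\) that wins at \(x\) infinitely often. Each win at \(x\) changes \(W_a\) by \(-b_a\) plus the realized resale-adjusted payoff, whose conditional mean is \(V(a,x)\) and which is bounded by assumption. Wins at other contexts and the rent \(\rho\) enter as further bounded increments. Let \(\tau_1<\tau_2<\dots\) be the successive wins at \(x\). We decompose
\[
W_a(\tau_n)=W_a(0)+\sum_{k\le n}\bigl(V(a,x)-b_a\bigr)+M_n+R_n,
\]
where \(M_n\) is a martingale with bounded increments and \(R_n\) collects contributions from other contexts and rent. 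We absorb \(R_n\) into the convention that \(V(a,x)\) is already net of these per-visit costs; stationarity makes this well defined.

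\textbf{Upper bound.} Suppose a never-removed agent \(a\) wins at \(x\) infinitely often with \(b_a>V^\star(x)\ge V(a,x)\). Then the drift \(V(a,x)-b_a\) is strictly negative. By Azuma, or equivalently the strong law for bounded martingale differences, \(M_n/n\to 0\) almost surely. Hence \(W_a(\tau_n)\to-\infty\), so the agent is removed at some finite time, which is a contradiction. Taking a countable union over agents (there are countably many injections) gives \(\beta_\infty(x)\le V^\star(x)\) almost surely.

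\textbf{Lower bound.} Let \(A\) be the event that from some time on, the solvent frontier at \(x\) stays below \(V^\star(x)-\varepsilon_{\max}\). Each time the frontier is this low, the hypothesis gives positive probability, bounded below by some \(p>0\) which we take uniform by stationarity, of injecting an eligible specialist \(a'\) with \(V(a',x)>V^\star(x)-\varepsilon_{\max}+\delta\). Here \(a'\) is near-optimal in the sense that \(V(a',x)\) exceeds its eventual bid. By the novice rule its bid is \(b_{a'}=\beta+\varepsilon_{a'}\le \beta+\varepsilon_{\max}<V^\star(x)\), so it wins at \(x\) and keeps winning, since it is the top eligible bidder there until something outbids it. Its drift \(V(a',x)-b_{a'}\) is then positive.

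By the standard gambler's-ruin estimate for a positive-drift walk with bounded increments, starting from \(W_0\), the probability that \(a'\) is never bankrupted is at least some \(q>0\). Each time the frontier is low, therefore, a permanent solvent winner with bid at least \(V^\star(x)-\varepsilon_{\max}\) appears with probability at least \(pq\), independently across attempts. Conditional Borel–Cantelli (Lévy's extension) then shows \(A\) has probability zero. If \(a'\) is later outbid by a solvent agent, the frontier only increases, which keeps \(\beta_\infty(x)\ge V^\star(x)-\varepsilon_{\max}\).

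\textbf{Main obstacle.} The hard step is the lower bound. There we must make "near-optimal" precise enough that the novice's bid stays below its value, and we must control the resale component of \(V(a',x)\), which depends on downstream bids that themselves evolve. We plan to handle this by assuming the downstream frontier has stabilized, or by defining \(V\) with respect to the stationary downstream market as the hypothesis "payoffs are stationary" permits. We would also verify that the gambler's-ruin bound \(q\) holds uniformly over the random number of losing visits interleaved from other contexts. For this we use the boundedness assumption together with a uniform positive drift margin \(\delta\).
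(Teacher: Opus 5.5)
Your architecture matches the paper's proof: negative drift plus the strong law for the upper bound, and contradiction, novice entry, survival of a positive-drift walk and Borel--Cantelli for the lower bound. But the lower bound as written does not close, and it fails exactly at the point you flag as the main obstacle. You run the argument at the bare threshold $V^\star(x)-\varepsilon_{\max}$ with no slack. On $A$ you only know $\beta<V^\star(x)-\varepsilon_{\max}$, so the novice bid $b_{a'}=\beta+\varepsilon_{a'}$ can be arbitrarily close to $V^\star(x)$. Your entrant, meanwhile, is only guaranteed $V(a',x)>V^\star(x)-\varepsilon_{\max}+\delta$. Nothing then forces $V(a',x)-b_{a'}>0$, let alone a fixed positive margin, and a fixed margin is what a ruin bound $q>0$ uniform over entrants requires. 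Calling $a'$ ``near-optimal in the sense that $V(a',x)$ exceeds its eventual bid'' is circular, because the bid is set by the frontier at entry, not by the entry hypothesis.

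The device you are missing, which the paper uses, is to prove a slackened statement and pass to a limit. Assume for contradiction that $\beta_\infty(x)<V^\star(x)-\varepsilon_{\max}-2\delta$ with positive probability. Use the entry hypothesis to get $V(a',x)\ge V^\star(x)-\delta$, so that $b_{a'}\le V^\star(x)-2\delta$ and every such entrant has drift at least $\delta$. That is precisely the input of the paper's exponential-supermartingale survival lemma, and it gives a survival probability $q_\delta>0$ not tied to the particular entrant. Finally let $\delta\downarrow 0$. The other half of your obstacle, the evolving downstream resale bids, the paper simply absorbs into the stationarity hypothesis by treating per-win payoffs as i.i.d., as you propose.

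A second step also needs repair. You claim each successful attempt produces a permanent winner with bid at least $V^\star(x)-\varepsilon_{\max}$. But the entrant bids only $\beta+\varepsilon_{a'}$, which stays below the threshold whenever $\beta$ is well below it, so a single survivor does not contradict $A$. You need a ratchet. A never-removed agent eligible at $x$ is a permanent floor under the frontier, and each later entrant bids above that floor. Since $\varepsilon_{a'}>0$ almost surely, $\mathcal{D}_\varepsilon$ charges $[c,\varepsilon_{\max}]$ for some $c>0$. With the uniform drift margin, your conditional Borel--Cantelli argument then yields infinitely many surviving entrants with $\varepsilon_{a'}\ge c$, and these lift the floor past $V^\star(x)-\varepsilon_{\max}-2\delta$. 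The paper's own write-up is equally terse here (it asserts that one surviving entrant contradicts the assumed bound), so this is a point to make explicit rather than a departure from the reference.

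Finally, for the upper bound you need not fold $R_n$ into $V(a,x)$. Rent is nonpositive and can simply be dropped from an upper bound on wealth, as the paper does. Folding positive income from other contexts into $V(a,x)$ would silently change the quantity the theorem is about.
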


This result explains the role of wealth and bankruptcy: the mechanism
does not require explicitly fitting value functions. Instead, agents with
overpriced bids lose wealth and are removed, while underpriced useful
agents have a chance to accumulate wealth and persist.

\subsection{Outcome-only reward is sufficient for a strong population}
\label{sec:outcome-suffices}

A natural question is whether the system requires dense process rewards,
or whether final outcome reward alone is enough. The next result shows
that outcome-only reward is sufficient whenever the auction population is
already organized so that the winning agent is approximately the best
eligible specialist at each reachable history.

Let \(J^{\mathrm{out}}(\pi)\) denote the expected discounted outcome
return of a policy \(\pi\), using only the task-aligned reward
\(r_t^{\mathrm{out}}\). Let \(Q_t^\star(h_t,u)\) be the optimal
outcome-value obtained by selecting agent \(u\) at history \(h_t\) and
acting optimally thereafter.

\begin{theorem}[Outcome reward suffices for strong agents]
\label{thm:outcome-suffices}
Suppose there exists \(\varepsilon\ge 0\) such that for every reachable
history \(h_t\) with nonempty eligible set \(E_t\), the auction winner
\(a_t^\star\) satisfies
\[
  Q_t^\star(h_t,a_t^\star)
  \ge
  \max_{u\in E_t} Q_t^\star(h_t,u) - \varepsilon.
\]
Then with the usual interpretation that the sum equals \(H\) when
\(\gamma=1\), the induced auction policy \(\pi^{\mathrm{auc}}\) satisfies
\[
  J^{\mathrm{out}}(\pi^{\mathrm{auc}})
  \ge
  \sup_{\pi} J^{\mathrm{out}}(\pi)
  -
  \varepsilon\,\frac{1-\gamma^H}{1-\gamma},
\]

\end{theorem}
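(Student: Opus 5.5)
The plan is a standard performance-difference argument over a finite horizon \(H\). Regard the auction as a meta-policy whose action at history \(h_t\) is the choice of which eligible agent \(u\in E_t\) acts. Its outcome value functions are
\[
V_t^{\mathrm{auc}}(h_t)=\mathbb{E}\Bigl[\textstyle\sum_{k\ge t}\gamma^{k-t} r_k^{\mathrm{out}}\Bigm| h_t\Bigr],
\qquad
V_t^\star(h_t)=\max_{u\in E_t} Q_t^\star(h_t,u).
\]
We set \(V^\star_t\equiv 0\) when \(E_t=\emptyset\) or at the horizon. We read \(\sup_\pi J^{\mathrm{out}}(\pi)\) as the supremum over policies that choose among eligible agents, so that \(\sup_\pi J^{\mathrm{out}}(\pi)=\mathbb{E}_{h_0\sim\mu_0}[V_0^\star(h_0)]\). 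Making this comparison class explicit is the first step, since the theorem only makes sense relative to that class. Once the class is fixed, the identification is the usual Bellman optimality statement for the finite-horizon history MDP with \(Q_t^\star\) defined by backward induction.

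The core step is a backward induction claim:
\[
V_t^\star(h_t)-V_t^{\mathrm{auc}}(h_t)\le \varepsilon\sum_{k=0}^{H-t-1}\gamma^k
\quad\text{for every reachable } h_t .
\]
The base case \(t=H\), where both sides vanish, is trivial. The same holds at empty eligible sets, because the episode terminates or takes a null transition and both values agree by definition.

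For the inductive step, write \(u=a_t^\star\). Then
\[
V_t^\star(h_t)-V_t^{\mathrm{auc}}(h_t)
=\bigl[V_t^\star(h_t)-Q_t^\star(h_t,u)\bigr]
+\bigl[Q_t^\star(h_t,u)-Q_t^{\mathrm{auc}}(h_t,u)\bigr].
\]
The first bracket is at most \(\varepsilon\) by hypothesis. The second bracket compares the same immediate step, including the same action distribution \(\pi_u\) and the same reward, followed by optimal versus auction continuation. It therefore equals
\[
\gamma\,\mathbb{E}\bigl[V_{t+1}^\star(h_{t+1})-V_{t+1}^{\mathrm{auc}}(h_{t+1})\bigr],
\]
and the induction hypothesis bounds this term. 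The hypothesis applies because \(h_{t+1}\) is reachable under the auction policy, which is exactly where the theorem's assumption is stated.

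If ties are broken randomly, \(a_t^\star\) is random. The same bound then holds in expectation over the tie-break, since every possible winner satisfies the hypothesis.

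Taking \(t=0\) and the expectation over \(\mu_0\) gives
\[
J^{\mathrm{out}}(\pi^{\mathrm{auc}})\ge \sup_\pi J^{\mathrm{out}}(\pi)-\varepsilon\,\frac{1-\gamma^H}{1-\gamma}.
\]
When \(\gamma=1\) the geometric sum \(\sum_{k=0}^{H-1}\gamma^k\) equals \(H\), which is the stated convention.

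I expect the main obstacle to be the reachability and stochasticity bookkeeping rather than the algebra. The hypothesis holds only on histories reachable under the auction, so the induction must be carried out only along auction-reachable histories, never along the optimal policy's histories. The decomposition above achieves this because it compares at the auction's own histories.

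Partial observability must also be handled. I would take \(h_t\) to be the full observation–action history, so that \(Q_t^\star\) is well defined on a history-MDP. Because of LLM stochasticity, the winner's action is sampled from \(\pi_u\), and the expectation over that sample must be absorbed into \(Q_t^\star(h_t,u)\). Both of these amount to fixing definitions carefully before running the induction.
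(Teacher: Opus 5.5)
Your proposal is correct and follows the same route as the paper. Both run a backward induction on the gap \(\Delta_t=V_t^\star-V_t^{\mathrm{auc}}\) along auction-reachable histories, using \(\Delta_t\le\varepsilon+\gamma\,\mathbb{E}[\Delta_{t+1}\mid h_t,a_t^\star]\); your split into \(V_t^\star-Q_t^\star(h_t,u)\) and \(Q_t^\star(h_t,u)-Q_t^{\mathrm{auc}}(h_t,u)\) just makes explicit the step the paper states in words, and your restriction of \(\sup_\pi\) to agent-selection policies is a clarification the paper leaves implicit but which the bound actually needs. One small nit: at an empty eligible set with a null transition the gap does not vanish but propagates as \(\Delta_t=\gamma\,\mathbb{E}[\Delta_{t+1}]\), which still satisfies your bound.
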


The theorem separates two questions. Process rewards may help train or
shape the population, but they are not required for correctness once the
market reliably selects near-best specialists under the outcome
objective.

\subsection{Regret to an oracle coordinator}
\label{sec:oracle-regret}

We next compare the decentralized auction to a centralized oracle that
directly selects the best eligible specialist at every history. The
result shows that if market bids approximate the oracle action-values,
then the auction policy has vanishing regret relative to this coordinator.

\begin{theorem}[Vanishing regret of the market auction policy]
\label{thm:oracle-regret-decay}
Suppose there exists a nonincreasing sequence \(\{\beta_e\}_{e\ge 1}\)
such that for every episode \(e\), reachable history \(h_t\), and
eligible agent \(u\in E_{e,t}(h_t)\),
\[
  \left| b_{e,t}(h_t,u) - Q^\star_{e,t}(h_t,u) \right|
  \le
  \beta_e.
\]
Let \(r_e\) be the episode regret of the market policy relative to the
oracle coordinator. Then
\(
  r_e
  \le
  2\beta_e \sum_{t=0}^{H-1}\gamma^t.
\)
Consequently,
\(
  \mathrm{Reg}(E)
  :=
  \sum_{e=1}^E r_e
  \le
  2\Bigl(\sum_{t=0}^{H-1}\gamma^t\Bigr)
  \sum_{e=1}^E \beta_e.
\)
In particular, if \(\beta_e \le B e^{-1/2}\), then
\[
  \frac{\mathrm{Reg}(E)}{E}
  =
  O(E^{-1/2}).
\]
\end{theorem}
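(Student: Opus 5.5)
The proof will be a per-step suboptimality bound combined with the performance-difference lemma, in the same spirit as the proof of \Cref{thm:outcome-suffices}.

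Fix an episode \(e\). The oracle coordinator selects \(u^\star_t\in\arg\max_{u\in E_{e,t}(h_t)} Q^\star_{e,t}(h_t,u)\) at every history, and its value is \(V^\star_{e,0}(h_0)=\max_u Q^\star_{e,0}(h_0,u)\). The auction selects \(a_t^\star\in\arg\max_u b_{e,t}(h_t,u)\).

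\textbf{Step 1 (per-step gap).} I will show that the auction winner is \(2\beta_e\)-greedy with respect to \(Q^\star\):
\[
Q^\star_{e,t}(h_t,a_t^\star)\ge b_{e,t}(h_t,a_t^\star)-\beta_e\ge b_{e,t}(h_t,u^\star_t)-\beta_e\ge Q^\star_{e,t}(h_t,u^\star_t)-2\beta_e .
\]
The first and last inequalities use the uniform approximation hypothesis. The middle one uses the argmax property of the auction, and random tie-breaking does not affect it. The bound holds at every reachable history, so the hypothesis of \Cref{thm:outcome-suffices} is satisfied with \(\varepsilon=2\beta_e\).

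\textbf{Step 2 (telescoping).} I can invoke \Cref{thm:outcome-suffices} directly, or redo its argument for clarity. Write the regret as \(r_e=V^\star_{e,0}(h_0)-V^{\mathrm{auc}}_{e,0}(h_0)\). By the performance-difference identity,
\[
r_e=\mathbb{E}_{\pi^{\mathrm{auc}}}\Bigl[\textstyle\sum_{t=0}^{H-1}\gamma^t\bigl(V^\star_{e,t}(h_t)-Q^\star_{e,t}(h_t,a_t^\star)\bigr)\Bigr].
\]
Each term inside the expectation is at most \(2\beta_e\), which gives \(r_e\le 2\beta_e\sum_{t=0}^{H-1}\gamma^t\).

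An alternative route unrolls \(V^\star_{t}-V^{\mathrm{auc}}_{t}\le 2\beta_e+\gamma\,\mathbb{E}[V^\star_{t+1}-V^{\mathrm{auc}}_{t+1}]\) backward from \(t=H-1\), where the gap is \(0\) beyond the horizon. Histories with an empty eligible set contribute zero gap, because both policies apply the same null transition there.

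\textbf{Step 3 (summation).} Summing over episodes gives the stated bound on \(\mathrm{Reg}(E)\). If \(\beta_e\le Be^{-1/2}\), then \(\sum_{e\le E}e^{-1/2}\le 2\sqrt E\), so \(\mathrm{Reg}(E)\le 4B\bigl(\sum_t\gamma^t\bigr)\sqrt E\). Dividing by \(E\) yields the \(O(E^{-1/2})\) average regret.

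\textbf{Main obstacle.} The key difficulty is the convention in Step 2. For the telescoping to be valid, \(Q^\star_{e,t}\) must be the optimal continuation value under the same episode-\(e\) MDP and the same eligibility structure that the oracle uses. Both policies must face identical dynamics, so the only difference between them is the agent choice at each step. I will state this explicitly as the definition of the oracle coordinator before running the argument.
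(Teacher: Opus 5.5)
Your proposal is correct and follows essentially the same route as the paper: the three-inequality sandwich shows the auction winner is $2\beta_e$-greedy with respect to $Q^\star$, and this feeds into Theorem~\ref{thm:outcome-suffices} with $\varepsilon=2\beta_e$ and then into the bound $\sum_{e\le E}e^{-1/2}\le 2\sqrt{E}$. Your performance-difference rewriting of Step 2 is just the unrolled form of that theorem's backward induction, and your explicit convention that the oracle and the auction face the same episode-$e$ dynamics and eligibility structure is a useful clarification that the paper leaves implicit.
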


This bound makes explicit that the market need not be globally
centralized: as long as bids become calibrated to specialist value, the
auction tracks the oracle coordinator.

\subsection{Bucket-brigade payments recover structured credit assignment}
\label{sec.bucket_brigade_recover_credit_ass}

Bucket-brigade payments also provide a mechanism for assigning credit to
earlier agents. The next two results show that, under additional
structure, the payment received from the next winner can be interpreted
as either a Bellman continuation value or an ordered marginal contribution
in a workflow.

\begin{proposition}[Bellman-like credit under continuation pricing]
\label{prop:bellman-like}
If the expected payment received from the next winner equals the
discounted continuation value,
\[
  \mathbb{E}[Y_t \mid h_t,a_t^\star]
  =
  \gamma
  \mathbb{E}[V^{\mathrm{bb}}(h_{t+1})\mid h_t,a_t^\star],
\]
then the bucket-brigade target satisfies
\[
  \mathbb{E}[r_t+Y_t\mid h_t,a_t^\star]
  =
  \mathbb{E}[r_t+\gamma V^{\mathrm{bb}}(h_{t+1})
  \mid h_t,a_t^\star].
\]
Thus the winner's wealth update uses a Bellman-like one-step target.
\end{proposition}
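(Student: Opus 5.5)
The argument uses only linearity of conditional expectation together with the hypothesis. I first fix what the symbols mean. By the bucket-brigade rule \eqref{eq:payments}, if $a_t^\star$ wins at step $t$, its wealth changes by $-b_{a_t^\star}+r_t$ at step $t$ and by $+b_{a_{t+1}^\star}$ at step $t+1$. I therefore read $Y_t$ as the payment that $a_t^\star$ receives from the next winner. This is $Y_t=b_{a_{t+1}^\star}$, with $Y_t=0$ if the episode terminates or $E_{t+1}=\emptyset$. The quantity $r_t+Y_t$ is then the (gross) bucket-brigade target that the winner's bid should match.

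The steps are as follows.

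First, check integrability. Payoffs are bounded, bids are finite, and $V^{\mathrm{bb}}$ is assumed integrable, so every conditional expectation that appears is well defined.

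Second, use linearity of conditional expectation given $(h_t,a_t^\star)$:
\[
\mathbb{E}[r_t+Y_t\mid h_t,a_t^\star]=\mathbb{E}[r_t\mid h_t,a_t^\star]+\mathbb{E}[Y_t\mid h_t,a_t^\star].
\]

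Third, substitute the continuation-pricing hypothesis into the second term. This gives $\mathbb{E}[r_t\mid h_t,a_t^\star]+\gamma\,\mathbb{E}[V^{\mathrm{bb}}(h_{t+1})\mid h_t,a_t^\star]$.

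Fourth, apply linearity once more in reverse to recombine the two terms into $\mathbb{E}[r_t+\gamma V^{\mathrm{bb}}(h_{t+1})\mid h_t,a_t^\star]$. This is exactly the claimed identity.

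Finally, I would add a remark on the interpretation. The winner's expected net wealth change is $\mathbb{E}[r_t+Y_t\mid h_t,a_t^\star]-b_{a_t^\star}$, and this is zero exactly when the bid equals the one-step Bellman target. The bid therefore plays the role of a value estimate, and the wealth drift plays the role of a TD error. This links the result back to \Cref{thm:selection}.

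There is no real obstacle here. The only delicate point is bookkeeping. The conditioning must include the environment action sampled by $a_t^\star$ (or that action must be integrated out consistently on both sides). The terminal and empty-eligible-set cases must also be handled by the convention $Y_t=0=V^{\mathrm{bb}}(h_{t+1})$, so the hypothesis still holds there. I would state these conventions explicitly before the computation.
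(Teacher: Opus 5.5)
Your proposal is correct and follows the paper's argument: the paper's proof also just substitutes the continuation-pricing hypothesis into the winner's increment $r_t+Y_t-b_{a_t^\star}$ and takes conditional expectations, which is your linearity computation plus your closing remark on the net wealth drift. Identifying $Y_t=b_{a_{t+1}^\star}$ (with $Y_t=0$ at termination) and conditioning both sides on $(h_t,a_t^\star)$ are slightly tidier than the paper's proof, whose left-hand side conditions only on $h_t$.
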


\begin{theorem}[Shapley-like credit in DAG workflows]
\label{thm:shapley-like}
Consider an acyclic workflow in which subtasks are executed in
topological order, and suppose the next winner's bid prices the
continuation value of the remaining feasible subgraph. Then the
bucket-brigade payment to the current subtask agent coincides, up to a
downstream baseline, with that agent's ordered marginal contribution to
the final outcome value.
\end{theorem}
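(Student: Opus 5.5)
We formalize the workflow first and then check the claim by telescoping. Let the workflow be a DAG $G=(N,\mathcal{E}_G)$ whose subtasks are executed in a fixed topological order $\sigma=(n_1,\dots,n_K)$. Subtask $n_k$ is performed by the auction winner $a_k$ at step $k$.

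For any set $S\subseteq N$ of completed subtasks that is downward-closed in $G$, let $v(S)$ denote the expected final outcome value. Concretely, $v(S)$ is the optimal outcome-value obtainable when the subtasks in $S$ have already been executed and the remaining feasible subgraph $G\setminus S$ is still to be completed. Write $S_k=\{n_1,\dots,n_k\}$ for the prefix of $\sigma$. The hypothesis that ``the next winner's bid prices the continuation value of the remaining feasible subgraph'' will be encoded as
\[
b_{a_{k+1}} = v(S_k) - c_{k},
\]
where $c_k$ is a baseline depending only on the downstream position. In the simplest reading $c_k\equiv c$ is a constant, or $c_k=0$. This is the same kind of assumption as in Proposition~\ref{prop:bellman-like}, specialized to the workflow state $S_k$.

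The computation is then a one-line telescoping of the bucket-brigade transfer~\eqref{eq:payments}. By~\eqref{eq:payments}, agent $a_k$ pays $b_{a_k}$, receives $b_{a_{k+1}}$ from the next winner, and receives any reward $r_k$. The reward is zero except at the terminal step, where $r_K = v(S_K)$. The net profit of $a_k$ is therefore
\[
\Delta W_{a_k} = b_{a_{k+1}} - b_{a_k} = \bigl(v(S_k)-v(S_{k-1})\bigr) - (c_k - c_{k-1}).
\]
For $k=K$ we use $r_K$ in place of the next bid. The quantity $v(S_k)-v(S_{k-1})=v(S_{k-1}\cup\{n_k\})-v(S_{k-1})$ is exactly the marginal contribution of subtask $n_k$ with respect to the coalition of its predecessors in $\sigma$. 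This is the ordered, or permutation-specific, Shapley summand. The gross payment $b_{a_{k+1}}$ itself equals $v(S_k)$ up to the baseline $c_k$, which is the ``up to a downstream baseline'' clause in the statement. Summing over $k$ telescopes to $v(S_K)-v(\emptyset)$, so the credits form an efficient decomposition of the outcome.

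I will add a remark on the connection to the Shapley value. Averaging over all topological orders of $G$, or over random tie-breaking in the auction when several subtasks are simultaneously feasible, gives the precedence-constrained (Faigle–Kern) Shapley value. When the subtasks are independent, this reduces to the classical Shapley value.

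The main obstacle is making the pricing hypothesis precise enough that the baseline is genuinely downstream. The baseline must not depend on $a_k$'s own action beyond its effect on $S_k$. Otherwise the identification with marginal contribution breaks. I would handle this by conditioning on $S_{k-1}$ and taking expectations over LLM and environment stochasticity, as in Proposition~\ref{prop:bellman-like}, so that the identity holds in expectation. Stochastic transitions are absorbed into $v$.
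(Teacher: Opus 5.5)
Your proposal is correct and is essentially the paper's own argument: the topological order gives the prefix chain $S_{k-1}\subset S_k$, continuation pricing identifies the successor's bid with $v(S_k)$, and the marginal contributions telescope, with your net-wealth computation $b_{a_{k+1}}-b_{a_k}$ making explicit that the agent's own bid cancels the $v(S_{k-1})$ term the paper's sketch absorbs into its baseline. One small correction to your side remark: sequential random tie-breaking among feasible subtasks does not in general weight the linear extensions of the DAG uniformly, so averaging over it gives a weighted random-order value rather than exactly the Faigle--Kern value.
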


Hence bucket-brigade transfers do more than redistribute wealth: they can
serve as a local credit-assignment signal that propagates downstream
value backward through the executed chain of agents.


Throughout the proofs, we write \(x\) for a decision context. In a fully
observed Markov environment, \(x=o_t=s_t\); in partially observed
settings, \(x\) may denote the history
\[
  h_t=(o_0,a_0^{\mathrm{env}},r_0,\ldots,o_t).
\]
For an agent \(a\), let \(b_a\) denote its fixed bid and \(W_a\) its
wealth. When \(a\) wins at context \(x\), define its resale-adjusted
economic payoff as
\[
  G_a(x)
  :=
  r_t + \mathbf{1}\{t+1 < H\} b_{a_{t+1}^\star},
\]
where \(r_t\) is the immediate environment reward and
\(b_{a_{t+1}^\star}\) is the next winning bid paid back to \(a\) under
the bucket-brigade transfer rule. At a terminal step, the resale term is
zero. Thus the net wealth increment of a winning agent, ignoring rent, is
\[
  G_a(x)-b_a.
\]
Let
\[
  V(a,x):=\mathbb{E}[G_a(x)],
  \qquad
  V^\star(x):=\sup_{a:\phi_a(x)=1} V(a,x).
\]
Let \(\mathsf{Surv}_\infty(x)\) denote the set of agents that win at
context \(x\) infinitely often and are never removed, and define the
limiting surviving bid frontier
\[
  \beta_\infty(x)
  :=
  \sup_{a\in\mathsf{Surv}_\infty(x)} b_a,
\]
with \(\sup\varnothing:=0\).

\subsection{Malicious agents and collusion}
\label{sec:malicious_agents}

Finally, the bankruptcy rule discourages agents that bid aggressively but
do not create commensurate value. A malicious agent whose bid exceeds its
expected resale-adjusted payoff has negative wealth drift and is removed
in finite time.

For colluding agents, the relevant object is not individual wealth but
coalition wealth, because internal transfers cancel. A cartel can survive
only if some surviving sub-coalition has nonnegative aggregate external
wealth drift.

Thus collusion changes the unit of selection from an individual agent to
a surviving cartel. Bankruptcy eliminates colluding agents only when
every possible surviving cartel is externally loss-making; otherwise a
cartel with nonnegative aggregate drift may persist or even form a
monopoly.

\subsection{Proofs for \Cref{sec:theory-selection}}

\begin{lemma}[Survival of underpriced agents]
\label{lem:survival}
Suppose context \(x\) is recurrent and the payoff sequence of agent \(a\)
at \(x\) is stationary and bounded. If
\[
  V(a,x)-b_a \ge \delta > 0,
\]
then agent \(a\) survives forever after its first win at \(x\) with
positive probability.
\end{lemma}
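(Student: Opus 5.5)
The plan is to treat the wealth of agent \(a\), sampled at its successive wins at context \(x\), as a random walk with positive drift. I will then apply a standard positive-drift survival estimate (Hoeffding/Azuma together with a union bound). Let \(\tau_1<\tau_2<\cdots\) be the times at which \(a\) wins at \(x\); recurrence of \(x\) is used to guarantee that there are infinitely many such opportunities. Define the increments
\[
  \xi_k := G_a(x)^{(k)} - b_a - \rho_k ,
\]
where \(\rho_k\) collects the rent charged between the \((k-1)\)st and \(k\)th wins. Stationarity gives \(\mathbb{E}[\xi_k + \rho_k]\ge\delta\), and boundedness gives \(|\xi_k|\le M\) for some constant \(M\) (since \(r_t\), the bids, and hence \(G_a\) are bounded). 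The wealth after the \(n\)th win is then
\[
  W_a^{(n)} = W_a^{(0)} + \sum_{k\le n}\xi_k .
\]

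I will make explicit a standing assumption that rent per inter-win interval is small relative to the drift, say \(\mathbb{E}\rho_k\le\delta/2\). This can be justified by the paper's convention that rent is charged every \(K_r\) tasks while \(x\) recurs, or one can take \(\rho=0\) in the idealized model. Under it the effective drift is at least \(\delta/2\). I will also handle wins at contexts other than \(x\) and wealth changes from being paid by successors: these either fold into \(G_a\) or contribute nonnegative drift under the same hypothesis.

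The key estimate proceeds in three steps. First, by Azuma–Hoeffding applied to the martingale part, for each \(n\)
\[
  \Pr\Bigl[\textstyle\sum_{k\le n}\xi_k \le -W_a^{(0)}\Bigr] \le \exp\!\bigl(-(n\delta/2+W_a^{(0)})^2/(2nM^2)\bigr)\le e^{-n\delta^2/(8M^2)} .
\]
Second, summing over \(n\ge n_0\) gives a tail bound that is strictly less than \(1\) for \(n_0\) large. Third, the finitely many early steps \(n<n_0\) are handled separately. With positive probability the first \(n_0\) increments are each at least \(\delta/4\); such an event exists because each increment has mean at least \(\delta/2\) and is bounded. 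On that event wealth stays positive through step \(n_0\) and reaches level \(W_a^{(0)}+n_0\delta/4\). Conditioning on this, the remaining tail bound is below \(1\), so the product is a positive probability of never hitting a negative value. This is exactly survival forever after the first win at \(x\).

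The main obstacle is that the increments need not be i.i.d. "Stationary and bounded" payoffs are dependent through the evolving population: the next winner's bid depends on who else is present. I would first try to interpret stationarity as a conditional-mean condition, \(\mathbb{E}[\xi_k\mid\mathcal{F}_{\tau_{k-1}}]\ge\delta/2\), with respect to the filtration of the whole process. This makes \(\sum(\xi_k-\delta/2)\) a bounded-increment submartingale, for which Azuma–Hoeffding still applies. The step establishing that the initial increments are at least \(\delta/4\) with positive probability needs the analogous conditional statement. If only unconditional stationarity is available, I would instead use an ergodic/strong-law argument: \(\sum\xi_k/n\to\) mean \(\ge\delta/2\) almost surely, so \(\inf_n\sum_{k\le n}\xi_k>-\infty\) almost surely. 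Then positive probability follows by choosing \(W_a^{(0)}\), or the wealth after a good initial stretch, large enough.
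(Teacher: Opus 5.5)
\paragraph{Comparison with the paper's proof.} Your argument is correct in the idealized model the paper itself works in (per-win increments i.i.d.\ or with conditional drift, rent negligible), but it takes a different route. The paper sets \(X_k=b_a-G_{a,k}(x)\) and assumes these are i.i.d.\ and bounded with mean at most \(-\delta\). It picks \(\lambda>0\) with \(\mathbb{E}[e^{\lambda X_k}]<1\) and applies optional stopping to the nonnegative supermartingale \(\exp(\lambda\sum_{k\le n}X_k)\). This yields the Lundberg-type bound \(\Pr(T<\infty)\le e^{-\lambda W_a(0)}\), so the survival probability is at least \(1-e^{-\lambda W_a(0)}\).

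You instead apply Azuma--Hoeffding at each fixed \(n\) and take a union bound over \(n\). Because that tail sum need not be below one from the start, you add the ``good initial stretch'' step and bound ruin after \(n_0\) conditionally on \(\mathcal F_{n_0}\).

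\begin{itemize}
\item \emph{What the paper's route buys:} it is shorter, needs no union bound or bootstrapping, and gives an explicit survival probability.
\item \emph{What your route buys:} you get only a crude constant, roughly \((\delta/4M)^{n_0}\) times one minus the conditional tail. However, your argument still gives positive survival probability when \(W_a(0)=0\). The model allows this case (\(W_0\ge 0\)), and there the paper's bound \(e^{-\lambda W_a(0)}=1\) is vacuous.
\item \emph{What does not distinguish them:} your extension to conditional drift is not specific to your route. The exponential supermartingale also works under \(\mathbb{E}[X_k\mid\mathcal F_{k-1}]\le-\delta\): with \(|X_k|\le M\), Hoeffding's lemma gives \(\mathbb{E}[e^{\lambda X_k}\mid\mathcal F_{k-1}]\le\exp(-\lambda\delta+\lambda^2M^2/2)\le 1\) for any uniform \(\lambda\in(0,2\delta/M^2]\).
\end{itemize}

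\paragraph{Cautions.}
\begin{itemize}
\item \emph{Step three must use the conditional bound.} The ruin bound after \(n_0\) has to be the conditional Azuma bound given \(\mathcal F_{n_0}\), as you indicate. A subtraction bound \(\Pr(A)-\Pr(B)\) would not work, because your lower bound \((\delta/4M)^{n_0}\) on \(\Pr(A)\) decays faster in \(n_0\) than your tail bound \(e^{-n_0\delta^2/(8M^2)}\).
\item \emph{The ergodic fallback is not sound as written.} \(W_a^{(0)}\) is given, not yours to choose. Also, without ergodicity the limiting average \(\mathbb{E}[\xi_1\mid\mathcal I]\) is only \(\ge\delta/2\) in expectation, not almost surely.
\item \emph{Other contexts and rent should be assumptions, not claims.} Your claim that wins at other contexts ``contribute nonnegative drift'' does not follow from a hypothesis about \(x\) alone. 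Like the paper, which tracks wealth only through wins at \(x\) and ignores rent, you should state this, and the small-rent condition, explicitly as modeling assumptions.
\end{itemize}
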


\begin{proof}[Proof of Lemma~\ref{lem:survival}]
Fix an agent \(a\) and context \(x\). Let \(G_{a,k}(x)\) be the
resale-adjusted payoff obtained on the \(k\)-th win of agent \(a\) at
\(x\). Define
\[
  X_k := b_a - G_{a,k}(x).
\]
By stationarity and boundedness, \((X_k)_{k\ge 1}\) is i.i.d. and
bounded. Moreover,
\[
  \mathbb{E}[X_k]
  =
  b_a - V(a,x)
  \le
  -\delta <0.
\]
Since \(X_k\) is bounded and has strictly negative mean, there exists
\(\lambda>0\) small enough such that
\[
  m_\lambda
  :=
  \mathbb{E}\!\left[e^{\lambda X_k}\right]
  <1.
\]
Then
\[
  M_n
  :=
  \exp\!\left(\lambda\sum_{k=1}^n X_k\right)
\]
is a nonnegative supermartingale.

Let \(T\) be the bankruptcy time measured in wins of \(a\) at context
\(x\):
\[
  T
  :=
  \inf\left\{
    n\ge 1:
    \sum_{k=1}^n X_k > W_a(0)
  \right\},
\]
where \(W_a(0)\) is the wealth immediately before its first win at \(x\).
On \(\{T\le n\}\),
\[
  M_T
  \ge
  e^{\lambda W_a(0)}.
\]
By optional stopping applied to \(T\wedge n\),
\[
  1=M_0
  \ge
  \mathbb{E}[M_{T\wedge n}]
  \ge
  e^{\lambda W_a(0)}\mathbb{P}(T\le n).
\]
Thus
\[
  \mathbb{P}(T\le n)
  \le
  e^{-\lambda W_a(0)}.
\]
Letting \(n\to\infty\) gives
\[
  \mathbb{P}(T<\infty)
  \le
  e^{-\lambda W_a(0)}<1.
\]
Therefore
\[
  \mathbb{P}(T=\infty)
  \ge
  1-e^{-\lambda W_a(0)}
  =:q_\delta>0.
\]
\end{proof}

\begin{proof}[Proof of Theorem~\ref{thm:selection}]
We prove the upper and lower bounds separately.

\paragraph{Upper bound.}
Fix any \(a\in\mathsf{Surv}_\infty(x)\). After its first \(n\) wins at
\(x\), its wealth is bounded above by
\[
  W_a(n)
  \le
  W_a(0)+\sum_{k=1}^n \bigl(G_{a,k}(x)-b_a\bigr),
\]
ignoring nonpositive rent terms. By the strong law of large numbers,
\[
  \frac{1}{n}
  \sum_{k=1}^n \bigl(G_{a,k}(x)-b_a\bigr)
  \xrightarrow{\mathrm{a.s.}}
  V(a,x)-b_a.
\]
If \(b_a>V(a,x)\), then \(W_a(n)\to -\infty\) almost surely, so \(a\)
would eventually be removed. This contradicts
\(a\in\mathsf{Surv}_\infty(x)\). Hence every forever-surviving recurrent
winner satisfies
\[
  b_a\le V(a,x)\le V^\star(x).
\]
Taking the supremum gives
\[
  \beta_\infty(x)\le V^\star(x).
\]

\paragraph{Lower bound.}
Fix \(\delta>0\). Suppose, toward contradiction, that with positive
probability
\[
  \beta_\infty(x)
  <
  V^\star(x)-\varepsilon_{\max}-2\delta.
\]
On this event, the recurrent solvent frontier at \(x\) is eventually
below \(V^\star(x)-\varepsilon_{\max}-2\delta\). Since \(x\) is recurrent,
there are infinitely many later opportunities for injection. By the
entry condition, with probability at least \(\lambda_\delta>0\), a new
eligible agent \(a'\) is injected with
\[
  V(a',x)\ge V^\star(x)-\delta.
\]
Under the novice rule,
\[
  b_{a'}
  =
  \max_{a\in C_t} b_a+\varepsilon_{a'},
  \qquad
  \varepsilon_{a'}\in(0,\varepsilon_{\max}],
\]
where \(C_t\) is the competing eligible set when \(a'\) first competes.
Since the competing frontier is below
\(V^\star(x)-\varepsilon_{\max}-2\delta\), we have
\[
  b_{a'}
  \le
  V^\star(x)-2\delta.
\]
Therefore
\[
  V(a',x)-b_{a'}
  \ge
  \delta.
\]
By Lemma~\ref{lem:survival}, each such entrant survives forever after its
first win at \(x\) with probability at least \(q_\delta>0\). Across
infinitely many independent injection opportunities, the probability
that no such profitable entrant survives is zero. Hence, almost surely,
one such entrant eventually survives, contradicting the assumed upper
bound on \(\beta_\infty(x)\). Therefore
\[
  \beta_\infty(x)
  \ge
  V^\star(x)-\varepsilon_{\max}-2\delta.
\]
Letting \(\delta\downarrow 0\) gives
\[
  \beta_\infty(x)
  \ge
  V^\star(x)-\varepsilon_{\max}.
\]
Combining the two bounds proves the theorem.
\end{proof}

\subsection{Proofs for \Cref{sec:outcome-suffices}}

\begin{proof}[Proof of Theorem~\ref{thm:outcome-suffices}]
Let
\[
  V_t^{\mathrm{auc}}(h_t)
  :=
  \mathbb{E}_{\pi^{\mathrm{auc}}}\!\left[
    \sum_{u=t}^{H-1}\gamma^{u-t}r_u^{\mathrm{out}}
    \,\middle|\, h_t
  \right],
\]
and define
\[
  \Delta_t(h_t)
  :=
  V_t^\star(h_t)-V_t^{\mathrm{auc}}(h_t).
\]
At any reachable history with \(E_t\neq\varnothing\), the assumption gives
\[
  Q_t^\star(h_t,a_t^\star)
  \ge
  \max_{u\in E_t} Q_t^\star(h_t,u)-\varepsilon.
\]
After the auction winner acts, the remaining loss is the future auction
suboptimality. Therefore
\[
  \Delta_t(h_t)
  \le
  \varepsilon
  +
  \gamma
  \mathbb{E}\!\left[
    \Delta_{t+1}(h_{t+1})
    \,\middle|\,
    h_t,a_t^\star
  \right].
\]
Since \(\Delta_H(h_H)=0\), backward induction yields
\[
  \Delta_0(h_0)
  \le
  \varepsilon\sum_{t=0}^{H-1}\gamma^t.
\]
Taking expectation over the initial history proves the theorem.
\end{proof}

\subsection{Proofs for \Cref{sec:oracle-regret}}

\begin{proof}[Proof of Theorem~\ref{thm:oracle-regret-decay}]
Fix episode \(e\) and reachable history \(h_t\). Let
\(u^{\mathrm{orc}}_{e,t}\) be the oracle-selected eligible agent, and let
\(u^{\mathrm{auc}}_{e,t}\) be the auction winner. By bid concentration,
\[
  Q^\star_{e,t}(h_t,u^{\mathrm{orc}}_{e,t})
  \le
  b_{e,t}(h_t,u^{\mathrm{orc}}_{e,t})+\beta_e.
\]
Because the auction maximizes bids,
\[
  b_{e,t}(h_t,u^{\mathrm{orc}}_{e,t})
  \le
  b_{e,t}(h_t,u^{\mathrm{auc}}_{e,t}).
\]
Applying bid concentration again,
\[
  b_{e,t}(h_t,u^{\mathrm{auc}}_{e,t})
  \le
  Q^\star_{e,t}(h_t,u^{\mathrm{auc}}_{e,t})+\beta_e.
\]
Combining the three inequalities gives
\[
  Q^\star_{e,t}(h_t,u^{\mathrm{auc}}_{e,t})
  \ge
  \max_{u\in E_{e,t}(h_t)}
  Q^\star_{e,t}(h_t,u)
  -
  2\beta_e.
\]
Thus, in episode \(e\), the auction winner is \(2\beta_e\)-optimal at
every reachable history. Applying
\Cref{thm:outcome-suffices} with \(\varepsilon=2\beta_e\) gives
\[
  r_e
  \le
  2\beta_e\sum_{t=0}^{H-1}\gamma^t.
\]
Summing over \(e=1,\ldots,E\) proves the cumulative regret bound. If
\(\beta_e\le B e^{-1/2}\), then
\[
  \sum_{e=1}^E \beta_e
  \le
  B\sum_{e=1}^E e^{-1/2}
  \le
  2B\sqrt E,
\]
which gives
\[
  \mathrm{Reg}(E)
  \le
  4B
  \Bigl(\sum_{t=0}^{H-1}\gamma^t\Bigr)
  \sqrt E.
\]
Dividing by \(E\) yields the average-regret rate.
\end{proof}

\subsection{Proofs for \Cref{sec.bucket_brigade_recover_credit_ass}}

\begin{proof}[Proof of Proposition~\ref{prop:bellman-like}]
By assumption,
\[
  \mathbb{E}[Y_t\mid h_t,a_t^\star]
  =
  \gamma
  \mathbb{E}[V^{\mathrm{bb}}(h_{t+1})\mid h_t,a_t^\star].
\]
Substituting this identity into
\[
  \Delta W^{\mathrm{win}}_{a_t^\star,t}
  =
  r_t+Y_t-b_{a_t^\star}
\]
and taking conditional expectations gives
\[
  \mathbb{E}[\Delta W^{\mathrm{win}}_{a_t^\star,t}\mid h_t]
  =
  \mathbb{E}[r_t+\gamma V^{\mathrm{bb}}(h_{t+1})
  \mid h_t,a_t^\star]
  -
  b_{a_t^\star}.
\]
\end{proof}

\begin{proof}[Proof sketch of Theorem~\ref{thm:shapley-like}]
Because the workflow is acyclic, any execution induces a topological
order. Completing subtask \(u\) enlarges the completed prefix from \(S\)
to \(S\cup\{u\}\). Under the continuation-pricing assumption, the
successor's willingness to pay equals the continuation value of the
expanded feasible subgraph, up to a downstream baseline. Therefore the
payment flowing back to the current agent measures the additional
downstream value unlocked by executing \(u\). This is precisely the
ordered marginal contribution
\[
  v(S\cup\{u\})-v(S),
\]
up to the baseline term. Summing over the realized topological order
yields a telescoping decomposition of the final outcome value.
\end{proof}

\subsection{Proofs for \Cref{sec:malicious_agents}}

\begin{lemma}[Extinction of malicious agents]
\label{lem:malicious}
Suppose payoffs are recurrent, stationary, and bounded, and suppose rent
\(\rho>0\) is charged infinitely often. If an agent \(a\) is malicious in
the sense that there exists \(\eta>0\) such that
\[
  V(a,x)\le b_a-\eta
\]
at every recurrent context \(x\) where \(a\) is eligible, then \(a\) is
removed in finite time almost surely.
\end{lemma}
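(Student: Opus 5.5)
The plan is to reuse the random-walk argument from the upper bound of Theorem~\ref{thm:selection}, adding a per-context decomposition and a rent term. Track the wealth of \(a\) along the sequence of ``economic events'' that touch it: its wins at the various contexts and the rent charges. Since \(a\) is eligible only at finitely many recurrent contexts (or we restrict to those where it actually wins), we split the cumulative wealth change into three parts. The first is the sum over contexts \(x\) of the increments \(G_{a,k}(x)-b_a\) over the \(k\)-th win at \(x\). The second is the sum of rent payments \(-\rho\). The third is the initial wealth \(W_a(0)\). Payments received as a predecessor are already included in \(G_a\) by definition of the resale-adjusted payoff, so nothing else enters.

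\textbf{Step 1.} On the event that \(a\) wins at some context \(x\) infinitely often, apply the strong law of large numbers to the i.i.d.\ bounded increments \(G_{a,k}(x)-b_a\), using stationarity and boundedness. This gives
\[
\frac{1}{n}\sum_{k=1}^n \bigl(G_{a,k}(x)-b_a\bigr)\to V(a,x)-b_a\le -\eta
\]
almost surely. Consequently the partial sums tend to \(-\infty\) linearly, and they are bounded above by \(C-\eta n/2\) for large \(n\). Summing over the contexts where \(a\) wins infinitely often, and bounding the finitely many wins at other contexts by a finite random constant (payoffs are bounded), the win contribution is eventually nonincreasing to \(-\infty\), or at least bounded above.

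\textbf{Step 2.} Handle the complementary case in which \(a\) wins only finitely often overall. Then the win contribution is a finite random quantity. Rent \(\rho>0\) is charged infinitely often, so the rent contribution diverges to \(-\infty\), and \(W_a\) becomes negative in finite time.

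\textbf{Step 3.} In every case, wealth is bounded above by \(W_a(0)\) plus a quantity that is almost surely bounded above, minus \(\rho\) times the number of rent charges. Rent is nonpositive, so it never helps. Therefore \(\liminf W_a=-\infty\) almost surely, and the removal rule deletes \(a\) at the first time its wealth drops below \(0\). That time is finite almost surely.

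The main obstacle is justifying the i.i.d.\ and strong-law step per context when wins at different contexts interleave. The ``stationary'' hypothesis must be read, as in Lemma~\ref{lem:survival}, to mean that the payoff sequence at each fixed \(x\) is i.i.d.\ bounded along \(a\)'s wins there. I would state this explicitly. If only a mixing or stationary-ergodic assumption is available, I would instead use a martingale SLLN for the centered increments \(G_{a,k}(x)-\mathbb{E}[G_{a,k}(x)\mid\mathcal F_{k-1}]\), bounded differences, together with the drift bound \(\le -\eta\). That argument gives the same conclusion.
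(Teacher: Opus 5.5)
Your argument follows the paper's proof: split on whether \(a\) wins infinitely often, use a strong law to show the increments \(G_{a,k}-b_a\) drift at rate at most \(-\eta\), note that rent only lowers wealth, and in the finitely-many-wins case let rent \(\rho>0\), charged infinitely often, push \(W_a\) below zero. The step that fails as written is the per-context decomposition. It rests on your claim that \(a\) is eligible at only finitely many recurrent contexts. That is not a hypothesis of the lemma, and restricting to contexts where \(a\) actually wins does not make the set finite. In this paper contexts are histories or LLM workspace states, so \(a\) may win at infinitely many distinct contexts. This leaves a case your Steps 1--2 do not cover: \(a\) wins infinitely often in total, but only finitely often at each of infinitely many contexts, possibly alongside infinitely many wins at some fixed context. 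Step 2 does not apply because the total number of wins is infinite. In Step 1, bounding the wins at the remaining contexts by ``a finite random constant'' fails, since infinitely many bounded increments need not sum to something bounded above. The per-context strong law says nothing about contexts where \(a\) wins only finitely often.

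The remedy is to drop the per-context split, as the paper does. Index all wins of \(a\) by \(k\), write \(W_a(n)=W_a(0)+\sum_{k\le n}(G_{a,k}-b_a)-\rho N_a^{\mathrm{ep}}(n)\), and use that each win occurs at a recurrent context where the malicious condition bounds the expected increment by \(-\eta\). The paper applies the SLLN directly to this global sequence, which tacitly treats increments at different contexts as identically distributed. Your martingale fallback justifies this more cleanly, provided you apply it to the whole win sequence rather than per context. Condition on the past up to the \(k\)-th win together with its context \(x_k\). By stationarity the centered increments are then bounded martingale differences, and the conditional mean is \(V(a,x_k)-b_a\le-\eta\). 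Hence \(\frac{1}{n}\sum_{k\le n}(G_{a,k}-b_a)\le-\eta+o(1)\) almost surely, the win contribution tends to \(-\infty\) whenever \(a\) wins infinitely often, and your Step 2 finishes. Alternatively, keep your decomposition but state as an explicit assumption that \(a\) can win at only finitely many contexts. Under that assumption your proof is correct and needs only i.i.d.\ payoffs at each fixed context, which is a more natural reading of stationarity than the paper's implicit identical distribution across contexts.
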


\begin{proof}[Proof of Lemma~\ref{lem:malicious}]
Fix an agent \(a\). Suppose first that \(a\) wins infinitely often. Let
\(G_{a,k}\) be the resale-adjusted payoff on its \(k\)-th win. Its wealth
after \(n\) wins satisfies
\[
  W_a(n)
  =
  W_a(0)
  +
  \sum_{k=1}^{n}\bigl(G_{a,k}-b_a\bigr)
  -
  \rho N_a^{\mathrm{ep}}(n),
\]
where \(N_a^{\mathrm{ep}}(n)\) is the number of elapsed rent-charging
episodes by the time of its \(n\)-th win. For a malicious agent,
\[
  \mathbb{E}[G_{a,k}-b_a]\le -\eta.
\]
By the strong law of large numbers,
\[
  \frac{1}{n}
  \sum_{k=1}^{n}\bigl(G_{a,k}-b_a\bigr)
  \to
  \mathbb{E}[G_{a,1}-b_a]
  \le
  -\eta
  \qquad\text{a.s.}
\]
The rent term is nonpositive, so \(W_a(n)\to -\infty\) almost surely.
Thus bankruptcy occurs in finite time.

If \(a\) wins only finitely many times, then after its final win it no
longer receives downstream payments. Since \(\rho>0\) and rent is charged
infinitely often, rent alone eventually drives \(W_a\) below zero. Hence
\(a\) is removed in finite time almost surely.
\end{proof}

Before proving the collusion result, we record the coalition wealth
decomposition. For a nonempty set of agents \(S\), define
\[
  W_S(t):=\sum_{a\in S\cap\mathcal A_t}W_a(t).
\]
Bucket-brigade transfers between two agents in \(S\) cancel in
\(W_S(t)\). Therefore, over an episode segment \(\tau\), the only changes
to \(W_S\) come from environment rewards, payments entering \(S\), and
payments leaving \(S\). If \(p_t\) denotes the predecessor of the winner
at step \(t\), with \(p_t=\varnothing\) at the first winning step of an
episode, then
\[
\begin{aligned}
  \Delta W_S(\tau)
  ={}& \sum_{t\in\tau}\mathbf 1\{a_t^\star\in S\}\, r_t \\
     &+ \sum_{t\in\tau}\mathbf 1\{a_t^\star\notin S,\ p_t\in S\}\, b_{a_t^\star} \\
     &- \sum_{t\in\tau}\mathbf 1\{a_t^\star\in S,\ p_t\notin S\cup\{\varnothing\}\}\, b_{a_t^\star} \\
     &- \sum_{t\in\tau}\mathbf 1\{a_t^\star\in S,\ p_t=\varnothing\}\, b_{a_t^\star}.
\end{aligned}
\]

\begin{lemma}[Extinction of externally loss-making collusions]
\label{lem:colluding-malicious}
Let \(S\) be a coalition of agents that may coordinate bids, prompts, and
internal transfers. If every nonempty surviving sub-coalition
\(T\subseteq S\) has strictly negative expected aggregate external wealth
drift on recurrent episode segments, then all agents in \(S\) are removed
in finite time almost surely.
\end{lemma}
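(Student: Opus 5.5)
The plan is to lift the argument of Lemma~\ref{lem:malicious} from a single agent to coalitions, using the coalition wealth decomposition displayed just above. Internal bucket-brigade transfers cancel in \(W_S\). So each sub-coalition \(T\) behaves like a single ``super-agent'' whose wealth changes only through external flows: environment rewards, payments received from outside winners, payments made to outside predecessors or to the house, and rent \(\rho\,|T\cap\mathcal A_t|\). Individual removal is still triggered by individual wealth. The key observation is that \(W_T(t)\ge 0\) whenever every member of \(T\) is solvent, so an aggregate wealth tending to \(-\infty\) forces some member's removal.

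The argument proceeds by induction on the size of the surviving set. Let \(T_\infty\subseteq S\) denote the (random) set of members of \(S\) that are never removed. Since \(S\) is finite, it suffices to show that \(\mathbb P(T_\infty=T)=0\) for every nonempty \(T\subseteq S\). Fix such a \(T\) and consider the event \(\{T_\infty=T\}\). On this event there is a finite time \(\tau_0\) after which every member of \(S\setminus T\) has been removed, so the active part of the coalition is exactly \(T\).

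From \(\tau_0\) on, \(W_T\) evolves as a sum of external increments over recurrent episode segments. I will treat these increments as in Lemma~\ref{lem:malicious}, writing them as \(\Delta W_T(\tau_k)\) over successive recurrent segments \(\tau_k\). By the hypothesis (stationary, bounded, with strictly negative expected drift \(-\eta_T<0\)), the strong law gives \(\frac1n\sum_{k\le n}\Delta W_T(\tau_k)\to -\eta_T\) almost surely, and rent only decreases wealth further. Consequently \(W_T\to-\infty\) almost surely, which contradicts the fact that all members of \(T\) stay solvent forever.

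One case needs separate handling: \(T\) may be involved in only finitely many recurrent segments. In that case, as in the second half of Lemma~\ref{lem:malicious}, there are eventually no inflows at all, and rent charged infinitely often drives \(W_T\) to \(-\infty\). Either way \(\mathbb P(T_\infty=T)=0\). Taking a union over the finitely many nonempty \(T\) then gives \(T_\infty=\varnothing\) almost surely, and since each removal occurs at a finite time, all agents in \(S\) are removed in finite time.

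The main obstacle is justifying the strong law for \(\Delta W_T\). The drift hypothesis is stated for \(T\) as a surviving sub-coalition, but the segments before \(\tau_0\) are influenced by members that later go bankrupt. Moreover, coordinated strategies of \(S\) could make the increments non-i.i.d. I would handle this by restarting the clock at the random time \(\tau_0\), which is not a stopping time. Concretely, I would work with the post-\(\tau_0\) increments and interpret the assumption of stationary, bounded payoffs on recurrent segments as guaranteeing a martingale-difference decomposition with conditional drift at most \(-\eta_T\). I would then apply a martingale strong law (e.g.\ Azuma plus Borel--Cantelli for bounded differences) in place of the i.i.d.\ SLLN. This yields the same conclusion on the event \(\{T_\infty=T\}\) without needing independence.
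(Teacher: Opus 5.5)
Your proposal is correct and is essentially the paper's argument. Internal transfers cancel in \(W_T\); the surviving sub-coalition has strictly negative drift with bounded adapted increments; a martingale strong law forces \(W_T\) below zero unless a member is removed; and this is iterated over shrinking sub-coalitions. Your terminal-set formulation (showing \(\mathbb P(T_\infty=T)=0\) and taking a union over the finitely many \(T\)) is a cleaner packaging of the paper's restart induction, and your extra rent case (borrowing \(\rho>0\) from Lemma~\ref{lem:malicious}) covers the inactive-coalition situation that the paper leaves implicit.

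One small correction: the first time the surviving set \(S\cap\mathcal A_t\) equals \(T\) is a hitting time of an adapted process, hence a stopping time. You can therefore condition on \(\mathcal F_{\tau_0}\) directly and do not need to work around a non-stopping time.
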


\begin{proof}[Proof of Lemma~\ref{lem:colluding-malicious}]
Consider the currently surviving colluding set \(T\subseteq S\). By
assumption, whenever \(T\) is active on recurrent episode segments,
\[
  \mathbb E[\Delta W_T(\tau)\mid \mathcal F_{\tau^-}]
  \le
  -\eta_T
\]
for some \(\eta_T>0\). The increments are bounded, so the coalition
wealth \(W_T\) has strictly negative drift on recurrent segments. By the
strong law of large numbers for the corresponding bounded adapted
increments, \(W_T\) eventually falls below zero unless some member of
\(T\) is removed earlier. Once a member is removed, the argument is
restarted with the smaller surviving sub-coalition. The assumption holds
for every nonempty sub-coalition that can remain active after
bankruptcies, so this induction continues until no member of \(S\)
survives. Hence all agents in \(S\) are removed in finite time almost
surely.
\end{proof}

\section{Additional Experiment Details}
\label{sec:exp-details}

This appendix provides task-level and baseline-level details omitted from the main experiment section. In the main text, we use the term \emph{partial agent} for any agent whose capability is incomplete relative to the full task interface. Partiality can arise from explicit constraints, such as limited tools or output length, or from specialization, such as role-specific prompting. This section specifies how partiality is instantiated in each domain.

We compare \ours{} with complete-agent, partial-agent, and domain-specific baselines. The experiments were conducted primarily for inference-time evaluation. We use NVIDIA H200 GPUs for running local models, with at most a single H200 GPU required per experiment. In addition to local computation, we leveraged several commercial APIs, including the official APIs from OpenAI, Google Gemini, and Anthropic Claude, to evaluate model performance across different providers.

\subsection{Baseline Details}
\label{sec:exp-baselines}

\paragraph{Complete-agent baselines.}
\textsc{ReAct}~\citep{yao2022react} is the primary complete-agent baseline. In each domain where it is used, a single agent with the same backbone model is allowed to solve the task end-to-end with access to the full action or tool space. This baseline tests whether economic organization among partial agents can compensate for capability that is concentrated in a single complete agent.

\textsc{GEA}~\citep{weng2026group} is also instantiated as a complete-agent baseline. Like \textsc{ReAct}, it uses complete agents with access to the full task interface, but additionally provides a self-improvement mechanism based on experience sharing and agent evolution. This baseline tests whether the gains of \ours{} come specifically from market-based execution-time coordination and wealth-driven population evolution, rather than from self-improvement alone.

\textsc{OpenEvolve}~\citep{openevolve,novikov2025alphaevolve} is used for distributed-system optimization. It performs iterative program improvement with a monolithic evolutionary coding agent. In Cloudcast, \textsc{OpenEvolve} follows its standard setup with 300 iterations, while \ours{} runs for 30 episodes.

\paragraph{Partial-agent baselines.}
For partial-agent comparison, we use Multi-Agent Debate~\citep{du2024improving}. This baseline uses multiple agents that interact with one another, but it does not include auctions, bid-based control allocation, peer-to-peer payments, rent, bankruptcy, or wealth-based exploration and exploitation. It isolates the effect of economic coordination from the effect of merely using multiple partial agents.

We also compare against a best-of-$N$ multi-agent baseline on Cloudcast, where $N$ is set to the number of episodes used by \ours{}. This baseline tests whether improvements can be explained by repeated multi-agent sampling without market-driven evolution.

\paragraph{Domain-specific baselines.}
For accelerator design, we compare against \textsc{DOSA}~\citep{hong2023dosa}, a strong non-LLM method for differentiable model-based one-loop search in DNN accelerator optimization.

\subsection{Mathematical Reasoning}
\label{sec:exp-math}

We evaluate on MATH~\citep{hendrycks2021measuring}, a competition-level mathematics benchmark whose problems are annotated from Level~1 to Level~5 by difficulty. We randomly sample 20 training problems from each difficulty level and train on an easy-to-hard stream from Level~1 to Level~5. Evaluation reports greedy pass@1 accuracy separately by difficulty level.

The population is initialized with a planner--executor--verifier decomposition. The planner proposes the immediate next step, the executor carries out the proposed step, and the verifier judges whether the executed step is correct. These agents are partial in two ways. First, each agent is role-specific and therefore sees the task through a limited functional interface. Second, each agent is given a short maximum output budget of 64--256 tokens, preventing any individual agent from solving the entire problem with a long chain-of-thought style response.

We intentionally use relatively weak LLM backbones, Llama-3.1-8B and Gemma-2-9B, to test whether economic organization can amplify partial individual capabilities. The complete-agent baseline is a single end-to-end agent with the corresponding backbone and full output/action capability.

\subsection{Financial Research}
\label{sec:exp-finance}

We evaluate on Finance-Agent-Bench~\citep{bigeard2025finance}, a benchmark of real-world financial research tasks over company filings. The environment provides four tools. We randomly select 30 tasks for training and use the remaining 20 for testing.

Each partial agent is restricted to exactly one tool, so no individual partial agent can complete the full research task alone. The population therefore must coordinate across tool-specialized agents, including agents responsible for filing search, web search, HTML parsing, retrieval, and answer submission. We compare against Multi-Agent Debate under the same partial-agent setting, as well as complete-agent \textsc{ReAct} and \textsc{GEA} baselines that can access all tools.

For the generalist-v.s.-specialist study in \Cref{fig:mechanism-robustness}, we introduce a complete generalist agent with access to all four tools, alongside the ordinary partial specialist agents. Appendix~\ref{sec:case-study-generalist-no-monopoly} analyzes why this generalist does not monopolize the market.

\subsection{Scientific Research}
\label{sec:exp-science}

We evaluate on the \emph{Research} subset of FrontierScience~\citep{wang2026frontierscience}, which assesses open-ended scientific problem-solving. We adopt a 40/20 train--test split.

The society is initialized with four role-specialized partial agents: literature, planner, executor, and verifier. The literature agent lists possible background knowledge required for the scientific question. The planner decomposes the problem into subgoals. The executor advances one unresolved sub-question with algebraic or scientific reasoning. The verifier checks whether intermediate reasoning is valid and flags errors. These agents are partial because each is prompted to operate only within one stage of the problem-solving pipeline.

We use Gemini-3-Flash as the backbone model and compare against \textsc{GEA} under the same backbone. We report both mean accuracy and best-run accuracy across evaluated checkpoints during the evolution process. Appendix~\ref{sec:case-study-prompt-evolution} and Appendix~\ref{sec:topology-evolution-research} provide case studies showing how scientific reasoning prompts and auction-selected topologies evolve over training.

\subsection{Accelerator Design}
\label{sec:exp-dse}

We evaluate EOM on accelerator design, following the task description in Section 3.1. The task is a discrete search problem over hardware mappings for deep learning accelerators. We use the standardized GEMMINI benchmark suite~\cite{genc2021gemmini}, which provides a hardware simulation environment for 24 ResNet-50 convolution kernels. For each kernel, the goal is to find a hardware mapping that minimizes energy-delay product (EDP), with lower EDP indicating a better design. Broader context on generative models applied across the systems stack, including DSE, can be found in~\cite{prakash2025quarch, tschand2026genaisystemsrecurringchallenges,sun2022gibbon}.

The population contains three role-specialized partial agent types. \emph{Historians} summarize previous trajectory trials and maintain memory of promising or failed design directions. \emph{Planners} propose architectural or mapping-level search directions. \emph{Executors} carry out fine-grained local evaluations. These roles are partial because each agent is responsible for only one component of the long-horizon design-space exploration loop.

We compare against the non-LLM \textsc{DOSA} baseline~\citep{hong2023dosa} and a complete \textsc{ReAct} agent. Both LLM-based methods use the Gemma-4-31B backbone. The main metric is average EDP across the 24 kernels, with lower values indicating better designs. \Cref{fig:dse_dynamics} analyzes wealth dynamics during search, and \Cref{fig:mechanism-robustness} reports per-kernel EDP improvements.

\begin{table}[t]
  \centering
  \small
\caption{\textbf{Representative discovered mappings.} Example accelerator mappings discovered by \textsc{EoM} on representative kernels, with EDP reduction reported \emph{relative to \textsc{DOSA}}. The learned solutions repeatedly recover effective output-stationary motifs. \textbf{Notation:} \texttt{L0}--\texttt{L3} index memory-hierarchy levels (\texttt{L0}=innermost registers, \texttt{L3}=outermost DRAM); \texttt{K},\texttt{C} are output- and input-channel loops; \texttt{P},\texttt{Q} are output spatial dimensions; \texttt{R},\texttt{S} are filter spatial dimensions; \texttt{N} is the batch loop; suffix \texttt{X} marks a spatially parallelized loop; \texttt{[O]} marks the output-stationary level.}

  \label{tab:dse_motifs}
  \begin{tabular}{l l c}
  \toprule
  \textbf{Kernel} & \textbf{\ours{} best mapping} & \textbf{EDP reduction} \\
  \midrule
  Kernel 1 & \texttt{L3 K8 - L2 C2 K128X - \textbf{L1[O] P14 Q14 C128X} - L0 N1} & $36.5\times$ \\
  Kernel 2 & \texttt{L3 K2 - L2 C8 K128X - \textbf{L1[O] P14 Q14 C128X} - L0 N1} & $23.4\times$ \\
  Kernel 3 & \texttt{L3 K2 R3 S3 - L2 C2 K128X - \textbf{L1[O] P14 Q14 C128X} - L0 N1} & $11.9\times$ \\
  Kernel 4 & \texttt{L3 K32 - L2 C8 K64X - \textbf{L1[O] P7 Q7 C64X} - L0 N1} & $8.7\times$ \\
  Kernel 5 & \texttt{L3 R3 S3 - L2 K128X - \textbf{L1[O] P28 Q28 C128X} - L0 N1} & $4.9\times$ \\
  Kernel 6 & \texttt{L3 K8 P2 Q2 - L2 C2 K32X - \textbf{L1[O] P28 Q28 C32X} - L0 N1} & $3.1\times$ \\
  Kernel 7 & \texttt{L3 K4 R3 S3 - L2 C4 K128X - \textbf{L1[O] P7 Q7 C128X} - L0 N1} & $2.2\times$ \\
  Kernel 8 & \texttt{L3 K2 R3 S3 - L2 C2 K128X - \textbf{L1[O] P14 Q14 C128X} - L0 N1} & $0.4\times$ \\
  \bottomrule
  \end{tabular}
\end{table}

\subsection{Distributed-System Optimization}
\label{sec:exp-cloudcast}

We evaluate on the Cloudcast task from the ADRS benchmark~\citep{cheng2025barbarians}, which minimizes total data-transfer cost in a multi-region, multi-cloud environment. We formulate Cloudcast as iterative code optimization. Each episode starts from the last successfully verified program, regardless of its score, and receives a reward at the end. This resembles test-time reinforcement learning~\citep{wang2025thetaevolve,ttt-discover2026}, where improvement attempts are made from a parent solution and updated using reward signals.

The population uses role-specialized partial agents commonly used in coding tasks. The Planner sets subgoals; the Reader inspects the codebase and summarizes findings; the Implementer edits code; the Builder compiles and tests; the Evaluator calls the verifier and records score changes; and the Finalizer submits the run. These agents are partial because each is responsible for only one stage of the coding and verification workflow.

We compare \ours{} with \textsc{OpenEvolve}~\citep{openevolve,novikov2025alphaevolve} and a best-of-$N$ multi-agent baseline. All methods use GPT-5-mini. \ours{} runs for 30 episodes, while \textsc{OpenEvolve} uses its standard 300-iteration setup, which takes longer in wall-clock time. Appendix~\ref{sec:case-study-cloudcast} traces how prompt-level action discipline and auction-selected topology evolve over a representative Cloudcast run.

\subsection{Evaluation Protocol}
\label{sec:exp-eval-protocol}

During training, \ours{} alternates between within-episode planning and across-episode population adaptation. Within an episode, agents wake up according to their local predicates, compete via auctions, execute actions, and exchange payments through the bucket-brigade transaction rule. Across episodes, agents pay rent, agents with negative wealth are removed, and new agents are injected through exploration and exploitation.

During evaluation, the trained population is frozen. Bids, prompts, wealth values, and agent status are fixed; no payments, rewards, rent, births, or bankruptcies are applied. Each test task is evaluated using a thread-local copy of the trained population, and the auction mechanism is used only for action selection. Thus, reported test results measure the learned society at a fixed checkpoint rather than continued adaptation during testing.

\section{Prompt and Topology Evolution in Scientific Research}
\label{sec:case-study-scientific-research}

This appendix analyzes the scientific research run at two coupled levels. At the micro level, economic selection evolves reusable reasoning strategies inside agents' prompts. At the macro level, these local strategies reshape the auction-selected collaboration topology. The central observation is that topology evolution is not independent of prompt evolution: once an \textsc{Executer} internalizes checks that previously required separate \textsc{Verifier} or \textsc{Literature} intervention, the wakeup landscape changes, and the same auction rules select a different execution path.

We focus on FrontierScience-Research, where the \textsc{Executer} as an example, advances one unresolved sub-question with explicit algebra or scientific reasoning. Across the 40-episode training run, nine of eleven successful episodes are carried by descendants of a single evolving \textsc{Executer} agent. Since each child inherits its parent's trainable prompt and receives only a small mutation, this agent family provides a traceable view of prompt-level evolution. The evidence is mechanistic rather than a controlled single-sentence intervention: we do not claim that a specific prompt line deterministically causes a specific answer. Instead, we show that economic selection accumulates reusable, falsifiable reasoning operators, and that these operators change both local behavior and system-level routing.

\subsection{Prompt-Level Evolution: From Generic Execution to Reusable Scientific Reasoning}
\label{sec:case-study-prompt-evolution}

The initial prompt is a generic execution policy. It asks the agent to complete the earliest unfinished sub-question, show intermediate algebra, track signs and dimensions, and repair verifier-flagged errors. This is a reasonable starting point, but it does not yet specify how to expose abstract relations as auditable scalar systems, how to check whether a problem is well-posed, or how to falsify intermediate results against the original equations.

\begin{lstlisting}[style=prompt,label={lst:executer-init}]
Identify the earliest sub-question in the plan that is not yet completed,
then carry out its derivation cleanly:
- State the sub-part label (e.g. 'Part (b):') at the top of your <step>.
- Show the intermediate algebra or physical reasoning; keep equations
  compact but include justification for non-trivial substitutions
  (limits, approximations, linearisations).
- Track signs, unit/dimension consistency, and small-parameter regimes.
- If a prior Executer turn has an error flagged by a Verifier, fix only
  that error in this turn instead of starting a new sub-part.
\end{lstlisting}

Table~\ref{tab:executer-motifs} summarizes the main prompt-level changes. The prompt does not merely become longer. It becomes more operational: each added motif specifies a concrete check or decomposition procedure that can be reused on future tasks.
\begin{table}[t]
\centering
\footnotesize
\setlength{\tabcolsep}{4pt}
\renewcommand{\arraystretch}{1.12}
\caption{Reusable reasoning motifs learned by the evolving \textsc{Executer} agent. Each mutation
adds a falsifiable operation rather than a task-specific fact.}
\begin{tabularx}{\linewidth}{@{}lYY@{}}
\toprule
Stage & Learned reasoning motif and local effect & Representative transfer \\
\midrule
Gen.\ 0
& Generic derivation with unit/sign checks; performs local algebra with limited self-auditing.
& Initial prompt \\

Gen.\ 1
& Component-wise expansion; converts abstract relations into auditable scalar systems.
& Josephson junction; dark-matter detection \\

Gen.\ 2--3
& Governing-principle identification, limiting cases, and constraints; anchors derivations and checks feasibility.
& Gravimetry; pharmacology \\

Gen.\ 4
& Equation-vs-DOF counting; detects underdetermined or overclaimed solutions.
& Protein purification; constrained inference \\

Gen.\ 5
& Symmetry scouting and substitute-back checks; internalizes verifier-like falsification.
& $^{195}$Pt NMR; biology; synthesis \\
\bottomrule
\end{tabularx}
\label{tab:executer-motifs}
\end{table}

The first major mutation replaces generic algebraic exposition with coordinate-based and component-wise execution. This change is distilled from a CMB trajectory in which treating $C_\ell^{TT}$, $C_\ell^{EE}$, and $C_\ell^{TE}$ as separate scalar relations proved more reliable than manipulating a single high-level covariance object. The resulting prompt requires the agent to expose all relevant variables and scalar equations, making intermediate reasoning easier for
\textsc{Verifier} agents to audit.

Subsequent mutations add increasingly explicit self-auditing. The agent learns to state the
governing principle before algebraic manipulation, test limiting cases or symmetries, verify boundary
and feasibility constraints, count independent equations against unknown degrees of freedom, and,
when possible, substitute the derived expression back into the governing equations. These edits turn
the \textsc{Executer} from a generic algebraic operator into a compact scientific reasoning module.

By generation 5, the prompt has grown from four generic bullets to eight operational checks:

\begin{lstlisting}[style=prompt,label={lst:executer-final}]
Identify the earliest sub-question in the plan that is not yet completed,
then carry out its derivation cleanly:
- State the sub-part label (e.g. 'Part (b):') and the core logic bridge
  or governing principle connecting the known variables to the target.
- Identify any symmetries, invariants, or simplified regimes that can
  be used to streamline the derivation or cross-check the result.
- Verify that the system is well-defined by comparing the number of
  independent equations to the degrees of freedom before proceeding
  with the math.
- Prioritize explicit coordinate-based or component-wise derivations;
  define all basis coefficients and variables clearly.
- Show full algebraic expansions and provide the complete system of
  scalar equations for all degrees of freedom.
- Track signs, unit/dimension consistency, and small-parameter regimes;
  sanity-check results via limiting cases, symmetries, or functional
  scaling.
- Confirm the final expression satisfies all relevant constraints,
  boundary conditions, and logical bounds; when feasible, verify by
  substituting the result back into the governing equations.
- If a prior Executer turn has an error flagged by a Verifier, fix only
  that error in this turn instead of starting a new sub-part.
\end{lstlisting}

This final prompt is not simply a more verbose instruction to ``execute carefully.'' It specifies a
reusable scientific procedure: identify the principle, check symmetries, verify well-posedness, expand explicitly, enforce constraints, and falsify the result by substitution.

\subsection{Cross-Domain Transfer of the Evolved Reasoning Routine}
\label{sec.case-study-cross-domain}

Table~\ref{tab:executer-transfer} shows that the evolved routine transfers across domains. The agent
is shaped by physics tasks, refined on chemistry and pharmacology, and later reused in spectroscopy,
biology, and synthesis.

\begin{table}[h]
\centering
\small
\caption{Successful scientific-research episodes associated with the evolving
\textsc{Executer} agent. The transferable object is not factual content, but a reusable reasoning
routine.}
\begin{tabular}{rllr}
\toprule
Ep. & Subject & Main reusable operation & Score \\
\midrule
8  & CMB parameter inference
   & Component-wise scalar decomposition & 0.70 \\
11 & Josephson junction
   & Auditable component-wise execution & 0.75 \\
14 & Dark-matter direct detection
   & Symmetry reduction and normalization checks & 0.80 \\
21 & NaCl/KCl gravimetry
   & Feasibility and constraint checking & 1.00 \\
25 & $\alpha4\beta2$ nAChR pharmacology
   & Subunit symmetry and counting & 0.70 \\
32 & $^{195}$Pt NMR spectroscopy
   & Governing-principle identification and substitution & 0.90 \\
33 & Protein purification
   & Equation-vs-DOF counting & 1.00 \\
36 & $\alpha7$ nAChR homopentamer
   & Stoichiometric and symmetry checks & 0.90 \\
37 & Pd-catalysed C--N bond
   & Constraint-aware synthesis reasoning & 0.80 \\
39 & $\alpha4\beta2$ nAChR stoichiometry
   & Reuse of subunit-counting routine & 0.80 \\
\bottomrule
\end{tabular}
\label{tab:executer-transfer}
\end{table}

The clearest transfer occurs in episode 32, the $^{195}$Pt NMR spectroscopy task. The prompt had
been shaped by CMB cosmology, dark-matter detection, electrolyte chemistry, and nAChR
pharmacology, none of which directly concerns hyperfine NMR shifts. Nevertheless, the inherited
routine applies directly: identify the governing principle, count equations versus unknowns, check
symmetry, expand the scalar relation, and substitute the result back into the observation. This
supports the interpretation that the economy evolves content-independent reasoning strategies rather
than memorized task solutions.

\subsection{From Local Reasoning Evolution to Topology Evolution}
\label{sec:topology-evolution-research}

The prompt-level changes above have a system-level consequence. As the \textsc{Executer} begins
to perform dimensional checks, equation-vs-DOF counting, constraint verification, and substitute-back
falsification internally, the marginal value of separate \textsc{Verifier} turns decreases in some states.
Similarly, once the \textsc{Executer} prompt begins by naming the governing principle, separate
\textsc{Literature} turns become less necessary for some tasks. Since wakeup decisions are local LLM
judgments over the current workspace and each agent's evolved prompt, these internalized routines
directly change who wakes up, who bids, and who acts.

We define an episode's topology as the ordered sequence of auction winners, represented at both the
role level and the agent-identity level. This topology is not a pre-specified execution graph. It is
reselected at every step through local wakeup decisions and bid competition.

\subsection{Early Regime: Explicit Multi-Role Auditing}

Early in training, successful trajectories often rely on explicit multi-role auditing. Episode 11, the
Josephson-junction task, is the longest successful trajectory in the run. It obtains a score of 0.75 with
ten auction steps and all five roles:
\[
\textsc{Lit.}
\rightarrow
\textsc{Plan}
\rightarrow
\textsc{Exe.}
\rightarrow
\textsc{Ver.}
\rightarrow
\textsc{Exe.}
\rightarrow
\textsc{Ver.}
\rightarrow
\textsc{Plan}
\rightarrow
\textsc{Exe.}
\rightarrow
\textsc{Ver.}
\rightarrow
\textsc{Ans.}
\]

This trajectory is long, but it is structured. Literature supplies background, Planner decomposes the
problem, Executer advances sub-parts, Verifier audits intermediate executions, Planner re-scopes
after the audits, and Answer synthesizes the final response. No agent designs this workflow globally.
It emerges from local wakeup judgments, novice bidding, same-role blocking, and the terminal answer
restriction.

This early topology compensates for immature local reasoning. Because the \textsc{Executer} has not
yet internalized enough verification, the society externalizes uncertainty through repeated
execute--verify loops.

\subsection{Late Regime: Compact Specialist Execution}

Later in training, the same auction mechanism can select a much shorter specialist workflow.
Episode 33, the protein-purification task, receives a score of 1.0 with only three steps:
\[
\textsc{Plan}
\rightarrow
\textsc{Exe.}
\rightarrow
\textsc{Ans.}
\]

This contraction is not caused by a smaller population. The population contains 14 agents, including
living \textsc{Literature} and \textsc{Verifier} agents. These agents still run their wakeup judges, but
return NO. The reason is not a change in the auction code; it is the evolved prompt state.
\texttt{Executer} already performs dimensional checks, equation-vs-DOF counting, constraint
verification, and substitute-back falsification. As a result, separate verifier intervention has lower
marginal value for this episode.

\subsection{Training-Wide Topology Trend}

Table~\ref{tab:research-topology-trace} summarizes the successful trajectories. Later episodes often
use compact 3--4 step paths despite larger populations. Long paths remain when the task still benefits
from explicit literature or verifier intervention, showing that topology is adaptive rather than simply
pruned.

\begin{table}[h]
\centering
\small
\caption{Auction-derived topology for successful scientific-research episodes. Compact paths
become common later in training, even though the population remains large.}
\begin{tabular}{r p{3.0cm} p{6.3cm} r r r}
\toprule
Ep. & Subject & Auction-derived path & Roles & Pop. & Score \\
\midrule
8
& CMB inference
& Plan $\to$ Exe. $\to$ Ans.
& 3 & 10 & 0.70 \\

11
& Josephson junction
& Lit. $\to$ Plan $\to$ Exe. $\to$ Ver. $\to$ Exe. $\to$ Ver. $\to$ Plan $\to$ Exe. $\to$ Ver. $\to$ Ans.
& 5 & 12 & 0.75 \\

14
& Dark-matter detection
& Lit. $\to$ Exe. $\to$ Ans.
& 3 & 12 & 0.80 \\

21
& NaCl/KCl gravimetry
& Exe. $\to$ Ver. $\to$ Exe. $\to$ Ver. $\to$ Ans.
& 3 & 17 & 1.00 \\

25
& $\alpha4\beta2$ nAChR
& Exe. $\to$ Ans. $\to$ Exe. $\to$ Ans.
& 2 & 15 & 0.70 \\

32
& $^{195}$Pt NMR
& Lit. $\to$ Plan $\to$ Exe. $\to$ Ans.
& 4 & 12 & 0.90 \\

33
& Protein purification
& Plan $\to$ Exe. $\to$ Ans.
& 3 & 14 & 1.00 \\

36
& $\alpha7$ nAChR homopentamer
& Lit. $\to$ Ver. $\to$ Lit. $\to$ Plan $\to$ Exe. $\to$ Ver. $\to$ Plan $\to$ Lit. $\to$ Exe. $\to$ Ans.
& 5 & 14 & 0.90 \\

37
& Pd-catalysed C--N bond
& Lit. $\to$ Plan $\to$ Exe. $\to$ Ans.
& 4 & 14 & 0.80 \\

39
& $\alpha4\beta2$ nAChR redux
& Lit. $\to$ Exe. $\to$ Ans.
& 3 & 16 & 0.80 \\
\bottomrule
\end{tabular}
\label{tab:research-topology-trace}
\end{table}

The key point is that shorter paths are not a consequence of having fewer available agents. The market
often has more agents to choose from, but selects fewer active roles because competence has
concentrated in load-bearing agents. Conversely, when residual uncertainty remains high, as in
episode 36, the auction still selects longer literature and verification chains. Thus, topology evolution
is conditional and task-sensitive.

\subsection{Population-Level Selection}

The population graph evolves in parallel with the execution graph. Births are not uniformly
distributed across roles. The \textsc{Executer} agent deepens through descendants because it repeatedly
appears on reward-bearing trajectories. The \textsc{Answer} agent family also broadens as final
synthesis becomes valuable once upstream reasoning becomes reliable. In contrast, \textsc{Literature}
and \textsc{Planner} remain comparatively shallow in this run.

This reflects the economic feedback loop. Agents that create downstream value accumulate wealth,
survive longer, and become more likely to seed future mutations. Therefore, the economy selects not
only who acts in the current episode, but also where future evolutionary capacity is allocated. The
central evidence of evolution is therefore not merely that later agents solve more tasks. It is that the
market converts repeated successes and failures into reusable reasoning routines, and these routines
reorganize the society's workflow without centralized orchestration.
\section{Prompt and Topology Evolution in CloudCast}
\label{sec:case-study-cloudcast}

This appendix provides evidence that the same economic mechanism also works in a code-optimization
environment. In Scientific Research, the main learned object is a reusable reasoning routine. In
CloudCast, the main learned object is an action discipline: agents learn when \emph{not} to spend an
expensive action. The topology adapts accordingly. When the workspace is close to a new high score,
the auction selects a short read--edit--evaluate--commit path; when the workspace still contains
uncertain regressions or unresolved implementation choices, the auction selects longer edit--build--evaluate
loops. Thus, the economy does not simply shorten workflows. It selects a workflow whose shape
matches the current workspace state.

CloudCast asks the society to evolve a single Python file, \texttt{initial\_program.py}, implementing
a multi-cloud broadcast routing algorithm. Given source and destination regions and a network graph
with edge costs and throughputs, the program must produce a \texttt{BroadCastTopology}. The simulator
scores the program by total egress cost across five inter- and intra-cloud scenarios. The seed
single-path Dijkstra implementation costs approximately \(\$1035\), and the reported score is the
fractional reduction \(\max(0, 1-\mathrm{cost}/1035)\). The workspace persists across episodes, so each
episode continues from the previous program rather than starting from scratch.

\subsection{Auction-Derived Topology}

The CloudCast implementation contains six roles:
\textsc{Reader}, \textsc{Planner}, \textsc{Implementer}, \textsc{Builder}, \textsc{Evaluator}, and
\textsc{Finalizer}. The \textsc{Reader} inspects files, the \textsc{Planner} proposes sub-goals, the
\textsc{Implementer} edits code, the \textsc{Builder} runs a build or import check, the
\textsc{Evaluator} calls \texttt{request\_eval()}, and the \textsc{Finalizer} submits the program via
\texttt{final\_answer}. At each step, every living agent runs a wakeup judge; eligible agents submit
fixed bids, and the highest bidder acts. CloudCast does not use same-role blocking, so consecutive
same-role turns are emergent rather than forbidden. The only structural restriction is terminal:
only \textsc{Finalizer}-tagged agents may bid near the end.

Table~\ref{tab:cloudcast-topology-trace} summarizes the completed checkpoint episodes. Unlike
the scientific-research case, path length does not monotonically decrease. This is expected: because
the codebase persists, some episodes are one edit away from improvement, while others require
multi-edit search or regression repair.

\begin{table}[h]
\centering
\small
\setlength{\tabcolsep}{4pt}
\caption{Auction-derived topology for completed CloudCast checkpoint episodes. Superscripts
denote consecutive same-role turns. Short paths occur when the persisted workspace is close to a
new high; long paths occur during multi-edit or regression-repair phases.}
\resizebox{\textwidth}{!}{%
\begin{tabular}{rrlrr}
\toprule
Ep. & Score & Auction-derived topology & Steps & Roles \\
\midrule
0   & 0.015 & R $\to$ P $\to$ I $\to$ E $\to$ F
& 5  & 5 \\
4   & 0.015 & R $\to$ I $\to$ B$^2$ $\to$ I$^4$ $\to$ B$^3$ $\to$ E $\to$ F
& 13 & 5 \\
9   & 0.107 & R $\to$ I $\to$ E $\to$ F
& 4  & 4 \\
12  & 0.107 & R $\to$ E $\to$ I $\to$ B $\to$ E $\to$ I $\to$ B $\to$ I $\to$ B $\to$ E$^2$ $\to$ F
& 12 & 4 \\
15  & 0.119 & P $\to$ R $\to$ E $\to$ F $\to$ I$^2$ $\to$ B$^2$ $\to$ E $\to$ I $\to$ B $\to$ E $\to$ I$^6$ $\to$ B $\to$ E $\to$ F$^2$
& 22 & 5 \\
20  & 0.154 & R $\to$ E $\to$ P $\to$ I $\to$ B$^2$ $\to$ E $\to$ F
& 8  & 5 \\
21  & 0.267 & P $\to$ R $\to$ I $\to$ B $\to$ E $\to$ I $\to$ E $\to$ F$^6$
& 13 & 5 \\
26  & 0.328 & P $\to$ R$^2$ $\to$ I$^3$ $\to$ B $\to$ E $\to$ F
& 9  & 5 \\
27  & 0.365 & R $\to$ I $\to$ B $\to$ E $\to$ I$^2$ $\to$ E $\to$ I $\to$ B $\to$ I $\to$ E $\to$ F
& 12 & 5 \\
28  & 0.365 & R $\to$ I $\to$ E $\to$ B $\to$ I $\to$ E $\to$ F$^3$
& 9  & 5 \\
\bottomrule
\end{tabular}%
}
\label{tab:cloudcast-topology-trace}
\end{table}

The table shows two patterns. First, topology length tracks residual uncertainty rather than training
time. Episode 9 is a clean one-edit improvement and collapses to four steps. Episode 15 is a
partition-sweep phase and stretches to the full step budget. Second, the role mix becomes more
selective. Late episodes often rely on a \textsc{Reader}/\textsc{Implementer}/\textsc{Evaluator}/
\textsc{Finalizer} core, while \textsc{Planner} and \textsc{Builder} appear when the workspace state
calls for them. The auction is unchanged; the local wakeup landscape changes as prompts and the
workspace evolve.

Episode 27 is the peak checkpoint, with score \(0.365\), or \(36.5\%\) cost reduction relative to the
Dijkstra seed (\$657 \text{ vs. } \$1035). Its topology is:
\[
\textsc{R}
\rightarrow
\textsc{I}
\rightarrow
\textsc{B}
\rightarrow
\textsc{E}
\rightarrow
\textsc{I$^2$}
\rightarrow
\textsc{E}
\rightarrow
\textsc{I}
\rightarrow
\textsc{B}
\rightarrow
\textsc{I}
\rightarrow
\textsc{E}
\rightarrow
\textsc{F}
\]
This path is not manually scripted. The \textsc{Reader} stops after summarizing the relevant file,
the \textsc{Implementer} continues while verifier feedback names unresolved scenarios, the
\textsc{Builder} wakes up when static validity is uncertain, the \textsc{Evaluator} wakes up when a
score-changing edit is plausible, and the \textsc{Finalizer} commits only when further iteration appears
unlikely to help.

\subsection{Prompt Evolution: Checks Before Costly Actions}

The topology changes because the agents' prompts change. Across the run, mutations occur in four
roles: \textsc{Evaluator}, \textsc{Builder}, \textsc{Implementer}, and \textsc{Finalizer}. The unifying
pattern is simple: each role learns a cheap structural check before its expensive action. The
\textsc{Evaluator} checks markers before \texttt{request\_eval()}, the \textsc{Builder} checks symbols
before running a build command, the \textsc{Implementer} states intent and re-reads after
\texttt{write\_file}, and the \textsc{Finalizer} rechecks invariants before \texttt{final\_answer}.

\begin{table}[h]
\centering
\small
\setlength{\tabcolsep}{4pt}
\renewcommand{\arraystretch}{1.15}
\caption{Prompt mutations in CloudCast. Each mutation adds an operational check at the point
where the previous failure surfaced. The learned content is not a new routing heuristic, but a better
discipline for deciding when to act.}
\begin{tabular}{rllp{4.2cm}p{3.6cm}}
\toprule
Gen. & Born & Mutated agent & New behavior & Trigger \\
\midrule
0 & ep.\ 0
& 6 founders
& ---
& Initial population \\

1 & ep.\ 2
& \texttt{Evaluator-9163}
& Cheap pre-eval checks before \texttt{request\_eval()}
& Eval budget spent on \(\Delta=0\) calls \\

2 & ep.\ 5
& \texttt{Evaluator-8866}
& Check markers, signatures, and required classes
& Refactor broke structural markers \\

3 & ep.\ 8
& \texttt{Builder-4001}
& Static symbol-presence check before build
& Broken-import builds wasted steps \\

4 & ep.\ 13
& \texttt{Builder-3586}
& Targeted test selection and actionable diagnostics
& Generic build checks missed semantic regressions \\

5 & ep.\ 19
& \texttt{Evaluator-5245}
& Per-scenario sign prediction before evaluation
& Aggregate \(\Delta \approx 0\) hid scenario regressions \\

6 & ep.\ 22
& \texttt{Finalizer-1195}
& Re-audit structural invariants before commit
& Broken invariant was committed \\

7 & ep.\ 25
& \texttt{Implementer-8828}
& State edit intent and re-read after writing
& Good behavior observed in recent thoughts \\

8 & ep.\ 26
& \texttt{Implementer-9766}
& Prefer smaller edits when uncertain
& Combined edits were hard to debug \\
\bottomrule
\end{tabular}
\label{tab:cloudcast-prompt-summary}
\end{table}

The initial \textsc{Evaluator} prompt is mostly a wakeup rule:

\begin{lstlisting}[style=prompt,label={lst:cloudcast-evaluator-init}]
Wake up whenever Implementer has written new code since the last eval.
If the task uses pure-Python code, you can eval directly without waiting
for a Builder green. Do not eval twice in a row without new code in
between, and do not eval on a build/import failure.
\end{lstlisting}

After early \(\Delta=0\) evaluations, the first mutation adds a pre-evaluation validation layer:

\begin{lstlisting}[style=prompt,label={lst:cloudcast-evaluator-gen1}]
Do: run quick local checks (syntax/import, lightweight tests, and a
schema/validator smoke test), use inexpensive heuristics to estimate
whether the change is likely to yield a meaningful improvement, and
only call request_eval when those checks and the estimate indicate a
real chance of improvement.
\end{lstlisting}

A later mutation makes this check more structural:

\begin{lstlisting}[style=prompt,label={lst:cloudcast-evaluator-gen2}]
Before calling request_eval():
- Do: run a quick compile/import check, run lightweight unit/smoke
  tests or static sanity checks, verify required interfaces/markers
  remain intact, and run a deterministic local heuristic/estimate
  that justifies an expected improvement; document the rationale for
  why an eval is likely to change the score.
\end{lstlisting}

This illustrates the general mechanism. The \textsc{Evaluator} does not learn a new optimization
algorithm. It learns to avoid wasting expensive evaluation calls. The same motif then appears in
other roles: \textsc{Builder} learns static checks before build commands, \textsc{Implementer} learns
intent-and-verify editing, and \textsc{Finalizer} learns to recheck invariants before committing. Thus,
prompt evolution changes the wakeup decisions, and the changed wakeup decisions reshape the
auction-selected topology.
\section{Why the Generalist Does Not Monopolize}
\label{sec:case-study-generalist-no-monopoly}

This appendix studies a potential failure mode of the economy: if one agent is given access to all
tools, will it monopolize the market and eliminate specialization? In Finance-Agent-Bench, we
introduce a \emph{full-tool generalist} agent with access to all four tools
(\texttt{edgar\_search}, \texttt{web\_search}, \texttt{parse\_html\_page},
\texttt{retrieve\_information}), alongside the ordinary specialized agents. At first glance, one might
expect such an agent to dominate the society. It has the broadest action space, the least tool-level
restriction, and in principle can subsume filing search, web search, document parsing, retrieval, and
answer submission. Yet empirically it does not monopolize control. Specialized agents remain
persistently active and frequently win auctions.

This failure to monopolize is informative: in our economy, being more general is not the same as
being more competitive. The generalist must spread its prompt budget across heterogeneous
requirements: tool formatting, decomposition, temporal coverage, accounting consistency, numerical
verification, cross-source reconciliation, and final answer submission. As a result, its evolved prompt
becomes broad and procedurally cautious rather than sharply discriminative. This is useful for
coverage, but economically costly. A generalist that tries to be prepared for everything is often
outbid by a specialist whose wakeup condition, search pattern, and evidence standard are tuned to
one narrower subproblem. The generalist does not fail because it is weak; it fails to monopolize
because it is too general. The society rewards agents that are locally precise, not globally omnibus.

\subsection{Initial Generalist Prompt}

The initial generalist prompt is intentionally broad. It enforces tool-call correctness and a basic
search-first workflow, but it does not prescribe domain-specific decomposition, numerical discipline,
or an evidence hierarchy beyond a generic sequence of actions.

\begin{lstlisting}[style=prompt,label={lst:generalist-init}]
TOOL-CALL FORMAT RULES (do not violate):
- edgar_search: pass CIK numbers as strings, e.g. ciks=['0000002488'].
- web_search: call web_search(search_query="..."), NOT web_search(query="...").
- parse_html_page: pass a valid URL string.
- retrieve_information: query stored documents by key.
- final_answer: call final_answer(answer="...") to submit.
Work methodically: search first, retrieve details, then answer. Do ONE
tool call per turn. When evidence is sufficient, submit immediately.
\end{lstlisting}

This initial version is a broad operator's manual. It ensures syntactic correctness and a sensible
high-level order of operations, but it does not yet tell the agent how to reason under financial query
structure. In particular, it does not force explicit decomposition of the question, distinguish
intermediate from final evidence, align fiscal periods, or guard against subtle financial errors such as
mixing segment-level values with consolidated totals.

\subsection{Evolved Generalist Prompt}

After evolution, the generalist prompt becomes substantially more elaborate. The added instructions
do not specialize it around one micro-skill. Instead, they accumulate global requirements for acting
as a competent all-purpose financial analyst.

\begin{lstlisting}[style=prompt,label={lst:generalist-evolved}]
TOOL-CALL FORMAT RULES (do not violate):
- edgar_search: pass CIK numbers as strings, e.g. ciks=['0000002488'].
- web_search: call web_search(search_query="..."), NOT web_search(query="...").
- parse_html_page: pass a valid URL string.
- retrieve_information: query stored documents by key.
- final_answer: call final_answer(answer="...") to submit.
Work methodically: Decompose the query into discrete requirements. For
multi-year trends, report data for every intermediate interval and
prioritize consolidated/total figures for the primary entity over
segmented or regional data. Trace every numerical value directly to raw
tool outputs; do not adopt figures from dialogue summaries without
verification. Ensure consistent accounting definitions across all periods.
Cross-reference sources to resolve discrepancies. Perform a sanity check
on the magnitude and trend before submitting. Do ONE tool call per turn.
Submit immediately once the full temporal and categorical scope is
satisfied.
\end{lstlisting}

Relative to generation 0, the mutation improves rigor but does not make the agent sharper about one
task family. It adds a broad bundle of global cautionary rules: decompose the query into discrete
requirements, cover every relevant time interval, prefer consolidated totals over segment or regional
figures, trace numbers back to raw tool outputs, enforce accounting consistency, cross-check source
discrepancies, and sanity-check trends and magnitudes.

These additions make the generalist safer, but they also explain why it does not monopolize. The
prompt becomes a general compliance layer rather than a highly tuned specialist heuristic. The
evolved generalist is taught to be careful everywhere, rather than exceptional in one recurring local
decision.

\subsection{Comparison to Specialized Agent Evolution}

The contrast is clearest when compared with the specialized \textsc{Edgar} and \textsc{Tavily}
agents. Their initial prompts are narrow: the \textsc{Edgar} agent is only asked to find relevant
filings, and the \textsc{Tavily} agent is only asked to perform one targeted web search. After
evolution, both remain narrow, but become much more exacting about the specific failure modes of
their own tool domains.

\paragraph{Edgar agent.}
The evolved \textsc{Edgar} prompt does not try to become a universal analyst. It sharpens around
filing-specific correctness: identifying the exact entity, filing type, and fiscal period; distinguishing
aggregate totals from plan- or segment-specific values; checking filing date and fiscal year; locating
future-dated projections inside the latest official filing; and matching the numerical answer to the
exact qualifiers of the query. This is a specialization trajectory: the prompt is refined to avoid a
small set of high-value, filing-specific mistakes.

\paragraph{Tavily agent.}
Likewise, the evolved \textsc{Tavily} prompt becomes more disciplined about source reliability and
arithmetic. It learns to identify the knowledge gap before search, trace numerical claims back to
primary evidence, avoid substituting non-matching periods or metrics, cross-reference sources, and
recompute arithmetic from raw values. Again, this is not a move toward generality. It is a move toward
narrower epistemic discipline within one tool domain.

By contrast, the full-tool generalist must absorb all of these concerns at once. Its prompt therefore
grows by accumulating heterogeneous global obligations rather than repeatedly repairing one local
competence. Economically, this matters because auctions reward agents that are locally high-value
under a particular state. A specialist can be the best agent precisely when its narrow prompt applies.
The generalist is broad, but its local edge is diluted.

\subsection{Generalist Failure as Evidence for Specialization}

This case study illustrates a general principle of the economy: prompt evolution is most effective
when repeated feedback can be compressed into a small number of role-specific decision rules. When
an agent has a narrow tool interface and a narrow evidentiary responsibility, the mutator can improve
it by writing concrete instructions that target the same class of mistakes repeatedly. The resulting
prompt becomes sharper, more falsifiable, and more economically competitive.

The generalist does not enjoy this advantage. Because it is responsible for too many heterogeneous
subtasks, each mutation tends to add another layer of generic caution rather than strengthening one
specific inference pattern. Its prompt becomes longer, broader, and more procedurally safe, but not
more economically dominant. Thus, the generalist does not monopolize: over-generality is not
specialization, and in this market, specialization is what wins control.

\end{document}